\def\figref#1{figure~\ref{#1}}
\def\secref#1{section~\ref{#1}}
\def\eqref#1{equation~\ref{#1}}
\def\algref#1{algorithm~\ref{#1}}
\def\1{\bm{1}}
\def\vp{{\bm{p}}}
\def\vv{{\bm{v}}}
\def\vw{{\bm{w}}}
\def\vx{{\bm{x}}}
\def\vy{{\bm{y}}}
\def\vz{{\bm{z}}}
\def\mA{{\bm{A}}}
\def\mB{{\bm{B}}}
\def\mM{{\bm{M}}}
\def\mV{{\bm{V}}}
\def\mW{{\bm{W}}}
\DeclareMathAlphabet{\mathsfit}{\encodingdefault}{\sfdefault}{m}{sl}
\SetMathAlphabet{\mathsfit}{bold}{\encodingdefault}{\sfdefault}{bx}{n}
\newcommand{\tens}[1]{\bm{\mathsfit{#1}}}
\def\tV{{\tens{V}}}
\def\tW{{\tens{W}}}
\DeclareMathOperator{\sign}{sign}
\newcolumntype{L}{>{\centering\arraybackslash} m{0.04\columnwidth}} 
\newcolumntype{R}{>{\centering\arraybackslash} m{0.48\columnwidth}} 
\newcolumntype{S}{>{\centering\arraybackslash} m{0.32\columnwidth}} 
\newdimen\nodeDist
\newtheorem{assumption}{Assumption}
\newtheorem{property}{Property}
\newcommand{\lemref}[1]{Lemma~\ref{#1}}
\newcommand{\thmref}[1]{Theorem~\ref{#1}}
\newcommand{\crlref}[1]{Corollary~\ref{#1}}
\title[Learning Circuits with Neural Networks]{Learning Boolean Circuits with Neural Networks}
\newcommand{\naturals}{\mathbb{N}}
\newcommand{\reals}{\mathbb{R}}
\newcommand{\inner}[1]{\langle #1 \rangle}
\newcommand{\norm}[1]{\left\| #1 \right\|}
\newcommand{\mean}[2]{\mathbb{E}_{#1} \left[ #2 \right]}
\newcommand{\abs}[1]{\left \lvert #1 \right \rvert}
\newcommand{\integers}{\mathbb{Z}}
\newcommand{\ignore}[1]{}
\newcommand{\D}[1]{\mathcal{D}^{(#1)}}
\newcommand{\Dt}[1]{\Psi(\mathcal{D})}
\newcommand{\prob}[2]{\mathbb{P}_{#1}\left[#2\right]}
\newcommand{\rank}{\text{rank}}
\begin{document}

\maketitle

\begin{abstract}
  While on some natural distributions, neural-networks are trained efficiently using gradient-based algorithms, it is known that
  learning them is computationally hard in the worst-case. To separate
  hard from easy to learn distributions, we observe the property of
  \textit{local correlation}: correlation between local patterns of
  the input and the target label. We focus on learning deep
  neural-networks using a gradient-based algorithm, when the target
  function is a tree-structured Boolean circuit. We show that in this
  case, the existence of \textit{correlation} between the gates of the
  circuit and the target label determines whether the optimization
  succeeds or fails. Using this result, we show that neural-networks
  can learn the $(\log n)$-parity problem for most product
  distributions. These results hint that \textit{local correlation} may
  play an important role in separating easy/hard to learn
  distributions. We also obtain a novel depth separation result, in
  which we show that a shallow network cannot express some
  functions, while there exists an \emph{efficient} gradient-based
  algorithm that can learn the very same functions using a deep
  network. The negative expressivity result for shallow networks is
  obtained by a reduction from results in communication complexity,
  that may be of independent interest. 
\end{abstract}

\section{Introduction and Motivation}
\label{sec:introduction}
It is well known (e.g. \cite{livni2014computational}) that while deep neural-networks
can \textbf{express} any function that can be run efficiently on a
computer, in the general case, \textbf{learning} neural-networks is
computationally hard. Despite this theoretic pessimism, in practice,
deep neural networks are successfully trained on real world
datasets. Bridging this theoretical-practical gap seems to be the holy
grail of theoretical machine learning nowadays. 
Maybe the most natural direction to bridge this gap is to find a
\textbf{property} of data distributions that determines whether
learning them is computationally easy or hard. The goal of this paper is to
propose such a property. 

To motivate this, we first recall the $k$-parity problem: the input is
$n$ bits, there is a subset of $k$ relevant bits (which are unknown to
the learner), and the output should be $1$ if the number of $-1$'s
among the relevant bits is even and $-1$ otherwise. It is well known
(e.g. \cite{shalev2017failures}) that the parity problem can be
expressed by a fully connected two layer network or by a depth $\log(n)$
locally connected~\footnote{i.e. every two adjacent neurons are only
  connected to one neuron in the upper layer.}  network. We observe
the behavior of a one hidden-layer neural network trained on the
$k$-parity problem, in two different instances: first, when the
underlying distribution is the uniform distribution (i.e. the
probability to see every bit is $\frac{1}{2}$); and second, when the
underlying distribution is a slightly biased product distribution (the
probability for every bit to be $1$ is $0.6$). As can be seen 
in \figref{fig:parity}, adding a slight bias to the probability of
each bit dramatically affects the behavior of the network: while on
the uniform distribution the training process completely fails, in the
biased case it converges to a perfect solution.

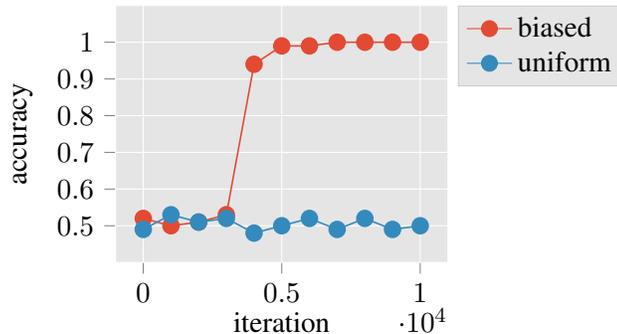
\begin{wrapfigure}{r}{0.6\textwidth}
\begin{center}
\begin{tikzpicture}

\definecolor{color1}{rgb}{0.203921568627451,0.541176470588235,0.741176470588235}
\definecolor{color0}{rgb}{0.886274509803922,0.290196078431373,0.2}
\definecolor{color3}{rgb}{0.984313725490196,0.756862745098039,0.368627450980392}
\definecolor{color2}{rgb}{0.596078431372549,0.556862745098039,0.835294117647059}

\begin{axis}[
axis background/.style={fill=white!89.80392156862746!black},
axis line style={white},
height=5cm,
width=6cm,
legend cell align={left},
legend entries={{biased},{uniform}},
legend pos=outer north east,
legend style={draw=white!80.0!black, fill=white!89.80392156862746!black},
tick align=outside,
tick pos=left,
x grid style={white},
xlabel={iteration},
xmajorgrids,
y grid style={white},
ylabel={accuracy},
ymajorgrids,
ymin=0.4, ymax=1.1,
ytick={0.5,0.6,0.7,0.8,0.9,1.0}
]
\addlegendimage{color0,  mark=*, mark size=3}
\addlegendimage{color1,  mark=*, mark size=3}
\addplot [semithick, color0, mark=*, mark size=3, mark options={solid}]
table [row sep=\\]{%
0		0.52 \\
1000		0.50 \\
2000		0.51 \\
3000		0.53 \\
4000		0.94 \\
5000		0.99 \\
6000		0.99 \\
7000		1.00 \\
8000		1.00 \\
9000		1.00 \\
10000	1.00 \\
};
\addplot [semithick, color1, mark=*, mark size=3, mark options={solid}]
table [row sep=\\]{%
0		0.49 \\
1000		0.53 \\
2000		0.51 \\
3000		0.52 \\
4000		0.48 \\
5000		0.50 \\
6000		0.52 \\
7000		0.49 \\
8000		0.52 \\
9000		0.49 \\
10000	0.50 \\
};
\end{axis}

\end{tikzpicture}
\end{center}
\caption{Trainig depth-two ReLU networks of size 128 with Adam, on both instances of the $k$-Parity problem ($k=5, n=128$). The figure shows the accuracy on a test set.} \label{fig:parity}
\end{wrapfigure}

This simple experiment shows that a small change in the underlying distribution can cause a dramatic change in the trainability of neural-networks. A key property that differentiates the uniform from the biased distribution is the \textit{correlation} between input bits and the target label. While in the uniform distribution, the correlation between each bit and the label  is zero, in the biased case every bit of the $k$ bits in the parity has a non-negligible correlation to the label (we show this formally in \secref{sec:distributions}). So, \textit{local correlations} between bits of the input and the target label seems to be a promising \textbf{property} which separates easy and hard distributions.

In this paper, we analyze the problem of learning tree-structured
Boolean circuits with neural-networks.  The key property that we
assume is having sufficient correlation between every influencing gate in the
circuit and the label.  We show a gradient-based algorithm that can
efficiently learn such circuits for distributions that satisfies the
Local Correlation Assumption (LCA). On the other hand, without
LCA, \textbf{any} gradient-based algorithm fails to learn
these circuits efficiently. We discuss specific
target functions and distributions that satisfy LCA. We show that for most product
distributions, our gradient-based algorithm learns the
$(\log n)$-parity problem (parity on $\log n$ bits of an input with
dimension $n$). We further show that for every circuit with AND/OR/NOT
gates, there exists a generative distribution, such that our algorithm
recovers the Boolean circuit exactly.

While our primary motivation for studying tree-structured Boolean circuits is as an extension to the family of $k$-Parities, we show that this family of functions has interesting properties in itself. Building on known results in communication complexity, we show that there exist tree-structured AND/OR circuits which cannot be expressed by any depth-two neural-network with bounded weights, unless an exponential number of units is used. Along with our positive results on learning tree-structured AND/OR circuits, this gives a family of distributions that: a) cannot be efficiently expressed by a shallow network and b) can be learned efficiently by a gradient-based algorithm using a deep network. To the best of our knowledge, this is the first result showing both depth separation and computationally efficient learnability for some family of distributions. As far as we are aware, all prior results on learning neural-networks with gradient-based algorithms assume a target function that can be expressed by a two-layer neural-network. Therefore, our result is the first theoretical result that motivates learning a deep network over a shallow one.

\section{Related Work}
In recent years, the success of neural-networks has inspired an ongoing theoretical research, trying to explain empirical observations about their behavior. Some theoretical works show failure cases of neural-networks. Other works give different guarantees on various learning algorithms for neural-networks. In this section, we cover the main works that are relevant to our paper.

\textbf{Failures of gradient-based algorithms}. Various works have shown different examples demonstrating failures of gradient-based algorithm. The work of \cite{shamir2018distribution} shows failures of gradient descent, both in learning natural target functions and in learning natural distributions. The work of \cite{shalev2017failures} shows that gradient-descent fails to learn parities and linear-periodic functions under the uniform distribution. In \cite{das2019learnability}, a hardness result for learning random deep networks is shown. Other similar failure cases are also covered in \cite{abbe2018provable, malach2019deeper}. While the details of these works differ, they all share the same key principal - if there is no local correlation, gradient-descent fails. Our work complements these results, showing that in some cases, when there are local correlations to the target, gradient-descent succeeds to learn the target function.

\textbf{Learning neural-networks with gradient-descent}. Recently, a large number of papers have provided positive results on learning neural-networks with gradient-descent. Generally speaking, most of these works show that over-parametrized neural-networks, deep or shallow, achieve performance that is competitive with kernel-SVM. \cite{daniely2017sgd} shows that SGD learns the conjugate kernel associated with the architecture of the network, for a wide enough neural-network. The work of \cite{brutzkus2017sgd} shows that SGD learns a neural-network with good generalization, when the target function is linear. A growing number of works show that for a specific kernel induced by the network activation, called the Neural Tangent Kernel (NTK), gradient-descent learns over-parametrized networks, for target functions with small norm in the reproducing kernel Hilbert space (see the works of \cite{jacot2018neural, xie2016diverse, oymak2018overparameterized, allen2018learning, allen2018convergence, oymak_towards_2019, arora2019fine, du2018gradient, ma2019comparative, lee2019wide}). While these results show that learning neural-networks with gradient-descent is not hopeless, they are in some sense disappointing --- in practice, neural-networks achieve performance that are far better than SVM, a fact that is not explained by these works. A few results do discuss success cases of gradient-descent that go beyond the kernel-based analysis \citep{brutzkus2017globally, brutzkus2019larger, allen2019can, yehudai2019power}. However, these works still focus on very simple cases, such as learning a single neuron, or learning shallow neural-networks in restricted settings. We deal with learning deep networks, going beyond the common reduction to linear classes of functions.

\textbf{Gradient-based optimization algorithms}. Deep networks are
optimized using an access to the objective function (population loss)
through a stochastic gradient oracle. The common algorithms that are
used in practice are variants of end-to-end stochastic gradient
descent, where gradients are calculated by backpropagation. However,
the focus of this paper is not on a specific algorithm but on the
question of what can be done with an access to a stochastic gradient
oracle. In particular, for simplicity of the analysis (as was also
done in \cite{arora2014provable, malach2018provably}), we analyze the
behavior of layerwise optimization --- optimizing one layer at a
time.  Recent works \citep{belilovsky2018greedy,
  belilovsky2019decoupled} have shown that layerwise training achieves
performance that is competitive with the standard end-to-end approach,
scaling up to the ImageNet dataset.

\textbf{Depth Separation}.
In recent years there has been a large number of works showing families of functions or distributions that are realizable by a deep network
of modest width,
but require exponential number of neurons to approximate by shallow networks.
Many works consider various measures of ``complexity'' that 
grow exponentially fast with the depth of the network, but not with
the width \citep{pascanu2013number, montufar2014number,
montufar2017notes, serra2017bounding, poole2016exponential,raghu2016expressive,telgarsky2016benefits}.
Recent works show depth separation results
for more ``natural'' families of functions.
For example, \cite{telgarsky2015representation} shows
a function on the real line that
exhibits depth separation.
The works of \cite{eldan2016power, safran2016depth,daniely2017depth} show
a depth separation argument for other natural functions, like
the indicator function of the unit ball. Other works by \cite{mhaskar2016deep, poggio2017and,delalleau2011shallow,cohen2016expressive} show that compositional
functions are better approximated 
by deep networks. All these results focus on the expressive power of neural-networks, but show no evidence that such functions can be learned using standard gradient-based algorithms. A recent work by \cite{malach2019deeper} shows that for  fractal distributions, gradient-descent fails when the depth separation is too strong (i.e., any shallow network achieves trivial performance on the distribution). Our work is the first positive learning result in this field, showing that for some family of distributions, a gradient-based algorithm succeeds in learning functions that cannot be efficiently expressed by shallow networks.

\textbf{Learning Boolean Circuits}. The problem of learning Boolean circuits has been studied in the classical literature of theoretical machine learning. The work of \cite{kearns1987learnability} gives various positive and negative results on the learnability of Boolean Formulas, including Boolean circuits. The work of \cite{linial1989constant} introduces an algorithm that learns a constant-depth circuit in quasi-polynomial time. Another work by \cite{kalai2018boolean} discusses various properties of learning Boolean formulas and Boolean circuits. Our work differs from the above in various aspects. Our main focus is learning deep neural-networks with gradient descent, where the target function is implemented by a Boolean circuit, and we do not aim to study the learnability of Boolean circuits in general. Furthermore, we consider Boolean circuits where a gate can take any Boolean functions, and not only AND/OR/NOT, as is often considered in the literature of Boolean circuits. On the other hand, we restrict ourselves to the problem of learning circuits with a fixed structure of full binary trees. We are not aware of any work studying a problem similar to ours.

\section{Problem Setting}
We consider the problem of learning binary classification functions over the Boolean cube.
So, let $\mathcal{X} = \{ \pm 1\}^n$ be the instance space and $\mathcal{Y} = \{\pm 1\}$ be the label set.
Throughout the paper, we assume the target function is given by a Boolean circuit. In general, such assumption effectively does not limit the set of target functions, as any computable function can be implemented by a Boolean circuit.
We define a circuit $C$ to be a directed graph with $n$ input nodes
and a single output node, where each inner node has exactly two incoming edges,
and is labeled by some arbitrary Boolean function
$f : \{\pm 1\}^2 \to \{\pm 1\}$, which we call a gate \footnote{Note that in the literature on Boolean circuits it is often assumed that the gates are limited to being AND/OR and NOT. We allow the gates to take any Boolean function, which makes this model somewhat stronger.}.
For each node $v$ in $C$ we denote by $\gamma(v) \in \left\lbrace f: \{\pm 1\}^2 \to \{\pm 1\} \right\rbrace$ its gate.
We recursively define $h_{v,C} : \{\pm 1\}^n \to \{\pm 1\}$ to be:
\[
h_{v,C}(\vx) = \gamma(v)\left(h_{u_1,C}(\vx), h_{u_2,C}(\vx)\right)
\]
where $u_1, u_2$ are the nodes with outcoming edges to $v$.
Define $h_{C} = h_{o,C}$, for the output node $o$.

We study the problem of learning the target function $h_C$, when $C$ is a full binary tree, and $n = 2^d$, where $d$ is the depth of the tree. The leaves of the tree are the input bits, ordered by $x_1, \dots x_n$.
We denote by $\mathcal{H}$ the family of functions that can be implemented by such tree-structured Boolean circuit.
Admittedly, $\mathcal{H}$ is much smaller than the family of all circuits of depth $d$, but still gives a rather rich family of functions. For example, $\mathcal{H}$ contains all the parity function on any $k$ bits of the input (the function calculated by $f(\vx) = \prod_{i \in I} x_i$ for some set of indexes $I$). We note that the the size of $\mathcal{H}$ grows like $6^n$, as shown in \cite{farhoodi2019functions}.

\begin{figure}
\centering
\tikzset{
  treenode/.style = {align=center, inner sep=2pt, text centered,
    font=\sffamily},
  arn_r/.style = {treenode, circle, draw=black, 
    text width=1.5em, very thick},
}
\begin{tikzpicture}[<-,>=stealth',level/.style={sibling distance = 5cm/#1,
  level distance = 0.8cm}] 
\node[arn_r]{$\gamma_{1,1}$}
    child{ node [arn_r] {$\gamma_{2,1}$} 
            child{ node [arn_r] {$\gamma_{3,1}$} 
            	child{ node {$x_1$}}
				child{ node {$x_2$}}
            }
            child{ node [arn_r] {$\gamma_{3,2}$}
							child{ node {$x_3$}}
							child{ node {$x_4$}}
            }                            
    }
    child{ node [arn_r] {$\gamma_{2,2}$}
            child{ node [arn_r] {$\gamma_{3,3}$} 
							child{ node {$x_5$}}
							child{ node {$x_6$}}
            }
            child{ node [arn_r] {$\gamma_{3,4}$}
							child{ node {$x_{n-1}$}}
							child{ node {$x_n$}}
            }
		}
; 
\end{tikzpicture}
{\caption{Tree-structured Boolean Circuit.}}
\end{figure}

We introduce a few notations that are used in the sequel. Fix some tree-structured Boolean circuit $C$. This circuit has $d$ levels, and we denote $v_{i,j}$ the $j$-th node in the $i$-th
level of the tree, and denote $\gamma_{i,j} = \gamma(v_{i,j})$.
Fix some $i \in [d]$, let $n_i := 2^i$, and denote by
$\Gamma_i: \{\pm 1\}^{n_i} \to \{\pm 1\}^{n_i/2}$
the function calculated by the $i$-th level of the circuit:
\[
\Gamma_i(\vx) = \left(
\gamma_{i-1,1}(x_1, x_2), \dots, \gamma_{i-1, n_i/2}(x_{n_i-1}, x_{n_i})
\right)
\]
For $i<i'$, we denote:
$\Gamma_{i \dots i'} := \Gamma_i \circ \dots \circ \Gamma_{i'}$.
So, the full circuit is given by $h_C(\vx) = \Gamma_{1 \dots d}(\vx)$.

As noted, our goal is to learn Boolean circuits with neural-networks. To do so, we use a network architecture that aims to imitate the Boolean circuits described above. We replace each Boolean gate with a \textit{neural-gate}: a one hidden-layer ReLU network, with a hard-tanh\footnote{We chose to use the hard-tanh activation over the more popular tanh activation since it simplifies our theoretical analysis. However, we believe the same results can be given for the tanh activation.} activation on its output.
Formally, let $\sigma$ be the ReLU activation,
and let $\phi$ be the hard-tanh activation, so:
\[
\sigma(x) = \max (x,0), ~\phi(x) = \max(\min(x,1),-1)
\]
Define a \textit{neural-gate} to be a neural-network with one hidden-layer, input dimension 2,
with ReLU activation for the hidden-layer and hard-tanh for the output node.
So, denote $g_{\vw,\vv} : \reals^2 \to \reals$ s.t.:
\[
g_{\vw,\vv}(\vx) = \phi(\sum_{l=1}^k v_i\sigma(\inner{\vw_l, \vx}))
\]

Notice that a \textit{neural-gate} $g_{\vw, \vv}$ of width 4 or more can implement any Boolean gate. That is, we can replace any Boolean gate with a neural-gate, and maintain the same expressive power.
To implement the full Boolean circuit defined above, we construct a deep network of depth $d$ (the depth of the Boolean circuit), with the same structure as the Boolean circuit (see \figref{fig:fig_neural_circuit}). We define $d$ blocks, each block has \textit{neural-gates} with the same structure and connectivity as the Boolean circuit.
A block $B_{\mW^{(i)}, \mV^{(i)}} :\reals^{2^{i}} \to \reals^{2^{i-1}}$, is defined by:
\[
B_{\mW^{(i)}, \mV^{(i)}}(\vx) = [g_{\vw^{(i,1)},\vv^{(i,1)}}(x_1, x_2), g_{\vw^{(i,2)},\vv^{(i,2)}}(x_3, x_4), 
\dots, g_{\vw^{(i,2^{i-1})}, \vv^{(i,2^{i-1})}}(x_{2^i-1}, x_{2^i})]
\]
We consider the process of training neural-networks of the form $\mathcal{N}_{\tW, \tV} = B_{\mW^{(1)}, \mV^{(1)}} \circ \dots \circ B_{\mW^{(d)},\mV^{(d)}}$.
Notice that indeed, a network $\mathcal{N}_{\tW,\tV}$ can implement any tree-structured Boolean circuit of depth $d$.
We consider a layerwise optimization algorithm, that performs gradient updates layer-by-layer. Such approach has been recently shown to achieve performance that is comparable to end-to-end training, scaling up to the ImageNet dataset \citep{belilovsky2018greedy}.

\begin{figure}[t]
\floatconts
{fig:fig_neural_circuit}
{\caption{(a) Neural-gate. (b) Neural-network for learning the tree-structureed circuits.}}
{%
\subfigure{%
\label{fig:neural_gate}
\raisebox{0.5in}{\includegraphics[scale=0.4]{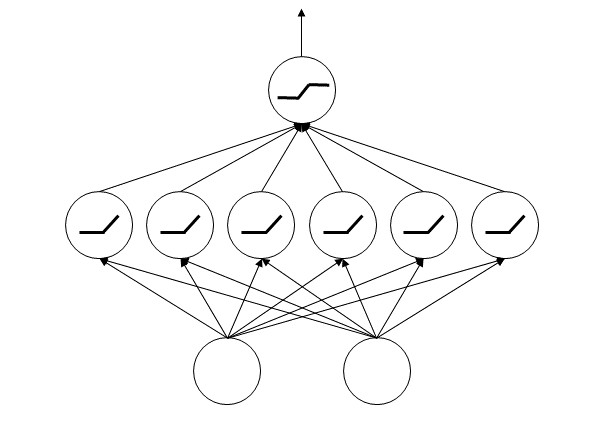}}
}\qquad 
\subfigure{%
\label{fig:neural_circuit}
\tikzset{
  treenode/.style = {align=center, inner sep=2pt, text centered,
    font=\sffamily},
  arn_r/.style = {treenode, 
    text width=3.5em},
}
\begin{tikzpicture}[<-,>=stealth',level/.style={sibling distance = 4.5cm/#1,
  level distance = 1.5cm}] 
\node[arn_r]{\includegraphics[width=1.0\textwidth,trim=30 0 30 0, clip]{neural_gate}}
    child{ node [arn_r] {\includegraphics[width=1.0\textwidth,trim=30 0 30 0, clip]{neural_gate}} 
            child{ node [arn_r] {\includegraphics[width=1.0\textwidth,trim=30 0 30 0, clip]{neural_gate}} 
            	child{ node {$x_1$}}
				child{ node {$x_2$}}
            }
            child{ node [arn_r] {\includegraphics[width=1.0\textwidth,trim=30 0 30 0, clip]{neural_gate}}
							child{ node {$x_3$}}
							child{ node {$x_4$}}
            }                            
    }
    child{ node [arn_r] {\includegraphics[width=1.0\textwidth,trim=30 0 30 0, clip]{neural_gate}}
            child{ node [arn_r] {\includegraphics[width=1.0\textwidth,trim=30 0 30 0, clip]{neural_gate}} 
							child{ node {$x_5$}}
							child{ node {$x_6$}}
            }
            child{ node [arn_r] {\includegraphics[width=1.0\textwidth,trim=30 0 30 0, clip]{neural_gate}}
							child{ node {$x_{n-1}$}}
							child{ node {$x_n$}}
            }
		}
; 
\end{tikzpicture}
}
}
\end{figure}

Denote by $P$ the average-pooling operator, defined by
$P(x_1, \dots, x_n) = \frac{1}{n} \sum_{i=1}^n x_i$.
To define the objective, we use the hinge-loss defined by $\ell(\hat{y},y) = \max(1-y\hat{y},0)$, and add a regularization term $R_\lambda(\hat{y}) = \lambda \abs{1-\hat{y}}$ to break symmetry in the optimization. We therefore denote the overall loss
on the distribution $\mathcal{D}$ and on a sample $S \subseteq \mathcal{X} \times \mathcal{Y}$ by:
\[
L_\mathcal{D}(f) = \mean{(\vx,y) \sim \mathcal{D}}{\ell(f(\vx),y) + R_\lambda(f(\vx))}, ~L_S(f) = \frac{1}{\abs{S}} \sum_{(\vx,y) \in S} \ell(f(\vx),y) + R_\lambda(f(\vx))
\]
The layerwise gradient-descent algorithm for learning deep networks is described in \algref{alg:bgd}.

\begin{algorithm}
   \caption{Layerwise Gradient-Descent}
   \label{alg:bgd}
\begin{algorithmic}
  \STATE \textbf{input}: 
\begin{ALC@g}
  \STATE Sample $S \subseteq \mathcal{X} \times \mathcal{Y}$, number of iterations $T \in \naturals$, learning rate $\eta \in \reals$.
\end{ALC@g}
  \STATE Let $\mathcal{N}_d \leftarrow id$
  \FOR{$i = d \dots 1$}
  \STATE Initialize $\mW^{(i)}_0, \mV^{(i)}_0$.
  \FOR{$t = 1 \dots T$}
  \STATE Update $\mW^{(i)}_t \leftarrow \mW^{(i)}_{t-1} - \eta \frac{\partial}{\partial \mW^{(i)}_{t-1}} L_{S}(P(B_{\mW^{(i)}_{t-1}, \mV^{(i)}_0} \circ \mathcal{N}_i))$
  \ENDFOR
  \STATE Update $\mathcal{N}_{i-1} \leftarrow B_{\mW^{(i)}_{T}, \mV^{(i)}_0} \circ \mathcal{N}_i$
  \ENDFOR
  \STATE Return $\mathcal{N}_0$
\end{algorithmic}
\end{algorithm}

For simplicity, we assume that the second layer of every \textit{neural-gate} is randomly initialized and fixed, such that $v \in \{\pm 1\}$. Notice that this does not limit the expressive power of the network. Algorithm \ref{alg:bgd} iteratively optimizes the output of the network's layers, starting from the bottom-most layer. For each layer, the average-pooling operator is applied to reduce the output of the layer to a single bit, and this output is optimized with respect to the target label. Note that in fact, we can equivalently optimize each \textit{neural-gate} separately and achieve the same algorithm. However, we present a layerwise training process to conform with algorithms used in practice.

\section{Main Results}
\label{sec:main_results}
Our main result shows that \algref{alg:bgd} can learn a function implemented by the circuit $C$, when running on ``nice'' distributions, with the local correlation property. We start by describing the distributional assumptions needed for our main results. Let $\mathcal{D}$ be some distribution over $\mathcal{X} \times \mathcal{Y}$.
For some function $f : \mathcal{X} \to \mathcal{X}'$, we denote by $f(\mathcal{D})$
the distribution of $(f(\vx), y)$ where $(\vx,y) \sim \mathcal{D}$.
Let $\D{i}$ be the distribution $\Gamma_{(i+1) \dots d}(\mathcal{D})$.
Denote by $c_{i,j}$ the correlation between the output of the $j$-th gate in the
$i$-th layer and the label, so:
$c_{i,j} := \mean{\D{i}}{x_j y}$.
Denote the influence of the $(i,j)$ gate with respect to the uniform distribution ($U$) by:
\[
\mathcal{I}_{i,j} := \prob{\vx \sim U}{\Gamma_{i-1}(\vx) \ne \Gamma_{i-1}(\vx \oplus e_j)} := 
\prob{\vx \sim U}{\Gamma_{i-1}(\vx) \ne \Gamma_{i-1}(x_1, \dots, -x_j, \dots, x_n)}
\]
We assume w.l.o.g. that for every $i,j$ such that $\mathcal{I}_{i,j} = 0$,
the $(i,j)$ gate is constant $\gamma_{i,j} \equiv 1$.
Since the output of this gate has no influence on the output of the circuit, we can choose it freely without changing the target function.
Again w.l.o.g., we assume $\mean{\mathcal{D}}{y} \ge 0$. Indeed, if this does not hold, we can observe the same distribution with swapped labels. Now, our main assumption is the following:
\begin{assumption} (LCA)
\label{asm:correlation_basic}
There exists some $\Delta \in (0,1)$ such that
for every layer $i \in [d]$ and for every gate $j \in [2^i]$ with $\mathcal{I}_{i,j} \ne 0$, the value of $c_{i,j}$ satisfies $\abs{c_{i,j}} > \mean{\mathcal{D}}{y} + \Delta$.
\end{assumption}

Essentially, this assumption requires that the output of every gate in the circuit will ``explain'' the label slightly better than simply observing the bias between positive and negative examples. Clearly, gates that have no influence on the target function never satisfy this property, so we require it only for influencing gates. While this is a strong assumption, in \secref{sec:distributions} we discuss examples of distributions where this assumption typically holds.

Now, we start by considering the case of non-degenerate product distributions:
\begin{definition} A distribution $\mathcal{D}$ is a $\Delta$-non-degenerate product distribution, if the following holds:
\begin{itemize}
\item For every $j \ne j'$, the variables $x_j$ and $x_{j'}$ are independent, for $(\vx, y) \sim \mathcal{D}$.
\item For every $j$, we have $\prob{(\vx, y) \sim \mathcal{D}}{x_j = 1} \in (\Delta, 1-\Delta)$.
\end{itemize} 
\end{definition}

Our first result shows that for non-degenerate product distributions satisfying LCA, our algorithm returns a network with zero loss w.h.p., with polynomial sample complexity and run-time:
\begin{theorem}
\label{thm:product_convergence_exact}
Fix $\delta, \Delta \in (0,\frac{1}{2})$ and integer $n=2^d$. Let
$k \ge \log^{-1}(\frac{4}{3}) \log(\frac{2nd}{\delta})$,
$\eta \le \frac{1}{16\sqrt{2}k}$ and $\lambda = \mean{}{y} + \frac{\Delta}{4}$.
Fix some $h \in \mathcal{H}$, and let $\mathcal{D}$ be a $\Delta$-non-degnerate product distribution separable by $h$ s.t. $\mathcal{D}$ satisfies assumption \ref{asm:correlation_basic} (LCA) with parameter $\Delta$.
Assume we sample $S \sim \mathcal{D}$, with $\abs{S} > \frac{2^{15}}{\Delta^6} \log(\frac{8nd}{\delta})$. 
Then, with probability $\ge 1-\delta$, when running \algref{alg:bgd} with initialization of $\mW$ s.t. $\norm{\mW^{(i)}_0}_{\max} \le \frac{1}{4\sqrt{2}k}$
on the sample $S$, the algorithm returns a function $\mathcal{N}_0$ such that $\prob{(\vx,y) \sim \mathcal{D}}{\mathcal{N}_0(\vx) \ne y} = 0$, when 
running $T > \frac{24n}{ \sqrt{2} \eta \Delta^3}$ steps for each layer.
\end{theorem}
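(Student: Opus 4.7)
The algorithm trains one layer at a time starting from the deepest layer $d$, so I would prove the theorem by induction on $i$, with the inductive hypothesis that after layers $d, d-1, \ldots, i+1$ have been trained, $\mathcal{N}_i$ agrees with $\Gamma_{(i+1) \cdots d}$ on every influencing coordinate, with high probability over the sample. Outputs on non-influencing subtrees may be arbitrary: by the w.l.o.g.\ normalization that sets every non-influencing gate to the constant $1$, such coordinates do not affect the circuit output. Under the inductive hypothesis, the two-bit input seen by each neural-gate of layer $i$ during its training stage is distributed exactly as $\D{i}$ on the relevant pair of coordinates, and LCA supplies the correlation bound $\abs{c_{i,j}} > \mean{\mathcal{D}}{y} + \Delta$ needed to train that gate.

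The core technical ingredient is a single-gate lemma: for any distribution over $\{\pm 1\}^2 \times \{\pm 1\}$ in which the best Boolean predictor has correlation at least $\mean{\mathcal{D}}{y} + \Delta$ with $y$ while both constant predictors have correlation at most $\mean{\mathcal{D}}{y}$, gradient descent on the hinge loss $\ell$ plus the regularizer $R_\lambda$ with threshold $\lambda = \mean{\mathcal{D}}{y} + \Delta/4$, run on a width-$k$ neural-gate with random $\pm 1$ second layer and small initialization $\norm{\mW_0}_{\max} \le 1/(4\sqrt{2}k)$, converges in $T = O(n/(\eta \Delta^3))$ steps to a parameter configuration that exactly computes the target Boolean gate. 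To establish this lemma I would: (i) write the population gradient as a weighted sum over the four possible input patterns $\vx \in \{\pm 1\}^2$, weighted by their conditional label distributions; (ii) use the small initialization together with the step-size bound $\eta \le 1/(16\sqrt{2}k)$ to keep the pre-activation of $\phi$ inside its linear region during the entire run, reducing the dynamics to gradient flow on a ReLU network with a piecewise-linear objective; (iii) exploit the width $k = \Theta(\log(nd/\delta))$ together with the random $\pm 1$ second layer to guarantee, by an anti-concentration argument on the signs $v_l$, that a constant fraction of hidden units is ``correctly oriented'' with respect to each of the four input patterns at initialization; (iv) define a potential equal to the sum of signed margins over the four input patterns, show that each step increases this potential by $\Omega(\eta \Delta^3)$, and conclude that after $T$ steps every margin exceeds $1$, so the gate saturates at $\pm 1$ and agrees with the target Boolean function.

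The role of the regularizer is to break the symmetry between constant and informative predictors: without $R_\lambda$, the constant-$+1$ gate already achieves correlation $\mean{\mathcal{D}}{y}$ and is a stationary point of $\ell$ alone. Choosing $\lambda = \mean{\mathcal{D}}{y} + \Delta/4$ strictly between $\mean{\mathcal{D}}{y}$ and the target's $\mean{\mathcal{D}}{y} + \Delta$ ensures that at the constant predictor, the gradient of $R_\lambda$ dominates and pushes the output away from the constant direction, while at the target Boolean gate the gradient of $\ell$ dominates and stabilizes the solution. The w.l.o.g.\ assumption $\mean{\mathcal{D}}{y} \ge 0$ aligns the sign convention of $R_\lambda$ (which prefers $+1$) with the bias of the label. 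Non-degeneracy of the product distribution enters here as a lower bound, polynomial in $\Delta$, on the probability mass of every input pattern at every gate, so that the per-pattern gradient contributions cannot be washed out.

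Finite-sample effects are controlled by Hoeffding concentration of the four per-pattern conditional label probabilities at each of the $2n - 1$ gates, yielding an empirical-to-population gradient error of $O(\Delta^3)$ once $\abs{S} = \Omega(\Delta^{-6} \log(nd/\delta))$; a union bound over gates and iterations yields the stated sample complexity. The main obstacle is step (iv) of the single-gate lemma, where the non-convexity of the neural-gate loss must be navigated. Ruling out spurious stationary points is made tractable by the combination of (a) only four possible input patterns, which makes the effective loss surface finite-dimensional, (b) the symmetry-breaking regularizer $R_\lambda$ with its specific threshold, and (c) the wide, small-magnitude initialization which keeps the dynamics in the linear regime of $\phi$. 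These three ingredients are precisely what drive the specific hyperparameter choices in the theorem.
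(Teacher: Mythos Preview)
Your overall architecture---layerwise induction plus a single-gate convergence lemma---matches the paper's. But the inductive hypothesis you state is wrong, and this is not a cosmetic issue: it is exactly the place where the paper's argument does nontrivial bookkeeping.

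You claim that after training layers $d,\ldots,i+1$, the network satisfies $\mathcal{N}_i = \Gamma_{(i+1)\ldots d}$ on influencing coordinates. In fact each neural-gate is trained to \emph{predict the label}, not to reproduce the circuit gate, so what it converges to is $\nu_j \cdot \gamma_{i,j}$ where $\nu_j=\sign(c_{i-1,j})$. When $c_{i-1,j}<0$ the neural gate learns the negation of the circuit gate. Consequently $\mathcal{N}_i = \varphi_{i+1}\circ\Gamma_{(i+1)\ldots d}$ for a coordinatewise sign-flip $\varphi_{i+1}$, and the inputs seen by layer $i$ are distributed as $\psi(\D{i})$ with $\psi=\varphi_{i+1}$, not as $\D{i}$. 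The paper's single-block lemma therefore analyzes convergence to $\varphi_i\circ\Gamma_i\circ\psi$, proves $\varphi_i\circ\varphi_i=\mathrm{id}$ so the flips cancel across layers, and finally uses $c_{0,1}=1$ (hence $\varphi_1=\mathrm{id}$) to conclude $\mathcal{N}_0=h_C$. Your proposal never introduces or tracks these sign maps, so the induction as written does not close.

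A second gap concerns non-influencing gates. You write that their outputs ``may be arbitrary'' because they do not affect $h_C$. But they do affect the \emph{training} of the parent gate: a parent that is influencing through one child and non-influencing through the other still needs to see a Boolean pair as input for your single-gate analysis to apply. The paper handles this by showing (using Property~\ref{prp:correlation} and the regularizer with $\lambda=\mean{}{y}+\Delta/4$) that non-influencing neural-gates converge to the constant $+1$, so every layer feeds genuinely Boolean inputs upward. Your description of the regularizer's role---pushing away from constants---is backwards in this case: for non-influencing gates the regularizer is what drives convergence \emph{to} the constant $+1$.

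Finally, your step (iv)---a scalar potential that increases by $\Omega(\eta\Delta^3)$ per step---is quite different from what the paper actually does. The paper works neuron-by-neuron: it isolates a set $A_t$ of ``stuck'' neurons that oscillate near zero, shows this set is monotone in $t$, proves every other active neuron moves in the correct signed direction, and exhibits for each pattern $\vp$ at least one well-initialized neuron whose contribution grows linearly until saturation. A global potential summing signed margins is not obviously monotone here, because different neurons are active on different patterns and the ReLU/hard-tanh nonlinearities couple them; you would need to argue separately that the stuck neurons' contribution stays bounded (this is where the paper uses the step-size bound $\eta\le 1/(16\sqrt{2}k)$ and the definition of $A_t$).
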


In fact, we can show this result for a larger family of distributions, going beyond product distributions. First, we show that a non-degenerate product distribution $\mathcal{D}$ that satisfies LCA, satisfies the following properties:

\begin{property}
\label{prp:correlation}
There exists some $\Delta \in (0,1)$ such that
for every layer $i \in [d]$ and for every gate $j \in [2^i]$, the output of the $j$-th gate in the $i$-th layer satisfies one of the following:
\begin{itemize}
\item The value of the gate $j$ is independent of the label $y$, and its influence is zero: $\mathcal{I}_{i,j} = 0$.
\item The value of $c_{i,j}$ satisfies $\abs{c_{i,j}} > \mean{\mathcal{D}}{y} + \Delta$.
\end{itemize}
\end{property}
\begin{property}
\label{prp:cond_indep}
For every layer $i \in [d]$, and every gate $j \in [2^{i-1}]$,
the value of $(x_{2j-1}, x_{2j})$
(i.e, the input to the $j$-th gate of layer $i-1$) is independent of the label
$y$ given the output of the $j$-th gate:
\begin{align*}
&\prob{(x,y) \sim \D{i}}{(x_{2j-1},x_{2j}) = \vp, y = y' |
\gamma_{i-1,j}(x_{2j-1},x_{2j})} \\
&= \prob{(\vx,y) \sim \D{i}}{(x_{2j-1},x_{2j}) = \vp |
\gamma_{i-1,j}(x_{2j-1},x_{2j})} \cdot 
\prob{(\vx,y) \sim \D{i}}{y = y' |
\gamma_{i-1,j}(x_{2j-1},x_{2j})}
\end{align*}
\end{property}

\begin{property}
\label{prp:cond_bound}
There exists some $\epsilon \in (0,1)$ such that
for every layer $i \in [d]$, for every gate $j \in [2^{i-1}]$
and for every $\vp \in \{\pm 1\}^2$ such that $\prob{(\vx,y) \sim \D{i}}{(x_{2j-1},x_{2j}) = \vp} > 0$, it holds that:
$\prob{(\vx,y) \sim \D{i}}{(x_{2j-1},x_{2j}) = \vp} \ge \epsilon$.
\end{property}
From the following lemma, these properties generalize the case of product distributions with LCA:
\begin{lemma}
\label{lem:prod_dist}
Any $\Delta$-non-degenerate product distribution $\mathcal{D}$ satisfying assumption \ref{asm:correlation_basic} (LCA), satisfies properties \ref{prp:correlation}-\ref{prp:cond_bound}, with $\epsilon = \frac{\Delta^2}{4}$.
\end{lemma}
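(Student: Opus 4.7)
The plan is to check the three properties one at a time, by combining the structural independence enjoyed by a product distribution with the correlation lower bound from LCA. The unifying observation that I will use throughout is that, because the circuit is a full binary tree on disjoint leaves and $\mathcal{D}$ is a product distribution, the $2^i$ coordinates of $\D{i}$ are mutually independent: each coordinate is a deterministic function of a disjoint subset of the original input bits.

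Property 1 is immediate from the setup. If $\mathcal{I}_{i,j}=0$, then by the WLOG convention the gate is constant $\gamma_{i,j}\equiv 1$, so its output is trivially independent of $y$. Otherwise LCA directly yields $|c_{i,j}|>\mean{}{y}+\Delta$. For Property 2, fix a gate $\gamma_{i-1,j}$ with inputs $(x_{2j-1},x_{2j})$ in $\D{i}$, and let $\vx_{\setminus}$ denote the remaining coordinates. Since the circuit above level $i-1$ only sees $(x_{2j-1},x_{2j})$ through $\gamma_{i-1,j}(x_{2j-1},x_{2j})$, we can write $y=g(\gamma_{i-1,j}(x_{2j-1},x_{2j}),\vx_{\setminus})$ for some function $g$. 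Conditioning on $\gamma_{i-1,j}(x_{2j-1},x_{2j})=z$ turns $y$ into $g(z,\vx_{\setminus})$, a function of $\vx_{\setminus}$ alone. Because $(x_{2j-1},x_{2j})\perp\vx_{\setminus}$ and the conditioning event depends only on $(x_{2j-1},x_{2j})$, this independence survives conditioning, yielding the factorization required by Property~2.

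Property 3 is the main step and the place where LCA is actually used. By the same independence of coordinates in $\D{i}$, $\prob{\D{i}}{(x_{2j-1},x_{2j})=\vp}=\prob{}{x_{2j-1}=p_1}\prob{}{x_{2j}=p_2}$, so it suffices to show $\prob{}{x_k=\alpha}\ge \Delta/2$ for every coordinate $x_k$ of $\D{i}$ and every attainable $\alpha\in\{\pm 1\}$. If $x_k$ comes from a zero-influence gate, then $x_k\equiv 1$ and the only attainable value has probability $1\ge\Delta/2$. Otherwise, by LCA, $|c_{i,j}|>\mean{}{y}+\Delta$ where $c_{i,j}=\mean{\D{i}}{x_k y}$. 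I will then apply the standard decomposition
\[
\mean{}{x_k y}=\mean{}{y\1[x_k=1]}-\mean{}{y\1[x_k=-1]},\qquad \mean{}{y}=\mean{}{y\1[x_k=1]}+\mean{}{y\1[x_k=-1]},
\]
which gives $2\mean{}{y\1[x_k=\pm1]}=\mean{}{y}\pm c_{i,j}$. In both signs of $c_{i,j}$ allowed by LCA, together with $\mean{}{y}\ge 0$, this forces $|\mean{}{y\1[x_k=\alpha]}|>\Delta/2$ for each $\alpha\in\{\pm 1\}$, and then the bound $|\mean{}{y\1[x_k=\alpha]}|\le\prob{}{x_k=\alpha}$ yields $\prob{}{x_k=\alpha}>\Delta/2$. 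Multiplying the two independent marginals gives $\prob{}{(x_{2j-1},x_{2j})=\vp}\ge(\Delta/2)^2=\Delta^2/4$, as claimed.

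The main obstacle is Property 3: one needs to carefully extract a two-sided lower bound on the marginals $\prob{}{x_k=\pm 1}$ from the one-sided LCA bound on $|c_{i,j}|$, and then reconcile the constant-gate case with the non-constant case so that the bound $\Delta/2$ applies uniformly to every attainable value. Properties 1 and 2 are essentially bookkeeping on top of the product structure.
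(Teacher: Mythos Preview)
Your proposal is correct and follows essentially the same approach as the paper. All three properties are handled by the same ideas: Property~1 is immediate from LCA and the WLOG convention on non-influencing gates; Property~2 uses that $y$ depends on $(x_{2j-1},x_{2j})$ only through $\gamma_{i-1,j}(x_{2j-1},x_{2j})$ together with independence of disjoint blocks; and Property~3 reduces to a per-coordinate marginal bound via independence, then extracts $\prob{}{x_k=\alpha}>\Delta/2$ from LCA via the identity $2\,\mean{}{y\,\1[x_k=\alpha]}=\mean{}{y}+\alpha\, c_{i,j}$ (the paper writes the equivalent identity as $c_{i,j}-\mean{}{y}=\mean{}{y(x_j-1)}$ and splits on the sign of $c_{i,j}$). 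If anything, your Property~3 argument is cleaner: you verify $|\mean{}{y\,\1[x_k=\alpha]}|>\Delta/2$ for \emph{both} $\alpha\in\{\pm1\}$ in each sign case, whereas the paper's write-up only displays one direction per case before asserting $\prob{}{x_j=1}\in(\Delta/2,1-\Delta/2)$.
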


Notice that properties \ref{prp:correlation}, \ref{prp:cond_indep} and \ref{prp:cond_bound} may hold for distributions that are not product distribution (as we show in the next section). Specifically, property \ref{prp:cond_indep} is a very common assumption in the field of Graphical Models (see \cite{koller2009probabilistic}).
As in \thmref{thm:product_convergence_exact}, given a distribution satisfying properties \ref{prp:correlation}- \ref{prp:cond_bound}, we show that with high probability, \algref{alg:bgd} returns a function with zero loss, with sample complexity and run-time polynomial in the dimension $n$:

\begin{theorem}
\label{thm:generative_convergence_exact}
Fix $\delta, \Delta, \epsilon \in (0,\frac{1}{2})$ and integer $n=2^d$. Let
$k \ge \log^{-1}(\frac{4}{3}) \log(\frac{2nd}{\delta})$, $\eta \le \frac{1}{16\sqrt{2}k}$ and $\lambda = \mean{}{y} + \frac{\Delta}{4}$.
Fix some $h \in \mathcal{H}$, and let $\mathcal{D}$ be a distribution separable by $h$ which satisfies properties \ref{prp:correlation}-\ref{prp:cond_bound} with $\Delta, \epsilon$.
Assume we sample $S \sim \mathcal{D}$, with $\abs{S} > \frac{2^{11}}{\epsilon^2 \Delta^2} \log(\frac{8nd}{\delta})$. 
Then, with probability $\ge 1-\delta$, when running \algref{alg:bgd} with initialization of $\mW$ s.t. $\norm{\mW^{(i)}_0}_{\max} \le \frac{1}{4\sqrt{2}k}$
on the sample $S$, the algorithm returns a function such that $\prob{(\vx,y) \sim \mathcal{D}}{\mathcal{N}_0(\vx) \ne y} = 0$, when 
running $T > \frac{6n}{ \sqrt{2} \eta \epsilon \Delta}$ steps for each layer.
\end{theorem}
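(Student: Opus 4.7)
The plan is to prove Theorem \ref{thm:generative_convergence_exact} by reverse induction on the layer index, from the bottom layer $i = d$ up to $i = 1$. The induction hypothesis I would carry is: after \algref{alg:bgd} finishes processing layer $i$, the composed function $\mathcal{N}_{i-1} = B_{\mW^{(i)}_T,\mV^{(i)}_0} \circ \mathcal{N}_i$ agrees with $\Gamma_{i \dots d}$ on the support of the distribution. Once this is established for $i = 1$, we obtain $\mathcal{N}_0 = h_C$ on the support, which gives zero classification error. Because the algorithm trains layers sequentially, when layer $i$ is being optimized the input distribution fed into it is exactly $\D{i}$, so the three distributional assumptions can be applied directly to that layer.

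For the inductive step I would reduce the optimization of one block $B_{\mW^{(i)}, \mV^{(i)}}$ to the parallel training of $2^{i-1}$ independent neural-gates. The average-pooling $P$ and the hinge-loss $\ell$ are both coordinate-wise separable in their linear part, so the gradient of $L_S(P(B \circ \mathcal{N}_i))$ with respect to $(\vw^{(i,j)}, \vv^{(i,j)})$ only depends on the output of the $j$-th gate and on the label. For a fixed gate $(i,j)$, I would then show that the two-input neural-gate $g_{\vw,\vv}$ learns the Boolean function $\gamma_{i-1,j}$ exactly. The argument uses properties \ref{prp:correlation}–\ref{prp:cond_bound} as follows: property \ref{prp:cond_indep} lets me rewrite the expected gradient as a sum over the four patterns $\vp \in \{\pm 1\}^2$, in which the expectation of $y$ conditional on the input pair equals the expectation of $y$ conditional on $\gamma_{i-1,j}(\vp)$; property \ref{prp:correlation} (restated as in Lemma \ref{lem:prod_dist}) then gives that this conditional expectation differs by $\Omega(\Delta)$ from the marginal $\mean{}{y}$; property \ref{prp:cond_bound} guarantees every reachable pattern contributes with weight at least $\epsilon$. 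Choosing $\lambda = \mean{}{y} + \Delta/4$ in the regularizer $R_\lambda$ shifts the target for $g_{\vw,\vv}$ so that on each pattern $\vp$, the correct sign of $\gamma_{i-1,j}(\vp)$ strictly minimizes the per-pattern loss by a margin $\Omega(\epsilon \Delta)$.

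Given this gradient signal, I would track $T$ iterations as follows: with the random $\pm 1$ initialization of $\vv$ and width $k \ge \log^{-1}(4/3)\log(2nd/\delta)$, a standard coupon-collector argument shows that with probability at least $1 - \delta/(nd)$, for each of the four patterns $\vp$ there is at least one hidden neuron $\ell$ whose initial activation $\sigma(\inner{\vw_\ell, \vp})$ is positive and whose outer weight $v_\ell$ has the desired sign; the bound on $\norm{\mW^{(i)}_0}_{\max}$ keeps all activation patterns within the hard-tanh linear regime at initialization. Each step then pushes $\vw_\ell$ by $\Theta(\eta \epsilon \Delta / 2^{i-1})$ in the correct direction, and after $T > 6n/(\sqrt{2}\eta\epsilon\Delta)$ steps every pattern reaches the saturation region of $\phi$, producing exact recovery $g_{\vw,\vv}(\vp) = \gamma_{i-1,j}(\vp)$. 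The main obstacle is precisely this step: carefully bookkeeping the per-neuron updates so that weights do not drift in ways that ruin previously-correct patterns, and controlling the interaction between the four patterns given only coordinate-wise ReLU neurons. Finally, the sample complexity follows from a uniform-convergence bound on the (Lipschitz, bounded) hinge + regularizer loss evaluated over the class of all $\mathcal{N}_i$'s encountered: a Hoeffding bound at scale $\epsilon\Delta$ with a union bound over $d$ layers and $O(n)$ gates yields the stated $|S| \ge \Omega(\epsilon^{-2}\Delta^{-2}\log(nd/\delta))$. Combining these pieces with the outer induction yields the theorem.
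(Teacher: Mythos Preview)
Your overall induction-on-layers skeleton and the decomposition into independent neural-gates match the paper, and your account of how Properties~\ref{prp:cond_indep} and~\ref{prp:cond_bound} enter the gradient calculation is on the right track. However, there is a genuine gap in the induction hypothesis you propose, and it would cause the argument to break as soon as you try to verify the base case.

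You claim that the neural-gate learns $\gamma_{i-1,j}$ exactly, and hence that $\mathcal{N}_{i-1}$ agrees with $\Gamma_{i\dots d}$ on the support. But the gradient signal at gate $(i-1,j)$ only sees the label through the correlation $c_{i-1,j}$, and nothing in the loss tells the network the \emph{sign} of that correlation. Concretely, when you compute the per-pattern expected gradient (as in the paper's Lemma~\ref{lem:bound_orig_distribution}), the direction in which $\inner{\vw_l,\vp}$ moves is governed by $\widetilde{\gamma}(\vp)\,v_l\,\nu_j$ with $\nu_j = \sign(c_{i-1,j})$, not by $\widetilde{\gamma}(\vp)\,v_l$ alone. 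The upshot is that after training, $g_{\vw,\vv}(\vp) = \nu_j\,\gamma_{i-1,j}(\vp)$, i.e.\ the gate is recovered only up to a global sign flip. Consequently the correct invariant is $\mathcal{N}_{i-1} = \varphi_i \circ \Gamma_{i\dots d}$, where $\varphi_i$ multiplies coordinate $j$ by $\nu_j$. This in turn means that when you train layer $i-1$, its inputs are already sign-flipped by $\varphi_i$, so the gate you are trying to recover is $\widetilde{\gamma}(x_1,x_2) = \gamma(\xi_1 x_1, \xi_2 x_2)$ for appropriate $\xi$'s, not $\gamma$ itself. The paper introduces exactly these maps $\psi,\varphi_i$ (see Lemma~\ref{lem:single_block}) and closes the induction by observing that $\varphi_i\circ\varphi_i = \mathrm{id}$, so the flip introduced at layer $i$ is undone when composing with layer $i-1$; finally $\varphi_1 = \mathrm{id}$ because $c_{0,1}=1$. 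Without tracking these sign flips, your induction hypothesis is too strong to be established at any layer, and the step ``the correct sign of $\gamma_{i-1,j}(\vp)$ strictly minimizes the per-pattern loss'' is simply false when $c_{i-1,j}<0$.
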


We give the full proof of the theorems in the appendix, and give a sketch of the argument here. First, note that \thmref{thm:product_convergence_exact} follows from \thmref{thm:generative_convergence_exact} and \lemref{lem:prod_dist}. To prove \thmref{thm:generative_convergence_exact}, observe that the input to the $(i,j)$-th \textit{neural-gate} is a pattern of two bits. The target gate (the $(i,j)$-th gate in the circuit $C$) identifies each of the four possible patterns with a single output bit. For example, if the gate is OR, then the patterns $\{(1,1), (-1,1), (1,-1)\}$ get the value $1$, and the pattern $(-1,-1)$ gets the value $-1$. Fix some pattern $\vp \in \{\pm 1\}^2$, and assume that the output of the $(i,j)$-th gate on the pattern $\vp$ is $1$. Since we assume the output of the gate is correlated with the label, the loss function draws the output of the \textit{neural-gate} on the pattern $\vp$ toward the \textit{correlation} of the gate. In the case where the output of the gate on $\vp$ is $-1$, the output of the \textit{neural-gate} is drawn to the opposite sign of the \textit{correlation}. All in all, the optimization separates the patterns that evaluate to $1$ from the patterns that evaluate to $-1$. In other words, the \textit{neural-gate} learns to implement the target gate. This way, we can show that the network recovers all the influencing gates, so at the end of the optimization process the network implements the circuit.

\subsection{Correlation is Necessary}
Observe that when there is no correlation, the above argument fails immediately. Without correlation, the output of the \textit{neural-gate} is drawn towards a constant value for all the input patterns, regardless of the value of the gate. If the gate is not influencing the target function (i.e. $\mathcal{I}_{i,j} = 0$), then this clearly doesn't effect the overall behavior. However, if there exists some influencing gate with no correlation to the label, then the output of the \textit{neural-gate} will be constant on all its input patterns. Hence, the algorithm will fail to recover the target function. This shows that LCA is in fact critical for the success of the algorithm.

But, maybe there exists a gradient-based algorithm, or otherwise a better network architecture, that can learn the function class $\mathcal{H}$, even without assuming local correlations. Relying on well-known results on learning parities with statistical-queries, we claim that this is not the case. In fact, no gradient-based algorithm can learn the function class $\mathcal{H}$ in the general case. This follows from the fact that $\mathcal{H}$ contains all the parity functions on $k$ bits, and these are hard to learn in the statistical-query model (see \cite{kearns1998efficient}).
Since any gradient-based algorithm with stochastic noise\footnote{Even an extremely small amount of noise, below the machine precision.} can be implemented using a statistical-query oracle (see \cite{feldman2017statistical}), this result implies that any gradient-based algorithm with stochastic noise fails to learn the class $\mathcal{H}$. Note that even if we observe only parities on $\log n$ bits, the number of gradient steps needed still grows super-polynomially with $n$ (namely, it grows like $n^{\Omega(\log n)}$). Similar results are derived in \cite{shalev2017failures}, analyzing gradient-descent directly for learning parity functions.

\section{Distributions that Satisfy LCA}
\label{sec:distributions}
In the previous section we showed that \algref{alg:bgd} can learn tree-structured Boolean circuits in polynomial run-time and sample complexity. These results require some non-trivial distributional assumptions. In this section we study specific families of distributions, and show that they satisfy the above assumptions.

First, we study the problem of learning a parity function on $\log n$ bits of the input, when the underlying distribution is a product distribution. The problem of learning parities was studied extensively in the literature of machine learning theory \citep{feldman2006new,feldman2009agnostic,blum2003noise,shalev2017failures, brutzkus2019id3}, and serves as a good case-study for the above results. In the $(\log n)$-parity problem, we show that in fact \textit{most} product distributions satisfy assumption \ref{asm:correlation_basic}, hence our results apply to most product distributions. Next, we study distributions given by a generative model. We show that for every circuit with gates AND/OR/NOT, there exists a distribution that satisfies the above assumptions, so \algref{alg:bgd} can learn any such circuit exactly.

\subsection{Parities and Product Distributions}
We observe the $k$-Parity problem, where the target function is $f(\vx) = \prod_{j \in I} x_j$ some subset $I \subseteq [n]$ of size $\abs{I} = k$.
A simple construction shows that $f$ can be implemented by a tree structured circuit as defined previously. We define the gates of the first layer by:
\[
\gamma_{d-1,j}(z_1,z_2) = \begin{cases}
z_1 z_2 & x_{2j-1}, x_{2j} \in I \\
z_1 & x_{2j-1} \in I, x_{2j} \notin I \\
z_2 & x_{2j} \in I, x_{2j-1} \notin I \\
1 & o.w
\end{cases}
\]
And for all other layers $i < d-1$, we define: $\gamma_{i,j}(z_1, z_1) = z_1 z_2$.
Then we get the following:
\begin{lemma}
\label{lem:parity_expressivity}
Let $C$ be a Boolean circuit as defined above. Then: 
$h_C(\vx) = \prod_{j \in I} x_j = f(\vx)$.
\end{lemma}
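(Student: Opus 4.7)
The plan is to prove this by a simple induction on the depth of the subtree, matching each internal node with the product of the relevant input bits in its subtree. For each node $v_{i,j}$, let $L_{i,j} \subseteq [n]$ denote the set of leaf indices in the subtree rooted at $v_{i,j}$, and let $I_{i,j} := I \cap L_{i,j}$. The claim I would prove by (bottom-up) induction on $i$, running from $i = d-1$ down to $i = 1$, is
\[
h_{v_{i,j}, C}(\vx) \;=\; \prod_{\ell \in I_{i,j}} x_\ell,
\]
with the convention that the empty product equals $1$. Taking $(i,j) = (1,1)$ then yields $L_{1,1} = [n]$, hence $I_{1,1} = I$, and therefore $h_C(\vx) = \prod_{\ell \in I} x_\ell = f(\vx)$, which is exactly the claim of \lemref{lem:parity_expressivity}.

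For the base case $i = d-1$, the node $v_{d-1,j}$ has children $x_{2j-1}$ and $x_{2j}$, so $L_{d-1,j} = \{2j-1, 2j\}$. I would then do a straightforward four-case verification matching the piecewise definition of $\gamma_{d-1,j}$: when both inputs lie in $I$ the gate outputs $x_{2j-1}x_{2j} = \prod_{\ell \in I_{d-1,j}} x_\ell$; when only one of the two inputs lies in $I$ the gate outputs that single variable, which again equals $\prod_{\ell \in I_{d-1,j}} x_\ell$; and when neither lies in $I$ the gate outputs $1$, matching the empty product.

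For the inductive step at a level $i < d-1$, the gate is $\gamma_{i,j}(z_1,z_2) = z_1 z_2$ and its two children are $v_{i+1, 2j-1}$ and $v_{i+1, 2j}$, whose leaf sets partition $L_{i,j}$ and therefore satisfy $I_{i+1, 2j-1} \sqcup I_{i+1, 2j} = I_{i,j}$. Applying the inductive hypothesis to both children gives
\[
h_{v_{i,j},C}(\vx) \;=\; h_{v_{i+1,2j-1},C}(\vx) \cdot h_{v_{i+1,2j},C}(\vx) \;=\; \prod_{\ell \in I_{i+1,2j-1}} x_\ell \;\cdot\; \prod_{\ell \in I_{i+1,2j}} x_\ell \;=\; \prod_{\ell \in I_{i,j}} x_\ell,
\]
completing the induction. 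There is no real obstacle here: the construction was designed exactly so that the bottom layer ``absorbs'' the selection of the relevant variables and every higher layer simply composes products, so the proof is essentially bookkeeping about which leaves sit under which subtree.
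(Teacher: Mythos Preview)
Your proposal is correct and follows essentially the same approach as the paper: both define, for each node, the set of leaves under it (the paper calls it $J_{i,j}$, you call it $L_{i,j}$), and prove by bottom-up induction that the node's output equals $\prod_{\ell \in I \cap L_{i,j}} x_\ell$, with the base case at the bottom layer read off from the case split defining $\gamma_{d-1,j}$ and the inductive step using that every higher gate is multiplication and that the children's leaf sets partition the parent's. The only discrepancy is a harmless off-by-one in where the root sits (the paper's formal indexing puts the root at $(0,1)$, so its proof terminates at $i=0$ rather than $i=1$); given the paper's own figure and text are not fully consistent on this, your choice is fine.
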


Now, let $\mathcal{D}_{\mathcal{X}}$ be some product distribution over $\mathcal{X}$, and denote $p_j := \prob{\mathcal{D}_{\mathcal{X}}}{x_j = 1}$.
Let $\mathcal{D}$ be the distribution of $(\vx, f(\vx))$
where $\vx \sim \mathcal{D}_{\mathcal{X}}$.
Then for the circuit defined above we get the following:
\begin{lemma}
\label{lem:parity_product_distribution}
Fix some $\xi \in (0,\frac{1}{4})$.
For every product distribution $\mathcal{D}$ with $p_j \in (\xi, \frac{1}{2}-\xi) \cup (\frac{1}{2} + \xi, 1-\xi)$ for all $j$, if $\mathcal{I}_{i,j} \ne 0$ then $\abs{c_{i,j}} - \abs{\mean{}{y}} \ge (2\xi)^{k}$ and $\prob{(\vz, y) \sim \Gamma_{(i+1) \dots d}(\mathcal{D})}{z_j = 1} \in (\xi, 1-\xi)$.
\end{lemma}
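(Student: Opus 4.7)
The plan is to exploit the very explicit structure of the tree-parity circuit from \lemref{lem:parity_expressivity}: every gate there is either a multiplication, a projection onto a relevant child, or the constant $1$, so each intermediate output is literally a partial parity of input bits. I will first identify that partial parity, then use the product-distribution assumption to compute $c_{i,j}$ and $\mean{}{y}$ in closed form and factor their difference.

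Concretely, for each gate $(i,j)$ let $J_{i,j} \subseteq I$ be the set of relevant indices whose leaves lie in the subtree rooted at $(i,j)$. A short induction on depth, using the case split defining the first layer and $\gamma(z_1,z_2)=z_1 z_2$ at every higher layer, shows that the output of gate $(i,j)$ at input $\vx$ equals $\prod_{\ell \in J_{i,j}} x_\ell$ (with the empty product being $1$). Since $y=\prod_{\ell \in I} x_\ell$, flipping this gate flips $y$ exactly when $J_{i,j} \neq \emptyset$, so $\mathcal{I}_{i,j} \neq 0 \iff J_{i,j} \neq \emptyset$. Henceforth I restrict to such $(i,j)$.

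Writing $\mu_\ell := 2p_\ell - 1 = \mean{}{x_\ell}$ and using independence of the coordinates together with $x_\ell^2 = 1$,
\[
c_{i,j} \;=\; \mean{}{\Bigl(\prod_{\ell \in J_{i,j}} x_\ell\Bigr)\prod_{\ell \in I} x_\ell} \;=\; \prod_{\ell \in I \setminus J_{i,j}} \mu_\ell, \qquad \mean{}{y} \;=\; \prod_{\ell \in I} \mu_\ell .
\]
The hypothesis $p_\ell \in (\xi,\tfrac12-\xi)\cup(\tfrac12+\xi,1-\xi)$ gives $|\mu_\ell|\in(2\xi,\,1-2\xi)$ for every $\ell$. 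I would then factor
\[
|c_{i,j}| - |\mean{}{y}| \;=\; \Bigl(\prod_{\ell \in I \setminus J_{i,j}} |\mu_\ell|\Bigr)\Bigl(1 - \prod_{\ell \in J_{i,j}} |\mu_\ell|\Bigr),
\]
and bound the two factors separately: the first is a product of $k-|J_{i,j}|$ terms each $\geq 2\xi$, hence $\geq (2\xi)^{k-|J_{i,j}|}$; the second is $\geq 1-(1-2\xi)=2\xi$ because $|J_{i,j}|\geq 1$ and each $|\mu_\ell|\leq 1-2\xi$. Multiplying and using $|J_{i,j}|\geq 1$ together with $2\xi<1$ yields $|c_{i,j}|-|\mean{}{y}| \geq (2\xi)^{k-|J_{i,j}|+1} \geq (2\xi)^k$.

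For the marginal bound, note that the coordinate $z_j$ in $\Gamma_{(i+1)\dots d}(\mathcal{D})$ is precisely $\prod_{\ell\in J_{i,j}} x_\ell$, so $\mean{}{z_j}=\prod_{\ell\in J_{i,j}}\mu_\ell$ and $\prob{}{z_j=1}=(1+\mean{}{z_j})/2$. Because $|J_{i,j}|\geq 1$ and each $|\mu_\ell|<1-2\xi$, one obtains $|\mean{}{z_j}|<1-2\xi$, giving $\prob{}{z_j=1}\in(\xi,1-\xi)$. The only step that needs real care is the factorization in the third paragraph together with the bound $|\mu_\ell|\leq 1-2\xi$ used to stop $1-\prod_{\ell\in J_{i,j}}|\mu_\ell|$ from collapsing; this is precisely where the upper endpoint on $p_\ell$ (bounding it \emph{away} from $0$ and $1$) plays its role, complementing the lower endpoint that keeps $|\mu_\ell|$ bounded away from $0$ so that the correlation does not vanish.
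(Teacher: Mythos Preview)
Your proposal is correct and follows essentially the same route as the paper: identify the gate output as the partial parity $\prod_{\ell\in J_{i,j}} x_\ell$, compute $c_{i,j}$ and $\mean{}{y}$ as products of the biases $\mu_\ell=2p_\ell-1$, factor $|c_{i,j}|-|\mean{}{y}|$ as $\bigl(\prod_{\ell\in I\setminus J_{i,j}}|\mu_\ell|\bigr)\bigl(1-\prod_{\ell\in J_{i,j}}|\mu_\ell|\bigr)$, and bound each factor using $|\mu_\ell|\in(2\xi,1-2\xi)$; the marginal bound is handled identically via $\prob{}{z_j=1}=\tfrac12(1+\prod_{\ell\in J_{i,j}}\mu_\ell)$. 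The only cosmetic difference is notation ($J_{i,j}$ versus the paper's $I'$).
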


The above lemma shows that every non-degenerate product distribution that is far enough from the uniform distribution, satisfies assumption \ref{asm:correlation_basic} with $\Delta = (2\xi)^k$.
Using the fact that at each layer, the output of each gate is an independent random variable (since the input distribution is a product distribution), we get that property \ref{prp:cond_bound} is satisfied with $\epsilon = \xi^2$.
This gives us the following result:

\begin{corollary}
Let $\mathcal{D}$ be a product distribution with $p_j \in (\xi, \frac{1}{2} -\xi) \cup (\frac{1}{2} + \xi, 1-\xi)$ for every $j$, with the target function being a ($\log n$)-Parity (i.e., $k= \log n$). Then, when running \algref{alg:bgd} as described in \thmref{thm:generative_convergence_exact}, with probability at least $1-\delta$ the algorithm returns the true target function $h_C$, with run-time and sample complexity polynomial in $n$.
\end{corollary}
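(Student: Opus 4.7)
The plan is to reduce the corollary to \thmref{thm:generative_convergence_exact} applied to the parity circuit $C$ of \lemref{lem:parity_expressivity}. I will verify that $\mathcal{D}$ together with $C$ satisfies Properties~\ref{prp:correlation}--\ref{prp:cond_bound} with parameters $\Delta=(2\xi)^{\log n}$ and $\epsilon=\xi^2$, and then check that both parameters are polynomially small in $n$.

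First, by \lemref{lem:parity_expressivity} the target $f$ equals $h_C$ for the circuit $C$, so $\mathcal{D}$ is separable by $h_C \in \mathcal{H}$, as \thmref{thm:generative_convergence_exact} requires. Property~\ref{prp:correlation} then follows immediately from \lemref{lem:parity_product_distribution}: for every influencing gate $(i,j)$ one has $\abs{c_{i,j}} - \abs{\mean{\mathcal{D}}{y}} \ge (2\xi)^k = (2\xi)^{\log n}$, while non-influencing gates are pinned to the constant $+1$ by our convention, which matches the other branch of the property.

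Next, I would establish Property~\ref{prp:cond_indep} using the multiplicative structure of parity under a product distribution. In $C$, every internal node $v$ computes $g_v = \prod_{l \in I_v} x_l$, where $I_v \subseteq I$ collects the relevant input bits lying in the subtree rooted at $v$. For a gate at position $(i-1,j)$, its two inputs $(z_{2j-1}, z_{2j})$ are products over disjoint index sets $I_{v_1}, I_{v_2}$ that partition $I_{v_{i-1,j}}$, while the label factors as $y = g_{v_{i-1,j}} \cdot \prod_{l \in I \setminus I_{v_{i-1,j}}} x_l$. Since the underlying distribution on $\mathcal{X}$ is a product distribution, the second factor depends only on original bits independent of those determining $(z_{2j-1}, z_{2j})$, so conditioning on the gate output $g_{v_{i-1,j}}$ decouples the gate inputs from the label. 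Property~\ref{prp:cond_bound} follows by a similar independence argument: bits at a fixed layer of $C$ are outputs of disjoint subtrees and hence mutually independent, and \lemref{lem:parity_product_distribution} forces the marginals of influencing bits into $(\xi, 1-\xi)$; non-influencing bits are deterministically $+1$ and contribute a factor of $1$. Consequently, every pattern $\vp \in \{\pm 1\}^2$ with positive probability has probability at least $\xi^2$, so $\epsilon = \xi^2$ is valid.

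Finally, I would substitute $\Delta = (2\xi)^{\log n} = n^{\log(2\xi)}$ and $\epsilon = \xi^2$ into the rates of \thmref{thm:generative_convergence_exact}. Because $\xi$ is a fixed constant with $2\xi < 1$, the exponent $-\log(2\xi)$ is a positive constant, so $\Delta^{-1}$ is a fixed polynomial in $n$ and $\epsilon^{-1}$ is a constant. The sample bound $\tfrac{2^{11}}{\epsilon^2\Delta^2}\log(\tfrac{8nd}{\delta})$ and the iteration count $\tfrac{6n}{\sqrt{2}\,\eta\,\epsilon\,\Delta}$ stated in the theorem are therefore both polynomial in $n$ (with $1/\eta = O(\log n)$), and the theorem delivers exact recovery of $h_C$ with probability at least $1-\delta$. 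The step that demands the most care, in my view, is the conditional-independence check for Property~\ref{prp:cond_indep}: it rests on the fact that parity decomposes multiplicatively across disjoint subtrees, so under a product distribution the scalar gate output already carries all of the subtree-to-label information; this would not hold for an arbitrary Boolean gate, where residual information about the inputs could in principle leak through the label.
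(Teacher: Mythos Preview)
Your proposal is correct and follows essentially the same route as the paper: verify Properties~\ref{prp:correlation}--\ref{prp:cond_bound} for the parity circuit under the given product distribution (using \lemref{lem:parity_expressivity} and \lemref{lem:parity_product_distribution}), obtain $\Delta=(2\xi)^{\log n}$ and $\epsilon=\xi^2$, and invoke \thmref{thm:generative_convergence_exact}. One small correction to your closing commentary: Property~\ref{prp:cond_indep} is \emph{not} special to parity's multiplicative structure --- the proof of \lemref{lem:prod_dist} shows it holds for \emph{any} tree-structured circuit over a product distribution, since the label is always a deterministic function of the gate output together with bits outside the subtree; your parity-specific argument is valid but unnecessarily narrow.
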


\subsection{Generative Models}
Next, we move beyond product distributions, and observe families of distributions given by a generative model.
We limit ourselves to circuits where each gate is chosen from the set $\{\wedge, \vee, \neg \wedge, \neg \vee\}$.
For every such circuit, we define a generative distribution as follows: we start by sampling a label for the example. Then iteratively, for every gate, we sample uniformly at random a pattern from all the pattern that give the correct output. For example, if the label is $1$ and the topmost gate is OR, we sample a pattern uniformly from $\{(1,1), (1,-1), (-1,1)\}$. The sampled pattern determines what should be the output of the second topmost layer. For every gate in this layer, we sample again a pattern that will result in the correct output. We continue in this fashion until reaching the bottom-most layer, which defines the observed example.
Formally, for a given gate $\Gamma \in \{\wedge, \vee, \neg \wedge, \neg \vee\}$,
we denote the following sets of patterns:
\[
S_{\Gamma} = \{\vv \in \{\pm 1\}^2 ~:~ \Gamma(v_1, v_2) = 1 \},~
S_{\Gamma}^c = \{\pm 1\}^2 \setminus S_{\Gamma}
\]
We recursively define
$\mathcal{D}^{(0)}, \dots, \mathcal{D}^{(d)}$, where $\mathcal{D}^{(i)}$
is a distribution over $\{\pm 1\}^{2^i} \times \{\pm 1\}$:
\begin{itemize}
\item $\D{0}$ is a distribution on $\{(1,1), (-1,-1)\}$ s.t. $\prob{\D{0}}{(1,1)} = \prob{\D{0}}{(-1,-1)} = \frac{1}{2}$.
\item To sample $(\vx, y) \sim \D{i}$, sample $(\vz,y) \sim \D{i-1}$.
Then, for all $j \in [2^{i-1}]$, if $z_j = 1$ sample
$\vx'_j \sim U(S_{\gamma_{i,j}})$, and otherwise sample
$\vx'_j \sim U(S^c_{\gamma_{i,j}})$. Set
$\vx = [\vx'_1, \dots, \vx'_{2^{i-1}}] \in \{\pm 1\}^{2^i}$, 
and return $(\vx, y)$.
\end{itemize}
Then we have the following results:
\begin{lemma}
\label{lem:correlation}
For every $i \in [d]$ and every $j \in [2^i]$, denote $c_{i,j} = \mean{(\vx,y) \sim \D{i}}{x_j y}$. Then we have:
\[
\abs{c_{i,j}} - \mean{}{y}
> \left( \frac{2}{3} \right)^d = n^{\log (2/3)}
\]
\end{lemma}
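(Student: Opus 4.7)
The plan is to exploit the top-down Markov structure of the generative model. Viewing the circuit as a complete binary tree with the label $y$ attached to the root and the input bits $x_1,\dots,x_n$ at the leaves, the sampling procedure defines a Markov chain on this tree: conditional on the output $z_U$ of an internal node $U$, the subtree rooted at $U$ is independent of everything above $U$. In particular, for any node $V$, the value $z_V$ is independent of $y$ given the output of its parent. An immediate first observation is that $\mean{}{y}=0$, since $\D{0}$ is uniform on $\{(1,1),(-1,-1)\}$, making the label a symmetric $\pm 1$ random variable.

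The key one-step computation is as follows. For each of the four admissible gates $\Gamma \in \{\wedge,\vee,\neg\wedge,\neg\vee\}$ at a node $U$ with children producing $x_1,x_2$, a direct enumeration of $S_\Gamma$ and $S_\Gamma^c$ shows that $\mean{}{x_\ell \mid z_U} = a_\Gamma + b_\Gamma\, z_U$ is affine in $z_U$, with $\abs{b_\Gamma} = 2/3$ in every case. For instance, for the OR gate, $\mean{}{x_1\mid z_U=1}=1/3$ and $\mean{}{x_1\mid z_U=-1}=-1$, yielding $a_\vee = -1/3$, $b_\vee = 2/3$; the AND, NOR, and NAND cases are analogous with the same slope magnitude. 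This one-step affine identity is the only probabilistic input the proof requires.

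Next, I would prove by induction on the depth $k$ of a node $V$ from the root that $\abs{\mean{}{z_V\, y}} = (2/3)^k$. The base case $k=0$ is the root, where $z_V=y$ and $\mean{}{y^2}=1$. For the inductive step, suppose $V$ is at depth $k$ with parent $U$ at depth $k-1$ connected through gate $\Gamma$. Using the Markov property via the tower rule, together with the affine identity and $\mean{}{y}=0$:
\begin{equation*}
\mean{}{z_V\, y} \;=\; \mean{}{\mean{}{z_V \mid z_U}\, y} \;=\; \mean{}{(a_\Gamma + b_\Gamma z_U)\, y} \;=\; a_\Gamma \mean{}{y} + b_\Gamma \mean{}{z_U\, y} \;=\; b_\Gamma\, \mean{}{z_U\, y}.
\end{equation*}
Taking absolute values and applying the induction hypothesis gives $\abs{\mean{}{z_V\, y}} = (2/3)(2/3)^{k-1}=(2/3)^k$.

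Finally, to match the statement, observe that the $j$-th coordinate of $\D{i}$ corresponds to a specific node in the tree---a gate output at some depth, or an input bit when $i=d$---lying at depth at most $d$ from the root. Thus $\abs{c_{i,j}} \ge (2/3)^d = n^{\log(2/3)}$, and since $\mean{}{y}=0$ the bound $\abs{c_{i,j}} - \mean{}{y} \ge (2/3)^d$ follows; the strict form claimed in the lemma is immediate for interior gates, whose depth is strictly less than $d$. The main obstacle is essentially bookkeeping: aligning the indexing of $\D{i}$ and $c_{i,j}$ with the corresponding tree depths, and verifying that the Markov property between a node and its parent does propagate as claimed; all the probabilistic content collapses to the single computation $\abs{b_\Gamma}=2/3$.
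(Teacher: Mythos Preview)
Your proposal is correct and follows essentially the same approach as the paper. The paper carries out the induction in terms of the conditional probabilities $p^{\pm}_{i,j}=\prob{}{x_j=1\mid y=\pm 1}$, showing $|p^{+}_{i,j}-p^{-}_{i,j}|=(2/3)^i$ via explicit case analysis over the four gate types, whereas you compute $\mean{}{z_V\mid z_U}=a_\Gamma+b_\Gamma z_U$ with $|b_\Gamma|=2/3$ and use the tower property together with $\mean{}{y}=0$ to get the same recursion $\mean{}{z_V y}=b_\Gamma\mean{}{z_U y}$; the substantive content is identical, and both arguments yield $|c_{i,j}|=(2/3)^i\ge (2/3)^d$ (the paper, like you, only obtains a non-strict inequality at $i=d$).
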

\begin{lemma}
\label{lem:gate_distribution}
For every $i \in [d]$ we have $\Gamma_i(\D{i}) = \D{i-1}$.
\end{lemma}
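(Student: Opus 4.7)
The plan is to prove the identity $\Gamma_i(\D{i}) = \D{i-1}$ directly from the recursive generative definition of $\D{i}$, without any induction or probabilistic estimates. The key observation is that the sampling procedure defining $\D{i}$ is built precisely to invert $\Gamma_i$ in distribution: at each step $j \in [2^{i-1}]$ we draw the pair $\vx'_j$ from the fiber $\gamma_{i,j}^{-1}(z_j)$ (that is, from $S_{\gamma_{i,j}}$ or $S_{\gamma_{i,j}}^c$ depending on the sign of $z_j$). Thus the relation between a sample from $\D{i}$ and its ``parent'' sample from $\D{i-1}$ is deterministic once we apply $\Gamma_i$.

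Concretely, I would take $(\vx, y) \sim \D{i}$ and expand it via the generative definition: let $(\vz, y) \sim \D{i-1}$ be the sample used to generate it, and let $\vx'_j$ be the pair drawn for coordinate $j$. Since $\vx'_j \in S_{\gamma_{i,j}}$ when $z_j = 1$ and $\vx'_j \in S_{\gamma_{i,j}}^c$ when $z_j = -1$, the definition of $S_{\gamma_{i,j}}$ as $\gamma_{i,j}^{-1}(1)$ immediately yields $\gamma_{i,j}(\vx'_j) = z_j$ almost surely. Concatenating over $j$ and using the coordinate description
\[
\Gamma_i(\vx) = \bigl(\gamma_{i,1}(x_1, x_2), \dots, \gamma_{i, 2^{i-1}}(x_{2^i - 1}, x_{2^i})\bigr),
\]
gives $\Gamma_i(\vx) = \vz$ almost surely, while the label $y$ is passed through unchanged by the sampling. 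Hence the joint law of $(\Gamma_i(\vx), y)$ coincides with that of $(\vz, y)$, which is $\D{i-1}$ by construction.

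There is essentially no technical obstacle here, and no induction is needed: the statement is a one-step consistency check between the recursive definition of $\D{i}$ and the action of $\Gamma_i$. The only thing to be mildly careful about is bookkeeping of indices — making sure that the gate $\gamma_{i,j}$ used to sample coordinate $j$ of $\vx$ is the same gate that $\Gamma_i$ applies at position $j$ — and that the second marginal (the label) is preserved in both the sampling rule and the pushforward by $\Gamma_i$, which acts only on the $\vx$-component. Once these are lined up, the identity follows in a single line.
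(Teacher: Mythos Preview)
Your proposal is correct and follows essentially the same argument as the paper: both rely on the observation that the generative step draws each pair $\vx'_j$ from the fiber of the relevant gate over $z_j$, so applying the gate recovers $z_j$ deterministically, and the label is passed through unchanged. The paper phrases this as an equality of point-mass probabilities for an arbitrary $(\vz',y')$, while you phrase it as an almost-sure equality $(\Gamma_i(\vx),y)=(\vz,y)$ on the joint sample space; these are the same proof. One small bookkeeping note: your displayed formula for $\Gamma_i$ uses gates $\gamma_{i,j}$, whereas the paper's definition of $\Gamma_i$ uses $\gamma_{i-1,j}$; you already flagged this index matching as the only thing to watch, so just make sure you align with whichever convention the paper ultimately intends when you write it out.
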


Notice that from \lemref{lem:correlation}, the distribution $\D{d}$ satisfies property \ref{prp:correlation} with $\Delta = n^{\log (2/3)}$ (note that since we restrict the gates to AND/OR/NOT, all gates have influence).
By its construction, the distribution also satisfies property \ref{prp:cond_indep}, and it satisfies property \ref{prp:cond_bound} with $\epsilon = \left(\frac{1}{4}\right)^d = \frac{1}{n^2}$. Therefore, we can apply \thmref{thm:generative_convergence_exact} on the distribution $\D{d}$, and get that \algref{alg:bgd} learns the circuit $C$ \textit{exactly} in polynomial time.
This leads to the following corollary:
\begin{corollary}
\label{crl:and_or_circuits}
With the assumptions and notations of \thmref{thm:generative_convergence_exact},
for every circuit $C$ with gates in $\{\wedge, \vee, \neg \wedge, \neg \vee\}$, there exists a distribution $\mathcal{D}$
such that when running \algref{alg:bgd} on a sample from $\mathcal{D}$, the algorithm returns $h_C$ with probability $1-\delta$, in polynomial run-time and sample complexity.
\end{corollary}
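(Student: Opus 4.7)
The plan is to instantiate \thmref{thm:generative_convergence_exact} on the generative distribution $\mathcal{D} = \D{d}$ associated with the given circuit $C$, with parameters $\Delta = (2/3)^d = n^{\log(2/3)}$ and $\epsilon = (1/4)^d = 1/n^2$. Since $d = \log n$, both quantities decay only polynomially in $n$, so the sample complexity $\frac{2^{11}}{\epsilon^2 \Delta^2}\log(8nd/\delta)$ and iteration bound $T > \frac{6n}{\sqrt{2}\eta\epsilon\Delta}$ from the theorem are polynomial in $n$. There are three things to check: (i) $\mathcal{D}$ is separable by $h_C$, (ii) properties \ref{prp:correlation}--\ref{prp:cond_bound} hold with the claimed $\Delta, \epsilon$, and (iii) zero population loss translates into exact recovery $\mathcal{N}_0 \equiv h_C$ rather than merely agreement on the support.

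Separability is immediate because the generative process first draws $y$ and then builds $\vx$ gate-by-gate to be consistent with it. \lemref{lem:gate_distribution} identifies the per-layer distributions appearing in properties \ref{prp:correlation}--\ref{prp:cond_bound} with the $\D{i}$'s from the construction. Property \ref{prp:correlation} then follows from \lemref{lem:correlation}: every gate in $\{\wedge,\vee,\neg\wedge,\neg\vee\}$ is influencing, so the zero-influence branch is vacuous, and the symmetric root distribution gives $\E[y]=0$, so the bound $\abs{c_{i,j}} > \E[y] + \Delta$ is exactly what the lemma provides with $\Delta = (2/3)^d$. Property \ref{prp:cond_indep} is essentially built into the construction: conditionally on the output of gate $(i-1,j)$, the input pattern is drawn uniformly from $S_{\gamma_{i-1,j}}$ or its complement, with no further reference to $y$, so the joint density factors as required. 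For property \ref{prp:cond_bound} I would run a short induction on depth to show that the marginal probability of any supported pattern at level $i$ is at least $(1/4)^i$, using that each layer contributes a factor of at least $1/4$ (the smallest conditional probability in any $S_\gamma$ or $S_\gamma^c$ is $1/3$, and the least likely output bit of any single gate has probability at least $1/4$).

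With the three properties verified, \thmref{thm:generative_convergence_exact} yields, with probability $1-\delta$, a network $\mathcal{N}_0$ with $\Pr_{(\vx,y)\sim\mathcal{D}}[\mathcal{N}_0(\vx) \ne y] = 0$. To upgrade this to pointwise equality, I would observe that since $S_\gamma \cup S_\gamma^c = \{\pm 1\}^2$ at every gate, every $\vx \in \{\pm 1\}^n$ is realizable by the sampling process and therefore receives strictly positive mass under $\mathcal{D}$; zero disagreement probability then forces $\mathcal{N}_0(\vx) = h_C(\vx)$ identically. The only step that is not immediate from the earlier lemmas is the inductive lower bound underlying property \ref{prp:cond_bound}, which I expect to be the only place requiring explicit bookkeeping; otherwise the corollary is a direct reduction to \thmref{thm:generative_convergence_exact}.
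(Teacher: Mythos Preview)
Your proposal is correct and follows essentially the same route as the paper: take $\mathcal{D}=\D{d}$, verify Properties~\ref{prp:correlation}--\ref{prp:cond_bound} via \lemref{lem:correlation} and \lemref{lem:gate_distribution} with $\Delta=(2/3)^d$ and $\epsilon=(1/4)^d$, and invoke \thmref{thm:generative_convergence_exact}. The paper simply asserts the $\epsilon=(1/4)^d$ bound and the exact-recovery conclusion without further argument; your observation that $\E[y]=0$, your inductive plan for Property~\ref{prp:cond_bound}, and your full-support argument (every $\vx$ is reachable by the generative process with $y=h_C(\vx)$, hence zero error probability forces $\mathcal{N}_0\equiv h_C$) are the natural ways to fill those omissions.
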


Note that the fact that for every circuit there exists a distribution that can be learned in the PAC setting is trivial: simply take a distribution that is concentrated on a single positive example, and approximating the target function on such distribution is achieved by a classifier that always returns a positive prediction.
However, showing that there exists a distribution on which \algref{alg:bgd} \textit{exactly} recovers the circuit, is certainly non-trivial.

\section{Depth Separation}
\label{sec:depth_separation}
In \secref{sec:main_results} we showed a gradient-based algorithm that
efficiently learns a network of depth $d$, which recovers functions
from $\mathcal{H}$ under some distributional assumption. We now show
that the family $\mathcal{H}$ of tree-structured Boolean circuits
contains functions that cannot be expressed by shallow (depth-two)
neural-networks with bounded weights, unless an exponential number of neurons is used. At the same time, from what we previously showed, these functions can be expressed by a $(\log n)$-depth
locally-connected network of moderate size, and can be learned efficiently using \algref{alg:bgd} (under some distribution). 

We start by introducing the notion of \textit{sign-rank} of a binary
matrix.  Let $\mA \in \{\pm 1\}^{m \times m}$ be some matrix with
entries in $\{\pm 1\}$.  The \textit{sign-rank} of $\mA$, denoted
$sr(\mA)$, is the least rank of a real matrix
$\mB \in \reals^{m \times m}$ such that $\sign \mB_{i,j} = \mA_{i,j}$.
Now, let $n' = \frac{n}{2}$ and let $\mathcal{X}' = \{\pm
1\}^{n'}$. For some
$f: \mathcal{X}' \times \mathcal{X}' \to \{\pm 1\}$ we denote the
matrix
$\mM = [f(\vx,\vy)]_{\vx,\vy \in \mathcal{X}'} \in \{\pm 1\}^{2^{n'}
  \times 2^{n'}}$.  Studying the \textit{sign-rank} of different functions has
attracted attention over the years, due to various implications in
communication complexity, circuit complexity and learning theory.  The
following result shows a function with \textit{sign-rank} that grows
exponentially with the dimension:
\begin{theorem} \citep{razborov2010sign}
\label{thm:sign_rank_ac0}
Let $m = \sqrt[3]{n'}$ and 
$f_m(\vx,\vy) = \wedge_{i=1}^m \vee_{j=1}^{m^2} (x_{ij} \wedge y_{ij})$.
Then $\mM = [f_m(x,y)]_{x,y}$ has $sr(\mM) = 2^{\Omega(m)}$.
\end{theorem}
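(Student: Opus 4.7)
The plan is to follow the pattern-matrix framework of Sherstov, which lifts approximate-degree lower bounds for a Boolean function to sign-rank lower bounds for the associated matrix. The starting observation is that $f_m$ is, up to a standard reparameterization, the pattern matrix of the Minsky-Papert function $\mathrm{MP}_m(\vz) = \bigwedge_{i=1}^m \bigvee_{j=1}^{m^2} z_{ij}$ on $m^3$ bits: for each block $(i,j)$, the pair $(x_{ij}, y_{ij})$ acts as a two-bit selector gadget that realizes a restriction of the corresponding variable $z_{ij}$, so $\mM \in \{\pm 1\}^{2^{n'} \times 2^{n'}}$ inherits the combinatorial structure of $\mathrm{MP}_m$.

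The second ingredient is the approximate-degree lower bound $\widetilde{\deg}_{1/3}(\mathrm{MP}_m) = \Omega(m)$. The inner $\bigvee$ on $m^2$ bits has approximate degree $\Theta(m)$ by Nisan--Szegedy, and this is preserved under composition with the outer $\bigwedge_m$ via the method of dual block composition (Sherstov, later sharpened by Bun--Thaler). Equivalently, there is a dual witness $\psi : \{\pm 1\}^{m^3} \to \reals$ with $\|\psi\|_1 = 1$, correlation $\langle \psi, \mathrm{MP}_m \rangle \geq 2/3$, and $\widehat{\psi}(S) = 0$ for every $S$ with $|S| < cm$ (for an absolute constant $c > 0$).

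The third step is Sherstov's pattern-matrix theorem: if $g : \{\pm 1\}^N \to \{\pm 1\}$ admits a dual witness with pure degree $d$ and advantage $\Omega(1)$, then the pattern matrix of $g$ has sign-rank $2^{\Omega(d)}$. The lifting combines the dual witness with Forster's bound $sr(\mA) \geq \sqrt{N_r N_c}/\|\mA\|_{\mathrm{spec}}$: one shows that any real matrix sign-agreeing with $\mM$ and of low rank would correspond, on the pattern structure, to a real-valued function whose Fourier mass is supported on low-degree characters, contradicting the orthogonality of $\psi$ to all such characters. Instantiating with $g = \mathrm{MP}_m$ and $d = \Omega(m)$ yields $sr(\mM) = 2^{\Omega(m)}$.

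The main technical obstacle is the construction of the dual witness for $\mathrm{MP}_m$ with simultaneously large pure degree and constant advantage: one must compose a dual witness for the inner OR with one for the outer AND so that the degrees multiply but the $\ell_1$-normalized advantage does not collapse. This dual block composition lemma is the heart of the Razborov--Sherstov argument; once the witness is in hand, the pattern-matrix lift and the Forster-style calculation are comparatively routine.
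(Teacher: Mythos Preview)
The paper does not prove this theorem; it is quoted as a black box from Razborov and Sherstov (2010), so there is no in-paper argument to compare against.

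Your sketch does follow the architecture of the original Razborov--Sherstov proof (a dual polynomial for the Minsky--Papert function, a pattern-matrix lift, and a Forster-type spectral bound), but it conflates two distinct dual objects. The witness you describe, with $\|\psi\|_1=1$, $\langle\psi,\mathrm{MP}_m\rangle\ge 2/3$, and $\hat\psi(S)=0$ for $|S|<cm$, certifies \emph{approximate} degree $\Omega(m)$; via the pattern-matrix method that yields discrepancy and randomized-communication lower bounds, not sign-rank. For sign-rank one needs a witness for \emph{smooth threshold} degree: a $\psi$ with $\psi(x)\,\mathrm{MP}_m(x)\ge 0$ pointwise (so that under the distribution $|\psi|$ no low-degree polynomial sign-represents $\mathrm{MP}_m$) together with $\|\psi\|_\infty\le 2^{-\Omega(m)}\|\psi\|_1$, the smoothness being precisely what the generalized Forster inequality consumes. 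The Bun--Thaler dual block composition you invoke is an approximate-degree tool and does not by itself produce such a smooth sign-agreeing dual; Razborov and Sherstov build it for $\mathrm{MP}_m$ by a direct, problem-specific construction, and that construction is the real technical core. A smaller point: $f_m(\vx,\vy)=\mathrm{MP}_m(\vx\wedge\vy)$ is not literally a pattern matrix in Sherstov's sense (columns are not indexed by a selector--shift pair); the original paper handles this by exhibiting a genuine pattern matrix of $\mathrm{MP}_m$ as a submatrix, which preserves sign-rank lower bounds.
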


Following an argument similar to \cite{forster2001relations}, we show that the sign-rank of a neural-network with bounded integer weights is only polynomial in the dimension $n$.
Using this, we show that $f_m(\vx,\vy)$ cannot be implemented by a depth-two neural network of polynomial size:

\begin{theorem}
\label{thm:depth_separation}
Let $g(\vx,\vy) = \sum_{i=1}^k u_i \sigma(\inner{\vw_i,\vx} + \inner{\vv_i, \vy} + b_i)$ for $\sigma(x) = \max \{x,0\}$ (ReLU) and bounded weights $\vw_i, \vv_i \in [-B,B]^{n'}$, bias $b_i \in [-B,B]$, and bounded second layer $u_i \in [-B,B]$.
If there exists a margin $\gamma$ such that $g(\vx,\vy) \cdot f_m(x,y) \ge \gamma$ for all $\vx,\vy \in \mathcal{X}$, then $k \ge 2^{\Omega(m)}$.
\end{theorem}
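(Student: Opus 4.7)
The plan is to upper bound the sign-rank of the matrix $\mM = [f_m(\vx,\vy)]_{\vx,\vy}$ polynomially in $k, B, n', 1/\gamma$, and then combine with the exponential lower bound of \thmref{thm:sign_rank_ac0} to force $k \ge 2^{\Omega(m)}$. The strategy is to replace $g$ by a polynomial surrogate $\tilde g$ whose matrix has provably low rank and whose entrywise sign still equals $f_m$; this works because, by the definition of sign-rank, any real matrix $\mB$ with $\mathrm{sign}(\mB)=f_m$ satisfies $sr(\mM)\le \mathrm{rank}(\mB)$.

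First, since $\vx,\vy\in\{\pm1\}^{n'}$ and all weights and biases are bounded by $B$, each pre-activation $\ell_i(\vx,\vy):=\inner{\vw_i,\vx}+\inner{\vv_i,\vy}+b_i$ lies in $[-M,M]$ with $M=(2n'+1)B$. By Bernstein's theorem on best polynomial approximation of $|t|$ (equivalently of $\sigma$) on a compact interval, there exists a univariate polynomial $q$ of degree $p=O(M/\epsilon)$ with $\sup_{t\in[-M,M]}|q(t)-\sigma(t)|\le\epsilon$. I would set $\epsilon=\gamma/(2kB)$ and define
\[
\tilde g(\vx,\vy) := \sum_{i=1}^{k} u_i\, q\bigl(\ell_i(\vx,\vy)\bigr).
\]
Then $|\tilde g-g|\le kB\epsilon\le\gamma/2$ pointwise, which combined with the margin hypothesis $g\cdot f_m\ge\gamma$ gives $\mathrm{sign}(\tilde g(\vx,\vy))=f_m(\vx,\vy)$ for every $(\vx,\vy)$.

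Now comes the key structural observation. Since $\ell_i(\vx,\vy)=a_i(\vx)+c_i(\vy)$ with $a_i(\vx)=\inner{\vw_i,\vx}$ and $c_i(\vy)=\inner{\vv_i,\vy}+b_i$, the binomial theorem yields
\[
q\bigl(a_i(\vx)+c_i(\vy)\bigr)=\sum_{j=0}^{p}\alpha_j\sum_{l=0}^{j}\binom{j}{l}a_i(\vx)^l c_i(\vy)^{j-l}=\sum_{l=0}^{p}a_i(\vx)^l\cdot\tilde q_{i,l}\bigl(c_i(\vy)\bigr),
\]
so $q(\ell_i)$ is a bilinear form in the feature pair $\phi_i(\vx)=(1,a_i,\ldots,a_i^p)$ and $\psi_i(\vy)=(\tilde q_{i,0}(c_i),\ldots,\tilde q_{i,p}(c_i))$, both mapping into $\reals^{p+1}$. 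Summing over the $k$ units yields $\tilde g(\vx,\vy)=\inner{\Phi(\vx),\Psi(\vy)}$ for some $\Phi,\Psi:\{\pm 1\}^{n'}\to\reals^{k(p+1)}$, hence $\mathrm{rank}([\tilde g(\vx,\vy)]_{\vx,\vy})\le k(p+1)$. Therefore $sr(\mM)\le k(p+1)=O(k^2 B^2 n'/\gamma)$, which combined with $sr(\mM)=2^{\Omega(m)}$ from \thmref{thm:sign_rank_ac0} gives $k\ge 2^{\Omega(m)}$, with polynomial factors in $B,n',1/\gamma$ absorbed into the exponent's constant since $m=\sqrt[3]{n'}$ dominates them.

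The main obstacle is the approximation rate for ReLU: because of the kink at $0$, Bernstein forces $p=\Theta(M/\epsilon)$ rather than $\mathrm{polylog}(1/\epsilon)$, so the rank bound ends up quadratic in $k$ and polynomial in $n'$. The crucial saving that makes the whole argument succeed is the binomial-expansion step above: a univariate polynomial of a \emph{sum} $a(\vx)+c(\vy)$ has matrix rank at most $p+1$, not the $\binom{n'}{\le p}$ one would get by naively counting Boolean monomials of degree $\le p$ in the joint variable $(\vx,\vy)$. This is what keeps the sign-rank upper bound polynomial in $n'$, so that Razborov's exponential lower bound then pushes $k$ to be exponential in $m$.
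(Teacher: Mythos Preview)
Your argument is correct and, in fact, yields a slightly tighter sign-rank upper bound than the paper's ($O(k^2B^2n'/\gamma)$ versus the paper's $O(k^2B^2(n')^2/\gamma)$), but it proceeds by a genuinely different mechanism. The paper does not approximate $\sigma$ by a polynomial. Instead it \emph{discretizes the weights}: it rounds each $\vw_i,\vv_i,b_i$ to the nearest multiple of $1/N$ with $N=\lceil 2Bk(2n'+1)/\gamma\rceil$, uses the $1$-Lipschitzness of ReLU to show the resulting network $\hat g$ still has $\mathrm{sign}\,\hat g=f_m$, and then invokes \lemref{lem:integer_weights}. That lemma observes that with integer weights bounded by $B'$, the row quantity $\inner{\vw_i,\vx}$ takes at most $O(B'n')$ distinct integer values, and likewise for columns, so each single-neuron matrix is block-constant of rank $O(B'n')$; subadditivity of rank finishes.

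Both proofs hinge on the same structural fact you highlighted --- that the pre-activation splits as $a_i(\vx)+c_i(\vy)$ --- but exploit it differently: you expand a degree-$p$ polynomial of a sum via the binomial theorem to get rank $\le p{+}1$, while the paper quantizes so that $a_i,c_i$ each have a bounded finite range and reads off the rank from a block structure. Your route is a bit more analytic (it needs Bernstein's $\Theta(1/p)$ rate for $|t|$, which is exactly what forces $p=\Theta(M/\epsilon)$), whereas the paper's rounding argument is completely elementary and activation-agnostic: \lemref{lem:integer_weights} never uses that $\sigma$ is ReLU, only that the pre-activations are integers in a bounded range. On the other hand, your polynomial-replacement step would extend with no change to activations other than ReLU provided one has a comparable Jackson/Bernstein-type approximation rate, and it avoids the extra factor of $n'$ that the paper incurs when counting integer values of $\inner{\vw_i,\vx}$.
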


Now, it is immediate that the function $f_m$ can be calculated using a tree-structured Boolean circuits with AND/OR gates. In \crlref{crl:and_or_circuits}, we showed that for any such circuit there exists a distribution for which \algref{alg:bgd} exactly recovers the circuit. Therefore, we get the following:

\begin{corollary}
There exists a family $\mathcal{F}$ of distributions over $\mathcal{X} \times \mathcal{Y}$, such that:
\begin{enumerate}
\item For any $\mathcal{D} \in \mathcal{F}$, \algref{alg:bgd} returns a function which separates $\mathcal{D}$ with margin $1$ w.h.p., with polynomial runtime and sample-complexity.
\item There exists $\mathcal{D} \in \mathcal{F}$ such that any depth-two bounded-weight ReLU network which separates $\mathcal{D}$ with a constant margin $\gamma$, has size $2^{\Omega(n)}$.
\end{enumerate}
\end{corollary}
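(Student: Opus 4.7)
The plan is to take $\mathcal{F}$ to be precisely the family of generative distributions constructed in Section~\ref{sec:distributions}: for each tree-structured Boolean circuit $C$ whose gates lie in $\{\wedge,\vee,\neg\wedge,\neg\vee\}$, let $\mathcal{D}_C$ denote the distribution $\D{d}$ built recursively there, and set $\mathcal{F} = \{\mathcal{D}_C\}$. Part~(1) is then an immediate restatement of Corollary~\ref{crl:and_or_circuits}: for every $\mathcal{D}_C \in \mathcal{F}$, Algorithm~\ref{alg:bgd} returns $h_C$ with probability $1-\delta$ in $\mathrm{poly}(n)$ time and sample complexity, and since $\mathcal{D}_C$ is deterministically labeled by $h_C$, the saturated hard-tanh output equals $y$ on every example, giving margin $1$.

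For part~(2), let $n' = n/2$, $m = \sqrt[3]{n'}$, and consider the Razborov function $f_m(\vx,\vy) = \bigwedge_{i=1}^{m}\bigvee_{j=1}^{m^2}(x_{ij} \wedge y_{ij})$ appearing in Theorem~\ref{thm:sign_rank_ac0}. I would first rewrite the wide-fan-in $\wedge$ and $\vee$ operators as balanced binary trees of $2$-input AND/OR gates; since $n = 2 m \cdot m^2$ and the total depth is $1 + 2\log m + \log m = \log n$, the construction yields a circuit $C^\star \in \mathcal{H}$ whose gates lie in $\{\wedge,\vee\}$ and whose computed function is $f_m$ (with minor padding by constant inputs if $m$ is not a power of two). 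The associated generative distribution $\mathcal{D}^\star := \mathcal{D}_{C^\star}$ therefore belongs to $\mathcal{F}$.

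It then suffices to show that any depth-two bounded-weight ReLU network $g$ separating $\mathcal{D}^\star$ with a constant margin $\gamma$ must have width $k \ge 2^{\Omega(m)}$, which is exponential in $n^{1/3}$. The key intermediate step, and the principal obstacle, is to upgrade a margin guarantee on the support of $\mathcal{D}^\star$ to one on the entire cube $\{\pm 1\}^n$: I would verify that $\mathcal{D}^\star$ has full support by observing that for $\Gamma \in \{\wedge,\vee\}$ both $S_\Gamma$ and $S_\Gamma^c$ are nonempty and jointly cover $\{\pm 1\}^2$, so the recursive sampling assigns strictly positive mass to every input. Consequently $g(\vx,\vy) \cdot f_m(\vx,\vy) \ge \gamma$ for every $(\vx,\vy) \in \{\pm 1\}^{n'}\times\{\pm 1\}^{n'}$, and Theorem~\ref{thm:depth_separation} yields the desired exponential lower bound on $k$, completing the corollary.
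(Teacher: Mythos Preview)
Your proposal is correct and follows exactly the route the paper sketches in the paragraph preceding the corollary: take $\mathcal{F}$ to be the generative distributions of Section~\ref{sec:distributions}, invoke Corollary~\ref{crl:and_or_circuits} for part~(1), realize $f_m$ as a tree-structured AND/OR circuit, and invoke Theorem~\ref{thm:depth_separation} for part~(2). The paper does not give a formal proof beyond that paragraph.

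Your full-support argument is a genuine contribution the paper omits: Theorem~\ref{thm:depth_separation} requires $g(\vx,\vy)\cdot f_m(\vx,\vy)\ge\gamma$ for \emph{every} $(\vx,\vy)\in\{\pm1\}^{n'}\times\{\pm1\}^{n'}$, whereas the corollary only assumes $g$ separates $\mathcal{D}^\star$ with margin $\gamma$; your observation that $S_\Gamma$ and $S_\Gamma^c$ are both nonempty for $\Gamma\in\{\wedge,\vee\}$, hence the recursive sampler assigns positive mass to every $\vx\in\{\pm1\}^n$, is exactly what is needed to close this gap. One small remark: the bound you actually derive is $k\ge 2^{\Omega(m)}=2^{\Omega(n^{1/3})}$, which is what Theorem~\ref{thm:depth_separation} yields; the $2^{\Omega(n)}$ in the corollary's statement appears to be looser phrasing on the paper's part rather than something your argument is missing.
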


To the best of our knowledge, this is the first result that shows a family of distributions with both a positive result on efficient learnability using a deep network, and a negative result on efficient expressivity using a shallow network.

\section{Discussion}
In this paper we suggested the property of \textit{local corrleation} (LCA) as a possible candidate for differentiating between hard and easy distributions. We showed that on the task of learning tree-structured Boolean circuits, the existence of \textit{local correlations} between the gates and the target label allows layerwise gradient-descent to learn the target circuit. Furthermore, we showed specific tasks and distributions which satisfy the LCA. 
Admittedly, as the primary focus of this paper is on theoretical
analysis, the distributions we study are synthetic in nature. It is
interesting to see if LCA holds, to some extent, on natural
distributions as well. While a rigorous study of this question is out
of the scope of this paper, we have performed the following simple
experiment: we trained a network with two hidden-layers on a
\textit{single} random patch from images in the ImageNet dataset. We
observed that even on a complex task such as ImageNet, a network that
gets only a $3\times 3$ patch as an input, achieves $2.6\%$ top-5
accuracy --- much better than a random guess ($0.5\%$ top-5
accuracy). The full results of the experiment are detailed in the
appendix. This experiment gives some hope that LCA may be a relevant
property for studying the success of deep learning in practice.

Our results raise a few open questions, which we leave for future
work. The most immediate research problem is showing similar results
for more general structures of Boolean circuit, and on a wider range
of distributions (beyond product distributions or generative
models). More generally, we suggest that LCA may be important in a broader context, beyond
Boolean circuits. For example, examining whether an equivalent
property exists when the target function is a convolutional network is
an extremely interesting open problem. Needless to say, finding other properties of natural distribution that determine whether gradient-based algorithms succeed or fail is another promising research direction.

\paragraph{Acknowledgements:} This research is supported by the European Research Council (TheoryDL project).

\newpage

\bibliography{circuits}
\bibliographystyle{iclr2020_conference}

\newpage
\appendix

\section{Experiments}
Figure \ref{fig:imagenet} details the results of the ImageNet experiment discussed in the introduction.
\begin{figure}[H]
\begin{center}
\begin{tikzpicture}

\definecolor{color1}{rgb}{0.203921568627451,0.541176470588235,0.741176470588235}
\definecolor{color0}{rgb}{0.886274509803922,0.290196078431373,0.2}
\definecolor{color3}{rgb}{0.984313725490196,0.756862745098039,0.368627450980392}
\definecolor{color2}{rgb}{0.596078431372549,0.556862745098039,0.835294117647059}

\begin{axis}[
axis background/.style={fill=white!89.80392156862746!black},
axis line style={white},
height=5cm,
width=6cm,
legend cell align={left},
legend entries={{Top-5 (random)},{Top-5 (single-patch)},{Top-1 (random)},{Top-1 (single-patch)}},
legend pos=outer north east,
legend style={draw=white!80.0!black, fill=white!89.80392156862746!black},
tick align=outside,
tick pos=left,
x grid style={white},
xlabel={patch size (px)},
xmajorgrids,
xmin=2, xmax=10,
xtick={3, 5, 7, 9},
y grid style={white},
ylabel={accuracy (\%)},
ymajorgrids,
ymin=0, ymax=3.0
]
\addlegendimage{no markers, color0, dashed}
\addlegendimage{no markers, color0}
\addlegendimage{no markers, color1, dashed}
\addlegendimage{no markers, color1}
\addplot [thick, color0, dashed]
table [row sep=\\]{%
3	0.5 \\
5	0.5 \\
7	0.5 \\
9	0.5 \\
};
\addplot [semithick, color0, mark=*, mark size=3, mark options={solid}]
table [row sep=\\]{%
3	2.6 \\
5	2.56 \\
7	2.26 \\
9	2.08 \\
};
\addplot [thick, color1, dashed]
table [row sep=\\]{%
3	0.1 \\
5	0.1 \\
7	0.1 \\
9	0.1 \\
};
\addplot [semithick, color1, mark=*, mark size=3, mark options={solid}]
table [row sep=\\]{%
3	0.63 \\
5	0.62 \\
7	0.55 \\
9	0.49 \\
};
\end{axis}

\end{tikzpicture}
\end{center}
\caption{Training a ReLU neural network, with two hidden-layers of size 512, on a single patch of size $k \times k$ from the ImageNet data. The patch is randomly chosen from inside the image. We train the networks with Adam, with batch size of $50$, for $10$k iterations.} \label{fig:imagenet}
\end{figure}
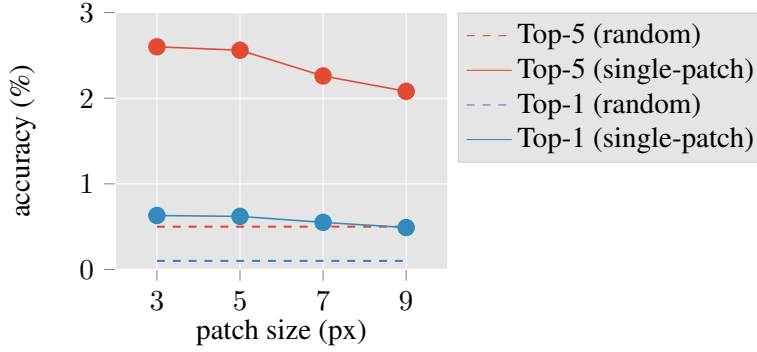

\section{Proof of \thmref{thm:generative_convergence_exact}}
To prove \thmref{thm:generative_convergence_exact}, we observe the behavior of the algorithm on the $i$-th layer.
Let $\psi : \{\pm 1\}^{n_i} \to \{\pm 1\}^{n_i}$ be some mapping such that
$\psi(\vx) = (\xi_1 \cdot x_1, \dots, \xi_{n_i} \cdot x_{n_i})$ for
$\xi_1, \dots, \xi_{n_i} \in \{\pm 1\}$.
We also define $\varphi_i : \{\pm 1\}^{n_i/2} \to \{\pm 1\}^{n_i/2}$ such that:
\[
\varphi_i(\vz) = (\nu_1 z_1, \dots, \nu_{n_i/2} z_{n_i/2})
\]
where $\nu_j := \begin{cases}
\sign(c_{i-1, j}) & c_{i-1,j} \ne 0 \\
1 & \mathcal{I}_{i-1,j} = 0
\end{cases}$

We can ignore examples that appear with probability zero. For this, we define the support of $\mathcal{D}$ by $\mathcal{X}' = \{\vx' \in \mathcal{X} ~:~ \prob{(\vx, y) \sim \mathcal{D}}{\vx = \vx'} > 0\}$.

We have the following important result, which we prove in the sequel:
\begin{lemma}
\label{lem:single_block}
Assume we initialize $\vw_l^{(0)}$ such that $\norm{\vw_l^{(0)}} \le \frac{1}{4k}$.
Fix $\delta > 0$. Assume we sample $S \sim \mathcal{D}$, with $\abs{S} > \frac{2^{11}}{\epsilon^2 \Delta^2} \log(\frac{8n_i}{\delta})$. 
Assume that $k \ge \log^{-1}(\frac{4}{3}) \log(\frac{8n_i}{\delta})$,
and that $\eta \le \frac{n_i}{16\sqrt{2}k}$.
Let $\Psi : \mathcal{X} \to [-1,1]^{n_i/2}$
such that for every $\vx \in \mathcal{X}'$
we have $\Psi(\vx) = \psi \circ \Gamma_{(i+1) \dots d}(\vx)$ for some $\psi$ as defined above.
Assume we perform the following updates:
\[
\mW^{(i)}_t \leftarrow \mW^{(i)}_{t-1} - \eta \frac{\partial}{\partial \mW^{(i)}_{t-1}} L_{\Psi(S)}(P(B_{\mW^{(i)}_{t-1}, \mV^{(i)}_0}))
\]
Then with probability at least $1-\delta$,
for $t > \frac{6n_i}{ \sqrt{2} \eta \epsilon \Delta}$ we have:
$B_{\mW^{(i)}_{t}, \mV^{(i)}_0}(\vx) = \varphi_i \circ \Gamma_i \circ \psi(\vx)$ for every $\vx \in \mathcal{X}'$.
\end{lemma}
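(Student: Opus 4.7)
}
The strategy is to reduce the analysis of the block $B$ to an independent per-gate analysis, showing that for each $j$ the $j$-th neural-gate converges to computing the target map $\vp \mapsto \sign(c_{i-1,j})\,\gamma_{i-1,j}(\vp)$ on each of the (at most four) patterns $\vp$ appearing in the support. The key observation is that under the initialization bound $\norm{\mW^{(i)}_0}_{\max} \le 1/(4\sqrt{2}k)$ and the step size $\eta \le 1/(16\sqrt{2}k)$, each neural-gate's preactivation stays bounded by $\sqrt{2}/4$ in absolute value, so the hard-tanh $\phi$ acts as the identity and the pooled prediction $\hat{y} = P(B(\Psi(\vx)))$ lies strictly inside $(-1,1)$. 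In this ``linear regime'' the hinge is always active, $\abs{1-\hat{y}} = 1-\hat{y}$, and $\partial L/\partial \hat{y} = -y-\lambda$ deterministically; I would inductively check that the regime is preserved throughout the $T$ iterations until the final saturation.

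In this regime the population gradient with respect to $\vw_l^{(j)}$ of the $j$-th neural-gate factors through the two-bit input $\vp_j := (\Psi(\vx)_{2j-1},\Psi(\vx)_{2j})$ and decomposes as a sum over the realized values of $\vp_j$, weighted by $\mathbb{P}[\vp_j=\vp]\cdot\mathbb{E}[-y-\lambda\mid \vp_j=\vp]$. By Property~\ref{prp:cond_indep}, $\mathbb{E}[y\mid\vp_j=\vp]$ depends on $\vp$ only through $\gamma_{i-1,j}(\vp)$; combined with Property~\ref{prp:correlation} (the LCA applied to the $(i-1,j)$ gate) and the choice $\lambda = \mathbb{E}[y]+\Delta/4$, the quantity $-\mathbb{E}[y+\lambda\mid\vp_j=\vp]$ has sign $\sign(c_{i-1,j})\cdot \gamma_{i-1,j}(\vp)$ and magnitude at least $\Delta/4$ uniformly over $\vp$ in the support. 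Property~\ref{prp:cond_bound} also guarantees that $\mathbb{P}[\vp_j=\vp]$ is either zero or at least $\epsilon$.

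I would then track the signed preactivations $q_j(\vp,t) := \sum_l v_l\,\sigma(\langle \vw^{(j)}_{l,t},\vp\rangle)$ for each realized pattern $\vp$. The above gradient analysis shows that each $q_j(\vp,t)$ drifts in the desired direction $\sign(c_{i-1,j})\,\gamma_{i-1,j}(\vp)$ at rate at least $\eta\epsilon\Delta/(C\,n_i)$ for a small absolute constant $C$. To ensure that the set of active ReLU neurons supports motion in both $\pm$ directions, I use the random $\vv\in\{\pm1\}^k$ initialization: for $k\ge \log^{-1}(4/3)\log(8n_i/\delta)$, a union bound over the $n_i/2$ gates and the four patterns per gate shows that with probability $\ge 1-\delta/2$ every pattern has both $+1$ and $-1$ units active at initialization. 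After $T > 6 n_i/(\sqrt{2}\eta\epsilon\Delta)$ steps one therefore has $\abs{q_j(\vp,T)}\ge 1$ with the correct sign, so $\phi$ saturates to $\sign(c_{i-1,j})\,\gamma_{i-1,j}(\vp)$, i.e.\ to the $j$-th coordinate of $\varphi_i\circ\Gamma_i\circ\psi$ applied to $\Gamma_{(i+1)\dots d}(\vx)$, for every $\vx\in\mathcal{X}'$.

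Finally, to pass from the population analysis to the empirical setting, I would combine Hoeffding's inequality with a union bound over the four patterns per gate, the $n_i/2$ gates, and the $T$ iterations, to show that the empirical gradient on $S$ agrees with the population gradient up to an additive $O(\Delta\epsilon/n_i)$ per entry, provided $\abs{S}\ge 2^{11}/(\epsilon^2\Delta^2)\log(8n_i/\delta)$. The main obstacle is maintaining the linear regime throughout training so that the per-pattern analysis remains valid: one must check that the preactivations of patterns destined for opposite signs do not interfere through premature deactivation of the hinge or saturation of $\phi$ on the wrong side. The step size $\eta\le 1/(16\sqrt{2}k)$ together with the $2/n_i$ normalization from average-pooling controls the per-step growth of $q_j(\vp,t)$ tightly enough to avoid this, and this is what ties all of the quantitative bounds together.
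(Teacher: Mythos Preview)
Your high-level decomposition---reduce to a per-gate analysis, identify the signed drift of the gate output on each pattern using Properties~\ref{prp:correlation}--\ref{prp:cond_bound}, ensure a ``good'' neuron exists via the random $\vv$ and large $k$, and pass to the sample via Hoeffding---matches the paper's. But two ingredients are missing or mis-handled, and one of them is the technical core of the argument.

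\textbf{The oscillation issue (main gap).} Your claim that ``each $q_j(\vp,t)$ drifts in the desired direction at rate at least $\eta\epsilon\Delta/(C n_i)$'' is not justified and is not true per step. Consider a neuron $l$ with $\widetilde{\gamma}(\vp)\,v_l\,\nu_j<0$ whose preactivation $\langle \vw_l,\vp\rangle$ is slightly negative. It is then active on $-\vp$; if in addition $\widetilde{\gamma}(-\vp)\,v_l\,\nu_j<0$ (which happens whenever $\widetilde{\gamma}(-\vp)=\widetilde{\gamma}(\vp)$, e.g.\ for parity-type gates), the gradient from $-\vp$ \emph{pushes $\langle\vw_l,\vp\rangle$ upward across zero}, making the neuron active on $\vp$ with the wrong sign $v_l$; the next step pushes it back. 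These neurons oscillate indefinitely and can push $q_j(\vp,\cdot)$ in the wrong direction at individual steps. The paper handles this by defining the set $A_t$ of such oscillating pairs $(l,\vp)$, proving $A_t\subseteq A_{t+1}$, and showing that every $(l,\vp)\in A_t$ satisfies $\langle\vw_l^{(t)},\vp\rangle\le 4\sqrt{2}\eta/n_i$, so their aggregate contribution to $q_j(\vp,t)$ is at most $4\sqrt{2}k\eta/n_i\le 1/4$ by the choice of $\eta$. For neurons outside $A_t$ the paper proves the contribution is monotone in the right direction (up to the initialization offset, also $\le 1/4$), and the single ``good'' neuron supplies the $\eta t\epsilon\Delta/(2\sqrt{2}n_i)$ drift. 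Without this three-way split your drift bound does not go through.

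\textbf{The ``linear regime'' and the union bound over $T$.} Your inductive claim that preactivations stay below $\sqrt{2}/4$ until a ``final saturation'' is internally inconsistent (you also want $|q_j|\ge 1$ at the end) and unnecessary. The paper avoids any such regime argument: since the pooled output $\hat y\in[-1,1]$, the hinge is active unless $\hat y=y$, and in that degenerate case every neural-gate is already saturated so the hard-tanh derivative vanishes and the gradient is zero anyway. More importantly, this observation makes the \emph{projected} gradient $\bigl\langle \partial L/\partial \vw_l^{(t)},\vp\bigr\rangle$ depend on $t$ only through the deterministic bit $\1\{g_t(\vp)\in(-1,1)\}$, so a single Hoeffding bound per pattern suffices for all iterations simultaneously. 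Your proposed union bound over the $T$ iterations is therefore superfluous---and it would not be free: it would inflate the sample complexity by a $\log T$ factor that your stated bound $\abs{S}\ge 2^{11}\epsilon^{-2}\Delta^{-2}\log(8n_i/\delta)$ does not accommodate.
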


Given this result, we can prove the main theorem:
\begin{proof} of \thmref{thm:generative_convergence_exact}.
Fix $\delta' = \frac{\delta}{d}$.
We show that for every $i \in [d]$, w.p at least $1-(d-i+1) \delta'$,
after the $i$-th step of the algorithm we have
$\mathcal{N}_{i-1}(\vx) = \varphi_i \circ \Gamma_{i \dots d} (\vx)$ for every $\vx \in \mathcal{X}'$.
By induction on $i$:
\begin{itemize}
\item For $i = d$, we get the required using \lemref{lem:single_block} with $\psi, \Psi = id$ and $\epsilon = \epsilon_{d-1}$, $\epsilon' = \epsilon_d$.
\item Assume the above holds for $i$, and we show it for $i-1$.
By the assumption, w.p at least $1-(d-i+1) \delta'$
we have $\mathcal{N}_{i-1}(\vx) = \varphi_i \circ \Gamma_{i \dots d}(\vx)$ for every $\vx \in \mathcal{X}'$.
Observe that:
\begin{align*}
\frac{\partial L_{\mathcal{D}}}{\partial \mW^{(i-1)}_{t}} (P(B_{\mW^{(i-1)}_{t-1}, \mV^{(i-1)}_0} \circ \mathcal{N}_{i-1}))
&= \frac{\partial L_{\mathcal{N}_{i-1}(\mathcal{D})}}{\partial \mW^{(i-1)}_{t}} (P(B_{\mW^{(i-1)}_{t}, \mV^{(i-1)}_0}))
\end{align*}
So using \lemref{lem:single_block} with $\psi = \varphi_i$,
$\Psi = \mathcal{N}_{i-1}$ we get that w.p at least $1-\delta'$ we have
$B_{\mW^{(i-1)}_{T}, \mV^{(i-1)}_0}(\vx) = \varphi_{i-1} \circ \Gamma_{i-1} \circ \varphi_i(\vx)$ for every $\vx \in \mathcal{X}'$.
In this case, since $\varphi_i \circ \varphi_i = id$, we get that
for every $\vx \in \mathcal{X}'$:
\begin{align*}
\mathcal{N}_{i-2}(\vx) &= B_{\mW^{(i-1)}_{T}, \mV^{(i-1)}_0} \circ \mathcal{N}_{i-1}(\vx) \\
&= (\varphi_{i-1} \circ \Gamma_{i-1} \circ \varphi_i) \circ (\varphi_i \circ \Gamma_{i \dots d})(\vx) = \varphi_{i-1} \circ \Gamma_{(i-1) \dots d}(\vx)
\end{align*}
and using the union bound gives the required.
\end{itemize}

Notice that $\varphi_1 = id$:
by definition of $\D{0} = \Gamma_{1 \dots d}(\mathcal{D})$, for $(\vz,y) \sim \D{0}$ we have $\vz = \Gamma_{1 \dots d}(\vx)$ and also $y = \Gamma_{1 \dots d}(\vx)$ for $(\vx, y) \sim \mathcal{D}$. Therefore, we have $c_{0,1} = \mean{(x,y) \sim \D{0}}{xy} = 1$,
and therefore $\varphi_i(z) = \sign(c_{0,1}) z = z$.
Now, choosing $i = 1$, the above result shows that with probability at least $1-\delta$, the algorithm returns
$\mathcal{N}_0$ such that $\mathcal{N}_0(\vx) = \varphi_1 \circ \Gamma_1 \circ \dots \circ \Gamma_d(\vx) =
h_C(\vx)$ for every $\vx \in \mathcal{X}'$.
\end{proof}

In the rest of this section we prove \lemref{lem:single_block}.
Fix some $i \in [d]$ and let $j \in [n_i/2]$.
With slight abuse of notation, we denote by $\vw^{(t)}$ the value of the weight
$\vw^{(i,j)}$ at iteration $t$, and denote $\vv := \vv^{(i,j)}$ and
$g_t := g_{\vw^{(t)}, \vv}$.
Recall that we defined $\psi(\vx) = (\xi_1 \cdot x_1, \dots, \xi_{n_i} \cdot x_{n_i})$ for $\xi_1 \dots \xi_{n_i} \in \{\pm 1\}$.
Let $\gamma := \gamma_{i-1,j}$,
and let $\widetilde{\gamma}$ such that $\widetilde{\gamma}(x_1, x_2) = \gamma(\xi_{2j-1} \cdot x_1, \xi_{2j} \cdot x_2)$. For every $\vp \in \{\pm 1\}^2$, denote $\widetilde{\vp} := (\xi_{2j-1} p_1, \xi_{2j} p_2)$, so we have $\gamma(\widetilde{\vp}) = \widetilde{\gamma}(\vp)$.
Now, we care only about patterns $\vp$ that have positive probability to appear as input to the gate $(i-1,j)$. So, we define our pattern support by:
\[
\mathcal{P} = \{\vp \in \{\pm 1\}^2 ~:~ \prob{(\vx, y) \sim \Dt{i}}{(x_{2j-1}, x_{2j}) = \vp} > 0 \}
\]
We start by observing the behavior of the gradient with respect to some pattern $\vp \in \mathcal{P}$:

\begin{lemma}
\label{lem:bound_orig_distribution}
Fix some $\vp \in \mathcal{P}$. For every $l \in [k]$ such that $\inner{\vw_l^{(t)}, \vp } > 0$ and $g_t(\vp) \in (-1,1)$, the following holds:
\begin{align*}
-\widetilde{\gamma}(\vp) v_l \nu_j \inner{\frac{\partial L_{\Dt{i}}}{\partial \vw_l^{(t)}}, \vp}>\frac{\epsilon }{\sqrt{2}n_i} \Delta
\end{align*}
\end{lemma}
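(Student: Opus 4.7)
The plan is to compute $\frac{\partial L_{\Dt{i}}}{\partial \vw_l^{(t)}}$ explicitly via the chain rule, project onto $\vp$, and exploit the discrete structure of $\{\pm 1\}^2$ together with the two hypotheses on $\vw_l^{(t)}$ and $g_t(\vp)$ to collapse the expectation to a single pattern. First I would unpack $f(\vx) := P(B_{\mW^{(i)}_{t-1},\mV^{(i)}_0}(\vx)) = \frac{2}{n_i}\sum_{j'} g^{(i,j')}(x_{2j'-1}, x_{2j'})$ and observe that for any example with $|f(\vx)| < 1$, the per-example derivative of $\ell(f,y) + R_\lambda(f)$ with respect to $f$ collapses to $-(y+\lambda)$ (using $\partial_f\ell = -y$ when $yf<1$ and $\partial_f R_\lambda = -\lambda$ when $f<1$). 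Setting $\vp_\vx := (x_{2j-1}, x_{2j})$ and applying the chain rule through $\phi$ and $\sigma$ yields
\[
\frac{\partial L_{\Dt{i}}}{\partial \vw_l^{(t)}}
= -\frac{2 v_l}{n_i}\,\mathbb{E}_{(\vx,y) \sim \Dt{i}}\!\left[(y + \lambda)\,\phi'_{\vx}\,\sigma'(\inner{\vw_l^{(t)}, \vp_\vx})\,\vp_\vx\right].
\]

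Next I would take the inner product with $\vp$ and reduce the expectation. On $\{\pm 1\}^2$ we have $\inner{\vp_\vx, \vp} \in \{-2, 0, 2\}$, with $2$ attained only when $\vp_\vx = \vp$ and $-2$ only when $\vp_\vx = -\vp$. The hypothesis $\inner{\vw_l^{(t)}, \vp} > 0$ forces $\inner{\vw_l^{(t)}, -\vp} < 0$, so $\sigma'$ kills the $\vp_\vx = -\vp$ contribution and only the $\vp_\vx = \vp$ term survives; on that term, the hypothesis $g_t(\vp) \in (-1,1)$ forces $\phi'_\vx = 1$. By Property~\ref{prp:cond_indep}, $\mathbb{E}[y \mid \vp_\vx = \vp]$ equals $\mathbb{E}[y \mid \gamma_{i-1,j} = \widetilde{\gamma}(\vp)]$, which I write as $y_{\widetilde{\gamma}(\vp)}$. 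Multiplying through by $-\widetilde{\gamma}(\vp) v_l \nu_j$ (and using $v_l^2 = 1$) then yields
\[
-\widetilde{\gamma}(\vp) v_l \nu_j \!\left\langle \frac{\partial L_{\Dt{i}}}{\partial \vw_l^{(t)}}, \vp \right\rangle
= \frac{4}{n_i}\, \Pr[\vp_\vx = \vp]\, \widetilde{\gamma}(\vp)\nu_j\bigl(y_{\widetilde{\gamma}(\vp)} + \lambda\bigr).
\]

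To finish, Property~\ref{prp:cond_bound} gives $\Pr[\vp_\vx = \vp] \ge \epsilon$, so it suffices to show $\widetilde{\gamma}(\vp)\nu_j(y_{\widetilde{\gamma}(\vp)} + \lambda) > \frac{\Delta}{4\sqrt{2}}$. Writing $a := \Pr[\gamma_{i-1,j} = 1]$, $b := \Pr[\gamma_{i-1,j} = -1]$, and $y_\pm := \mathbb{E}[y \mid \gamma_{i-1,j} = \pm 1]$, the identities $c_{i-1,j} = ay_+ - by_-$ and $\mathbb{E}[y] = ay_+ + by_-$ give the convenient decomposition $|c_{i-1,j}| - \mathbb{E}[y] = -2by_-$ when $\nu_j = +1$ and $-2ay_+$ when $\nu_j = -1$. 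In the degenerate case $\mathcal{I}_{i-1,j} = 0$ the gate is constant $1$ with $\widetilde{\gamma}(\vp)\nu_j = 1$, so the bracket equals $\mathbb{E}[y] + \lambda = 2\mathbb{E}[y] + \Delta/4 \ge \Delta/4$, finishing this case. In the aligned subcase $\widetilde{\gamma}(\vp)\nu_j = +1$ the conditional mean $y_{\nu_j}$ has the sign of $\nu_j$ and the inequality follows from $\lambda \ge \Delta/4$ alone. The main technical obstacle I expect is the anti-aligned subcase $\widetilde{\gamma}(\vp)\nu_j = -1$: here the regularizer $\lambda = \mathbb{E}[y] + \Delta/4$ pushes the bracket in the ``wrong'' direction, and to recover a positive bound one must carefully play the LCA-derived margin $|y_{-\nu_j}| \ge \Delta/(2\min(a,b))$ against the range constraints $a, b \le 1 - \Delta/2$ forced by LCA and $|y_\pm| \le 1$, so that the magnitude of $y_{-\nu_j}$ exceeds $\lambda$ by at least $\Delta/(4\sqrt{2})$.
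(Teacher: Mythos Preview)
Your overall strategy matches the paper's: differentiate through the chain rule, take the inner product with $\vp$, use the orthogonality of $\{\pm 1\}^2$ together with $\langle \vw_l^{(t)},\vp\rangle>0$ and $g_t(\vp)\in(-1,1)$ to collapse to the single event $\{(x_{2j-1},x_{2j})=\vp\}$, and then invoke Property~\ref{prp:cond_indep}. The divergence from the paper is in the sign you assign to the regularizer. You take $\partial_f R_\lambda=-\lambda$ and hence work with the factor $-(y+\lambda)$; the paper's computation uses the factor $(\lambda-y)$, i.e.\ it treats the regularizer contribution as $+\lambda$. With the paper's sign the projected gradient becomes (up to the positive factor $\sqrt{2}\,q_{\widetilde{\vp}}/n_i$) the quantity $|c_{i-1,j}|+\operatorname{sign}(c_{i-1,j})\widetilde\gamma(\vp)\bigl(\mathbb{E}[y]-2\lambda\Pr[z_j=\widetilde\gamma(\vp)]\bigr)$, and the case split is the reverse of what you anticipate: the \emph{anti}-aligned case $\operatorname{sign}(c_{i-1,j})\widetilde\gamma(\vp)=-1$ is the easy one (the $\lambda$-term enters with a plus sign and is simply dropped, leaving $|c_{i-1,j}|-\mathbb{E}[y]>\Delta$ directly from LCA), while the aligned case requires the bound $\Pr[z_j=\widetilde\gamma(\vp)]\le 1$ together with $|c_{i-1,j}|+\mathbb{E}[y]-2\lambda>\Delta/2$.

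With your sign the anti-aligned bracket is $-(y_{-\nu_j}+\lambda)$, and your proposed rescue via the range constraints $a,b\le 1-\Delta/2$ and $|y_\pm|\le 1$ does not close. A concrete obstruction: take $\mathbb{E}[y]=0.5$, $\Delta=0.1$, $c_{i-1,j}=0.7$, $a=\Pr[z_j=1]=0.8$, $b=0.2$; then $ay_+=0.6$, $by_-=-0.1$ force $y_-=-0.5$, while $\lambda=\mathbb{E}[y]+\Delta/4=0.525$, so $-(y_-+\lambda)=-0.025<0$. All of LCA and Properties~\ref{prp:correlation}--\ref{prp:cond_bound} are satisfiable here, so no inequality built only from those ingredients will make the bracket positive. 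The fix is to follow the paper and keep the expectation in the unconditioned form $\mathbb{E}_{\mathcal D^{(i-1)}}[(\lambda-y)\mathbf 1\{z_j=\widetilde\gamma(\vp)\}]$ rather than passing to the conditional mean $y_{\widetilde\gamma(\vp)}$: this retains the factor $\Pr[z_j=\widetilde\gamma(\vp)]$ on the $\lambda$-term, which is exactly what the paper exploits in the aligned case, and makes the anti-aligned case immediate.
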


\begin{proof} Observe the following:
\begin{align*}
&\frac{\partial L_{\Dt{i}}}{\partial \vw_l^{(t)}} 
(P(B_{\mW^{(i)}, \mV^{(i)}})) \\
&= \mean{(\vx,y) \sim \Dt{i}}{\ell'(P(B_{\mW^{(i)}, \mV^{(i)}})(\vx))
\cdot \frac{\partial}{\partial \vw_l^{(t)}} \frac{2}{n_i} \sum_{j'=1}^{n_i/2} g_{\vw^{(i,j')}, \vv^{(i,j')}}(x_{2j'-1}, x_{2j'})}\\
&+ \mean{(\vx,y) \sim \Dt{i}}{R_\lambda'(P(B_{\mW^{(i)}, \mV^{(i)}})(\vx))
\cdot \frac{\partial}{\partial \vw_l^{(t)}} \frac{2}{n_i} \sum_{j'=1}^{n_i/2} g_{\vw^{(i,j')}, \vv^{(i,j')}}(x_{2j'-1}, x_{2j'})}\\
&= \frac{2}{n_i} \mean{\Dt{i}}{(\lambda-y)
\frac{\partial}{\partial \vw_l^{(t)}} g_{t}(x_{2j-1}, x_{2j})}\\
&= \frac{2}{n_i} \mean{\Dt{i}}{(\lambda-y)v_l
\1\{g_{t}(x_{2j-1}, x_{2j}) \in (-1,1)\} \cdot \1\{\inner{\vw_l^{(t)}, (x_{2j-1}, x_{2j})} > 0\} \cdot (x_{2j-1}, x_{2j})}\\
\end{align*}
We use the fact that $\ell'(P(B_{\mW^{(i)}, \mV^{(i)}})(\vx)) = -y$, unless $P(B_{\mW^{(i)}, \mV^{(i)}})(\vx) \in \{\pm 1\}$, in which case $g_{t}(x_{2j-1}, x_{2j}) \in \{\pm 1\}$, so $\frac{\partial}{\partial \vw_l^{(t)}} g_{t}(x_{2j-1}, x_{2j}) = 0$. Similarly, unless $\frac{\partial}{\partial \vw_l^{(t)}} g_{t}(x_{2j-1}, x_{2j}) = 0$, we get that $R_\lambda'(P(B_{\mW^{(i)}, \mV^{(i)}})(\vx))=\lambda$.
Fix some $\vp \in \{\pm 1\}^2$ such that $\inner{\vw_l^{(t)}, \vp} > 0$.
Note that for every $\vp \ne \vp' \in \{\pm 1\}^2$ we have either $\inner{\vp, \vp'} = 0$,
or $\vp = -\vp'$ in which case $\inner{\vw_l^{(t)}, \vp'} < 0$.
Therefore, we get the following:
\begin{align*}
&\inner{\frac{\partial L_{\Dt{i}}}{\partial \vw_l^{(t)}}, \vp} \\
&= \frac{2}{n_i} \mean{\Dt{i}}{(\lambda-y)v_l
\1\{g_{t}(x_{2j-1}, x_{2j}) \in (-1,1)\} \cdot \1\{\inner{\vw_l^{(t)}, (x_{2j-1}, x_{2j})} \ge 0\} \cdot \inner{(x_{2j-1}, x_{2j}), \vp}} \\
&= \frac{2}{n_i} \mean{\Dt{i}}{(\lambda-y)v_l
\1\{g_{t}(x_{2j-1}, x_{2j}) \in (-1,1)\} \cdot \1\{(x_{2j-1}, x_{2j}) = \vp\} \norm{\vp}} \\
\end{align*}

Denote $q_\vp := \prob{(\vx,y) \sim \D{i}}{(x_{2j-1},x_{2j}) = \vp| 
\gamma(x_{2j-1},x_{2j}) = \gamma(\vp)}$.
Using property \ref{prp:cond_indep}, we have:
\begin{align*}
&\prob{(\vx,y) \sim \D{i}}{(x_{2j-1},x_{2j}) = \vp, y = y'} \\
&= \prob{(\vx,y) \sim \D{i}}{(x_{2j-1},x_{2j}) = \vp, y = y',
\gamma(x_{2j-1},x_{2j}) = \gamma(\vp)} \\
&=\prob{(\vx,y) \sim \D{i}}{(x_{2j-1},x_{2j}) = \vp, y = y' | 
\gamma(x_{2j-1},x_{2j}) = \gamma(\vp)} \prob{(\vx,y) \sim \D{i}}
{\gamma(x_{2j-1},x_{2j}) = \gamma(\vp)} \\
&= q_\vp \prob{(\vx,y) \sim \D{i}}{ 
\gamma(x_{2j-1},x_{2j}) = \gamma(\vp),y = y'} \\
&=q_\vp \prob{(\vz,y) \sim \D{i-1}}{ 
z_j = \gamma(\vp),y = y'}
\end{align*}
And therefore:
\begin{align*}
\mean{(\vx,y) \sim \D{i}}{y\1\{(x_{2j-1},x_{2j}) = \vp\}}
&= \sum_{y' \in \{\pm 1\}} y'
\prob{(\vx,y) \sim \D{i}}{(x_{2j-1},x_{2j}) = \vp, y = y'} \\
&= q_\vp \sum_{y' \in \{\pm 1\}} y'
\prob{(\vz,y) \sim \D{i-1}}{z_j = \gamma(\vp), y = y'} \\
&= q_\vp \mean{(\vz,y) \sim \D{i-1}}{y\1\{z_j = \gamma(\vp)\}}
\end{align*}
Assuming $g_{t}(\vp) \in (-1,1)$, using the above we get:
\begin{align*}
\inner{\frac{\partial L_{\Dt{i}}}{\partial \vw_l^{(t)}}, \vp}
&= \frac{2 \sqrt{2} v_l}{n_i}\mean{(\vx,y) \sim \Dt{i}}{(\lambda-y)\1\{(x_{2j-1}, x_{2j}) = \vp\}} \\
&= \frac{2 \sqrt{2} v_l}{n_i}\mean{(\vx,y) \sim \D{i}}{(\lambda-y)\1\{(\xi_{2j-1}x_{2j-1},
\xi_{2j} x_{2j}) = \vp\}} \\
&= \frac{2 \sqrt{2} v_l}{n_i}\mean{(\vx,y) \sim \D{i}}{(\lambda-y)\1\{(x_{2j-1},
x_{2j}) = \widetilde{\vp}\}} \\
&= \frac{2 \sqrt{2}v_l q_{\widetilde{\vp}}}{n_i} \mean{(\vz,y) \sim \D{i-1}}{(\lambda-y)
\1\{z_j = \widetilde{\gamma}(\vp)\}} \\
\end{align*}

Now, we have the following cases:
\begin{itemize}
\item If $\mathcal{I}_{i-1,j} = 0$, then by property \ref{prp:correlation} $z_j$ and $y$ are independent, so:
\begin{align*}
\inner{\frac{\partial L_{\Dt{i}}}{\partial \vw_l^{(t)}}, \vp}
&= \frac{2 \sqrt{2}v_l q_{\widetilde{\vp}}}{n_i} \mean{(\vz,y) \sim \D{i-1}}{(\lambda-y)
\1\{z_j = \widetilde{\gamma}(\vp)\}} \\
&= \frac{2 \sqrt{2}v_l q_{\widetilde{\vp}}}{n_i} \mean{(\vz,y) \sim \D{i-1}}{(\lambda-y)
}\prob{(\vz,y) \sim \D{i-1}}{z_j = \widetilde{\gamma}(\vp)} \\
&= \frac{2 \sqrt{2}v_l}{n_i} (\lambda-\mean{(\vz,y) \sim \D{i-1}}{y
})\prob{(\vx,y) \sim \D{i}}{(x_{2j-1}, x_{2j}) = \widetilde{\vp}} \\
\end{align*}
Since we assume $\widetilde{\gamma}(\vp) = 1, \nu_j = 1$, and using property \ref{prp:cond_bound} and the fact that $\vp \in \mathcal{P}$, we get that:
\begin{align*}
-\widetilde{\gamma}(\vp) v_l \nu_j \inner{\frac{\partial L_{\Dt{i}}}{\partial \vw_l^{(t)}}, \vp} &= - v_l  \inner{\frac{\partial L_{\Dt{i}}}{\partial \vw_l^{(t)}}, \vp} \\
&= \frac{2 \sqrt{2}}{n_i} (\lambda - \mean{}{y}) \prob{(\vx,y) \sim \D{i}}{(x_{2j-1}, x_{2j}) = \widetilde{\vp}} > \frac{\Delta \epsilon}{\sqrt{2}n_i}
\end{align*}

Using the fact that $\lambda = \mean{}{y} + \frac{\Delta}{4}$.

\item Otherwise, observe that:
\begin{align*}
\inner{\frac{\partial L_{\Dt{i}}}{\partial \vw_l^{(t)}}, \vp}
&= \frac{2 \sqrt{2}v_l q_{\widetilde{\vp}}}{n_i} \mean{(\vz,y) \sim \D{i-1}}{(\lambda-y)
\1\{z_j = \widetilde{\gamma}(\vp)\}} \\
&= \frac{2\sqrt{2}v_l q_{\widetilde{\vp}}}{n_i} \left(\lambda\prob{(\vz,y) \sim \D{i-1}}{z_j = \widetilde{\gamma}(\vp)} -\mean{(\vz,y) \sim \D{i-1}}{y
\frac{1}{2}(z_j \cdot \widetilde{\gamma}(\vp) + 1)}\right) \\
&= \frac{\sqrt{2}v_l q_{\widetilde{\vp}}}{n_i} \left( 2\lambda\prob{(\vz,y) \sim \D{i-1}}{z_j = \widetilde{\gamma}(\vp)} - \widetilde{\gamma}(\vp)
c_{i-1,j} - \mean{(\vz,y) \sim \D{i-1}}{y} \right)\\
\end{align*}
And therefore we get:
\begin{align*}
-\widetilde{\gamma}(\vp) v_l \sign(c_{i-1,j}) \inner{\frac{\partial L_{\Dt{i}}}{\partial \vw_l^{(t)}}, \vp}
&= \frac{\sqrt{2}q_{\widetilde{\vp}} }{n_i}
\left(\abs{c_{i-1,j}} +  \sign(c_{i-1,j})\widetilde{\gamma}(\vp) (\mean{}{y} - 2\lambda\prob{}{z_j = \widetilde{\gamma}(\vp)}) \right) \\
\end{align*}
Now, if $\sign(c_{i-1,j}) \widetilde{\gamma}(\vp) = 1$, using property \ref{prp:correlation}, since $\mathcal{I}_{i-1,j} \ne 0$ we get:
\begin{align*}
-\widetilde{\gamma}(\vp) v_l \sign(c_{i-1,j}) \inner{\frac{\partial L_{\Dt{i}}}{\partial \vw_l^{(t)}}, \vp} \ge \frac{\sqrt{2}q_{\widetilde{\vp}} }{n_i}
\left(\abs{c_{i-1,j}} + \mean{}{y} - 2\lambda \right) >
\frac{\epsilon}{\sqrt{2}n_i} \Delta
\end{align*}
Otherwise, we have $\sign(c_{i-1,j}) \widetilde{\gamma}(\vp) = -1$, and then:
\begin{align*}
-\widetilde{\gamma}(\vp) v_l \sign(c_{i-1,j}) \inner{\frac{\partial L_{\Dt{i}}}{\partial \vw_l^{(t)}}, \vp} \ge \frac{\sqrt{2}q_{\widetilde{\vp}} }{n_i}
\left(\abs{c_{i-1,j}} - \mean{}{y} \right) >
\frac{\sqrt{2}\epsilon}{n_i} \Delta
\end{align*}
where we use property \ref{prp:cond_bound} and the fact that $\vp \in \mathcal{P}$.
\end{itemize}
\end{proof}

We introduce the following notation: for a sample $S \subseteq \mathcal{X}' \times \mathcal{Y}$, and some function $f : \mathcal{X}' \to \mathcal{X}'$, denote by $f(S)$ the sample $f(S) := \{(f(\vx), y)\}_{(\vx, y) \in S}$. Using standard concentration of measure arguments, we get that the gradient on the sample approximates the gradient on the distribution:
\begin{lemma}
\label{lem:noisy_sample}
Fix $\delta > 0$. Assume we sample $S \sim \mathcal{D}$, with $\abs{S} > \frac{2^{11}}{\epsilon^2 \Delta^2} \log\frac{8}{\delta}$. Then,  with probability at least $1-\delta$, for every $\vp \in \{\pm 1\}^2$ such that $\inner{\vw_l^{(t)}, \vp} >0$ it holds that:
\[
\abs{\inner{\frac{\partial L_{\Psi(\mathcal{D})}}{\partial \vw_l^{(t)}}, \vp} - \inner{\frac{\partial L_{\Psi(S)}}{\partial \vw_l^{(t)}}, \vp}} \le \frac{\epsilon}{4\sqrt{2} n_i} \Delta
\]
\end{lemma}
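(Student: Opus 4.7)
The plan is a standard bounded-random-variable concentration argument, followed by a union bound over the four possible patterns $\vp\in\{\pm 1\}^2$. Recall from the computation at the start of the proof of Lemma \ref{lem:bound_orig_distribution} that
\begin{equation*}
\inner{\frac{\partial L_{\Psi(\mathcal{D})}}{\partial \vw_l^{(t)}}, \vp} = \frac{2\sqrt{2}\,v_l}{n_i}\,\mean{(\vx,y)\sim\Psi(\mathcal{D})}{(\lambda-y)\,\1\{g_t(x_{2j-1},x_{2j})\in(-1,1)\}\,\1\{(x_{2j-1},x_{2j})=\vp\}},
\end{equation*}
and the sample-based inner product $\inner{\partial L_{\Psi(S)}/\partial \vw_l^{(t)},\vp}$ is given by the very same expression with the population expectation replaced by the empirical average over $\Psi(S)$. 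Crucially, the iterate $\vw_l^{(t)}$ is treated as fixed in this lemma (the sample $S$ enters only through the empirical average), so the $|S|$ summands of the empirical gradient are i.i.d., which is exactly what is needed for a clean Hoeffding-style bound.

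For each fixed $\vp\in\{\pm 1\}^2$, the summand lies in a bounded interval of length $O(1/n_i)$: using $|v_l|=1$, that both indicators take values in $\{0,1\}$, and that $|\lambda-y|\le 1+\lambda\le 2+\Delta/4$, each summand of the empirical average lies in an interval of diameter at most $\tfrac{8\sqrt{2}}{n_i}$ (up to the small slack $\Delta/4$). Applying Hoeffding's inequality to the $|S|$ i.i.d.\ copies yields
\begin{equation*}
\prob{S}{\abs{\inner{\frac{\partial L_{\Psi(\mathcal{D})}}{\partial \vw_l^{(t)}}, \vp}-\inner{\frac{\partial L_{\Psi(S)}}{\partial \vw_l^{(t)}}, \vp}}>\frac{\epsilon\Delta}{4\sqrt{2}\,n_i}}\le 2\exp\!\left(-\frac{|S|\,\epsilon^2\Delta^2}{2^{11}}\right).
\end{equation*}
Note that the side condition $\inner{\vw_l^{(t)},\vp}>0$ appearing in the lemma statement plays no role whatsoever in this concentration step; it only singles out the patterns for which Lemma \ref{lem:bound_orig_distribution} provides a useful lower bound on the true gradient.

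Finally, a union bound over the four possible values of $\vp\in\{\pm 1\}^2$ multiplies the failure probability by $4$, and demanding that it be at most $\delta$ gives exactly the hypothesized sample size $|S|\ge\frac{2^{11}}{\epsilon^2\Delta^2}\log(8/\delta)$. I do not expect any genuine obstacle — the entire argument is Hoeffding combined with a four-way union bound. The only delicate point is carefully tracking the scale factor $\frac{2\sqrt{2}}{n_i}$ through the Hoeffding computation so that the numerical constant in the sample complexity comes out to exactly $2^{11}$; alternatively, since $(\lambda-y)$ has small conditional mean of order $\Delta$, one could replace Hoeffding by Bernstein's inequality to get the same rate with some slack to spare.
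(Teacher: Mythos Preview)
Your proposal is correct and follows essentially the same route as the paper: both compute the gradient inner product as $\frac{2\sqrt{2}}{n_i}$ times an empirical average of a bounded function, apply Hoeffding's inequality, and finish with a union bound over the four patterns $\vp\in\{\pm1\}^2$. The only cosmetic difference is that the paper factors out the $\frac{2\sqrt{2}}{n_i}$ scale before applying Hoeffding to a $[-2,2]$-valued function (using $\lambda\le 1$), whereas you keep the scale inside; the resulting constant $2^{11}$ comes out identically either way.
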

\begin{proof}
Fix some $\vp \in \{\pm 1\}^2$ with $\inner{\vw_l^{(t)}, \vp} >0$.
Similar to what we previously showed, we get that:
\begin{align*}
&\inner{\frac{\partial L_{\Psi(S)}}{\partial \vw_l^{(t)}}, \vp} \\
&= \frac{2}{n_i} \mean{(\vx,y) \sim \Psi(S)}{(\lambda-y)v_l
\1\{g_{t}(x_{2j-1}, x_{2j}) \in (-1,1)\} \cdot \1\{\inner{\vw_l^{(t)}, (x_{2j-1}, x_{2j})} \ge 0\} \cdot \inner{(x_{2j-1}, x_{2j}), \vp}} \\
&= \frac{2}{n_i} \mean{(\vx,y) \sim \Psi(S)}{(\lambda-y)v_l
\1\{g_{t}(x_{2j-1}, x_{2j}) \in (-1,1)\} \cdot \1\{(x_{2j-1}, x_{2j}) = \vp\} \norm{\vp}} \\
&= \frac{2\sqrt{2}}{n_i} \mean{(\vx,y) \sim \Psi(S)}{(\lambda-y)v_l
\1\{g_{t}(x_{2j-1}, x_{2j}) \in (-1,1)\} \cdot \1\{(x_{2j-1}, x_{2j}) = \vp\}} \\
\end{align*}
Denote $f(\vx,y) = (\lambda-y)v_l
\1\{g_{t}(x_{2j-1}, x_{2j}) \in (-1,1)\} \cdot \1\{(x_{2j-1}, x_{2j}) = \vp\}$,
and notice that since $\lambda \le 1$, we have $f(\vx,y) \in [-2,2]$.
Now, from Hoeffding's inequality we get that:
\[
\prob{S}{\abs{\mean{\Psi(S)}{f(\vx,y)}-\mean{\Psi(\mathcal{D})}{f(\vx,y)}} \ge \tau}
\le 2\exp \left(- \frac{1}{8} |S| \tau^2 \right)
\]
So, for $|S| > \frac{8}{\tau^2} \log\frac{8}{\delta}$ we get that with probability at least $1-\frac{\delta}{4}$ we have:
\begin{align*}
\abs{\inner{\frac{\partial L_{\Psi(\mathcal{D})}}{\partial \vw_l^{(t)}}, \vp} - \inner{\frac{\partial L_{\Psi(S)}}{\partial \vw_l^{(t)}}, \vp}}
&= \frac{2\sqrt{2}}{n_i} \abs{\mean{\Psi(S)}{f(\vx,y)}-\mean{\Psi(\mathcal{D})}{f(\vx,y)}} < \frac{2 \sqrt{2}}{n_i}\tau
\end{align*}
Taking $\tau = \frac{\epsilon}{16}\Delta$ and using the union bound over all $\vp \in \{\pm 1\}^2$ completes the proof.
\end{proof}

Using the two previous lemmas, we can estimate the behavior of the gradient on the sample, with respect to a given pattern $\vp$:
\begin{lemma}
\label{lem:bound_noisy_distribution}
Fix $\delta > 0$. Assume we sample $S \sim \mathcal{D}$, with $\abs{S} > \frac{2^{11}}{\epsilon^2 \Delta^2} \log\frac{8}{\delta}$. Then,  with probability at least $1-\delta$, for every $\vp \in \mathcal{P}$, and for every $l \in [k]$ such that $\inner{\vw_l^{(t)}, \vp } > 0$ and $g_t(\vp) \in (-1,1)$, the following holds:
\begin{align*}
-\widetilde{\gamma}(\vp) v_l \nu_j \inner{\frac{\partial L_{\Psi(S)}}{\partial \vw_l^{(t)}}, \vp} > \frac{\epsilon}{2\sqrt{2}n_i} \Delta
\end{align*}
\end{lemma}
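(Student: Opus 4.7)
The statement is essentially a direct consequence of combining the two preceding lemmas. First, I would invoke \lemref{lem:noisy_sample} with the given sample size $\abs{S} > \frac{2^{11}}{\epsilon^2 \Delta^2} \log \frac{8}{\delta}$ to conclude that, with probability at least $1-\delta$, the approximation bound
\[
\left|\inner{\tfrac{\partial L_{\Psi(\mathcal{D})}}{\partial \vw_l^{(t)}}, \vp} - \inner{\tfrac{\partial L_{\Psi(S)}}{\partial \vw_l^{(t)}}, \vp}\right| \le \frac{\epsilon \Delta}{4\sqrt{2}\, n_i}
\]
holds simultaneously for every $\vp \in \{\pm 1\}^2$ with $\inner{\vw_l^{(t)}, \vp} > 0$. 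Note that by the macro definition in the paper, $\Psi(\mathcal{D}) = \Dt{i}$, so this noisy-sample bound is directly comparable to the population bound of \lemref{lem:bound_orig_distribution}.

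Next, I would condition on the above high-probability event and fix any $\vp \in \mathcal{P}$ and $l \in [k]$ with $\inner{\vw_l^{(t)}, \vp} > 0$ and $g_t(\vp) \in (-1,1)$. By \lemref{lem:bound_orig_distribution}, the signed population gradient satisfies
\[
-\widetilde{\gamma}(\vp)\, v_l\, \nu_j\, \inner{\tfrac{\partial L_{\Psi(\mathcal{D})}}{\partial \vw_l^{(t)}}, \vp} > \frac{\epsilon \Delta}{\sqrt{2}\, n_i}.
\]
Since $\widetilde{\gamma}(\vp), v_l, \nu_j \in \{\pm 1\}$, multiplying by $-\widetilde{\gamma}(\vp) v_l \nu_j$ preserves absolute values, so the triangle inequality yields
\[
-\widetilde{\gamma}(\vp)\, v_l\, \nu_j\, \inner{\tfrac{\partial L_{\Psi(S)}}{\partial \vw_l^{(t)}}, \vp} \ge \frac{\epsilon \Delta}{\sqrt{2}\, n_i} - \frac{\epsilon \Delta}{4\sqrt{2}\, n_i} = \frac{3\epsilon \Delta}{4\sqrt{2}\, n_i} > \frac{\epsilon \Delta}{2\sqrt{2}\, n_i},
\]
which is exactly the desired conclusion.

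There is no serious obstacle here: the uniform-in-$\vp$ concentration has already been established in \lemref{lem:noisy_sample} (its proof applies Hoeffding to a bounded indicator-type functional and takes a union bound over the four patterns in $\{\pm 1\}^2$), and the sample-size constant $\frac{2^{11}}{\epsilon^2 \Delta^2}$ was chosen precisely so that the sampling error is at most one quarter of the population lower bound. The only minor care needed is to check that the same high-probability event controls all patterns $\vp \in \mathcal{P}$ simultaneously (so the conclusion can be quantified over $\vp$); this is already built into \lemref{lem:noisy_sample}. Thus the proof is a one-line triangle-inequality argument after the two preceding lemmas are in hand.
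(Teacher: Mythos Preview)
Your proposal is correct and follows essentially the same argument as the paper: invoke \lemref{lem:noisy_sample} to get the $\frac{\epsilon\Delta}{4\sqrt{2}n_i}$ concentration bound with probability $1-\delta$, invoke \lemref{lem:bound_orig_distribution} for the population lower bound $\frac{\epsilon\Delta}{\sqrt{2}n_i}$, and subtract via the triangle inequality to obtain $\frac{3\epsilon\Delta}{4\sqrt{2}n_i} > \frac{\epsilon\Delta}{2\sqrt{2}n_i}$. Your observation that $\Dt{i}$ and $\Psi(\mathcal{D})$ denote the same distribution is exactly what makes the two lemmas directly comparable, and the paper's proof uses precisely this identification.
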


\begin{proof}
Using \lemref{lem:bound_orig_distribution} and \lemref{lem:noisy_sample}, with probability at least $1-\delta$:
\begin{align*}
-\widetilde{\gamma}(\vp) v_l \nu_j \inner{\frac{\partial L_{\Psi(S)}}{\partial \vw_l^{(t)}}, \vp} 
&= -\widetilde{\gamma}(\vp) v_l \nu_j \left(\inner{\frac{\partial L_{\Dt{i}}}{\partial \vw_l^{(t)}}, \vp} + \inner{\frac{\partial L_{\Psi(S)}}{\partial \vw_l^{(t)}}, \vp} - \inner{\frac{\partial L_{\Psi(\mathcal{D})}}{\partial \vw_l^{(t)}}, \vp} \right) \\
&\ge -\widetilde{\gamma}(\vp) v_l \nu_j \inner{\frac{\partial L_{\Dt{i}}}{\partial \vw_l^{(t)}}, \vp}
- \abs{\inner{\frac{\partial L_{\Psi(S)}}{\partial \vw_l^{(t)}}, \vp}- \inner{\frac{\partial L_{\Psi(\mathcal{D})}}{\partial \vw_l^{(t)}}, \vp}} \\
&> \frac{\epsilon}{\sqrt{2}n_i} \Delta
-\frac{\epsilon}{4 \sqrt{2}n_i} \Delta \ge \frac{3 \epsilon}{4\sqrt{2}n_i} \Delta
\end{align*}
\end{proof}

We want to show that if the value of $g_t$ gets ``stuck'', then it recovered the
value of the gate, multiplied by the correlation $c_{i-1,j}$.
We do this by observing the dynamics of $\inner{\vw_l^{(t)}, \vp}$.
In most cases, its value moves in the right direction, except for a small
set that oscillates around zero. This set is the following:
\begin{align*}
A_t =& \left\lbrace (l, \vp) ~:~ \vp \in \mathcal{P} \wedge \widetilde{\gamma}(\vp) v_l \nu_j < 0 \wedge
\inner{\vw_l^{(t)}, \vp} \le \frac{4\sqrt{2}\eta}{n_i}
\wedge \left(
\widetilde{\gamma}(-\vp) v_l \nu_j < 0 \vee - \vp \in \mathcal{P} \right)  \right\rbrace 
\end{align*}

We have the following simple observation:
\begin{lemma} With the assumptions of \lemref{lem:bound_noisy_distribution}, with probability at least $1-\delta$, for every $t$ we have: $A_t \subseteq A_{t+1}$.
\end{lemma}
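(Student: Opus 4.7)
The plan is to exploit the fact that, among the four conjuncts defining $A_t$, only the third one $\inner{\vw_l^{(t)}, \vp} \le \tfrac{4\sqrt{2}\eta}{n_i}$ depends on $t$. The other three (membership of $\vp$ in $\mathcal{P}$, the sign condition $\widetilde{\gamma}(\vp) v_l \nu_j < 0$, and the disjunction involving $-\vp$) are properties of the problem instance. So the inclusion $A_t \subseteq A_{t+1}$ reduces to checking that, for any $(l,\vp) \in A_t$, the inner product $\inner{\vw_l^{(t+1)}, \vp}$ still lies below the threshold. I would condition once on the high-probability event of \lemref{lem:bound_noisy_distribution}, which is valid uniformly in $t$ since the sample $S$ is drawn only once.

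Using the update $\inner{\vw_l^{(t+1)}, \vp} = \inner{\vw_l^{(t)}, \vp} - \eta \inner{\tfrac{\partial L_{\Psi(S)}}{\partial \vw_l^{(t)}}, \vp}$ and the structure of the gradient (a sum over patterns $\vp'$ on which the ReLU is active, weighted by $\tfrac{2 v_l}{n_i}(\lambda-y)$ and an indicator), I would split into two cases on the sign of $\inner{\vw_l^{(t)}, \vp}$. In the first case $\inner{\vw_l^{(t)}, \vp} > 0$, the only pattern that contributes to $\inner{\cdot, \vp}$ is $\vp$ itself, because the orthogonal patterns drop out of the inner product and $-\vp$ is inactive. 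When $g_t(\vp) \in (-1,1)$, \lemref{lem:bound_noisy_distribution} and the sign condition $\widetilde{\gamma}(\vp) v_l \nu_j < 0$ from $A_t$ combine to give $\inner{\tfrac{\partial L_{\Psi(S)}}{\partial \vw_l^{(t)}}, \vp} > 0$, so $\inner{\vw_l^{(t+1)}, \vp}$ strictly decreases and stays below the threshold. When $g_t(\vp) \in \{\pm 1\}$ the corresponding contribution is zero by the hard-tanh saturation.

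In the second case $\inner{\vw_l^{(t)}, \vp} \le 0$, the indicator $\1\{\inner{\vw_l^{(t)}, \vp} > 0\}$ kills the $\vp$-contribution, and only $-\vp$ can produce a nonzero term in $\inner{\cdot, \vp}$. I would split this further using the fourth conjunct of $A_t$: if $-\vp \notin \mathcal{P}$ then the $-\vp$-contribution has measure zero in $\Psi(S)$ and the inner product is unchanged; otherwise $-\vp \in \mathcal{P}$, and the single-step change is controlled by the crude bound $\bigl|\inner{\tfrac{\partial L_{\Psi(S)}}{\partial \vw_l^{(t)}}, \vp}\bigr| \le \tfrac{4\sqrt{2}}{n_i}$, obtained from $|v_l|=1$, $\|\vx_j\|_2 = \sqrt{2}$, $|\lambda - y| \le 1+\lambda$, and the fact that at most one of $\vp, -\vp$ is simultaneously active. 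Combined with $\inner{\vw_l^{(t)}, \vp} \le 0$, the next iterate stays below $\tfrac{4\sqrt{2}\eta}{n_i}$.

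The most delicate sub-case is the second one with $-\vp \in \mathcal{P}$ and $\widetilde{\gamma}(-\vp) v_l \nu_j < 0$: here \lemref{lem:bound_noisy_distribution} applied at $-\vp$ forces the gradient step to \emph{decrease} $\inner{\vw_l^{(t)}, -\vp}$, which in turn \emph{increases} $\inner{\vw_l^{(t)}, \vp}$. The threshold $\tfrac{4\sqrt{2}\eta}{n_i}$ is exactly calibrated so that a single worst-case push from $-\vp$ cannot escape it, starting from any $\inner{\vw_l^{(t)}, \vp} \le 0$; this is the quantitative core of the argument and the main obstacle to verify. No additional probability budget is spent because the entire claim holds on the same high-probability event supplied by \lemref{lem:bound_noisy_distribution}.
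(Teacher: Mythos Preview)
Your proposal is correct and follows essentially the same approach as the paper: reduce to showing $\inner{\vw_l^{(t+1)}, \vp} \le \tfrac{4\sqrt{2}\eta}{n_i}$, then case-split on the sign of $\inner{\vw_l^{(t)}, \vp}$, using \lemref{lem:bound_noisy_distribution} in the positive case and a crude $\tfrac{4\sqrt{2}}{n_i}$ bound on the gradient in the non-positive case. You are in fact slightly more careful than the paper, which does not explicitly address the $g_t(\vp) \in \{\pm 1\}$ saturation sub-case in the positive branch; your further subdivision of the $\le 0$ branch by whether $-\vp \in \mathcal{P}$ is harmless but unnecessary, since the crude bound already covers both sub-cases uniformly.
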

\begin{proof}
Fix some $(l,\vp) \in A_t$, and we need to show that
$\inner{\vw_l^{(t+1)}, \vp} \le \frac{4\sqrt{2}\eta}{n_i}$.
If $\inner{\vw_l^{(t)}, \vp} = 0$ then\footnote{We take the sub-gradient zero at zero.}
$\inner{\vw_l^{(t+1)}, \vp}= \inner{\vw_l^{(t)}, \vp} \le \frac{4\sqrt{2}\eta}{n_i}$ and we are done.
If $\inner{\vw_l^{(t)}, \vp} > 0$ then, since $\vp \in \mathcal{P}$ we have from \lemref{lem:bound_noisy_distribution}, w.p at least $1-\delta$:
\begin{align*}
- \inner{\frac{\partial L_{\Psi(S)}}{\partial \vw_l^{(t)}}, \vp} < \widetilde{\gamma}(\vp) v_l \nu_j \frac{\epsilon}{2\sqrt{2}n_i} \Delta < 0
\end{align*}
Where we use the fact that $\widetilde{\gamma}(\vp) v_l \nu_j < 0$.
Therefore, we get:
\[
\inner{\vw_l^{(t+1)}, \vp}
= \inner{\vw_l^{(t)}, \vp} - \eta
\inner{\frac{\partial L_{\Psi(S)}}{\partial \vw_l^{(t)}}, \vp}
\le \inner{\vw_l^{(t)}, \vp}
\le \frac{4\sqrt{2}\eta}{n_i}
\]
Otherwise, we have $\inner{\vw_l^{(t)}, \vp} < 0$, so:
\begin{align*}
\inner{\vw_l^{(t+1)}, \vp}
&= \inner{\vw_l^{(t)}, \vp} - \eta
\inner{\frac{\partial L_{\Psi(S)}}{\partial \vw_l^{(t)}}, \vp} 
\le \inner{\vw_l^{(t)}, \vp} + \frac{4\sqrt{2} \eta}{n_i}
\le \frac{4 \sqrt{2}\eta}{n_i}
\end{align*}
\end{proof}

Now, we want to show that all $\inner{\vw_l^{(t)}, \vp}$ with $(l,\vp) \notin A_t$ and $\vp \in \mathcal{P}$
move in the direction of $\widetilde{\gamma}(\vp) \cdot \nu_j$:
\begin{lemma}
With the assumptions of \lemref{lem:bound_noisy_distribution}, with probability at least $1-\delta$,
for every $l$,$t$ and $\vp \in \mathcal{P}$ such that
$\inner{\vw_l^{(t)}, \vp} > 0$ and $(l, \vp) \notin A_t$, it holds that:
\[
\left(\sigma(\inner{\vw_l^{(t)},\vp}) - \sigma(\inner{\vw_l^{(t-1)},\vp})\right) \cdot
\widetilde{\gamma}(\vp) v_l \nu_j \ge 0
\]
\end{lemma}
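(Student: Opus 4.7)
The plan is to expand the single-step update and split on the sign of $\inner{\vw_l^{(t-1)},\vp}$. By definition, $\inner{\vw_l^{(t)},\vp}-\inner{\vw_l^{(t-1)},\vp}=-\eta\inner{\partial L_{\Psi(S)}/\partial \vw_l^{(t-1)},\vp}$. The gradient formula derived at the start of \lemref{lem:bound_orig_distribution} shows that only the input patterns $\vp$ and $-\vp$ can contribute to this inner product: pattern $\vp$ contributes precisely when $\inner{\vw_l^{(t-1)},\vp}>0$ (the ReLU indicator $\sigma'$ is nonzero there), and pattern $-\vp$ contributes precisely when $\inner{\vw_l^{(t-1)},\vp}<0$. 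This structural observation drives the whole argument.

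In the first case, $\inner{\vw_l^{(t-1)},\vp}>0$, the ReLU acts as the identity on both terms, so the target inequality collapses to $-\eta\,\widetilde{\gamma}(\vp)v_l\nu_j\inner{\partial L_{\Psi(S)}/\partial \vw_l^{(t-1)},\vp}\ge 0$. When $g_{t-1}(\vp)\in(-1,1)$ this is exactly the conclusion of \lemref{lem:bound_noisy_distribution} at time $t-1$; when $g_{t-1}(\vp)\in\{\pm 1\}$ the saturated hard-tanh derivative kills the only surviving summand in the gradient, so the inner product is zero and the inequality becomes equality.

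The delicate case is $\inner{\vw_l^{(t-1)},\vp}\le 0$, where $\sigma(\inner{\vw_l^{(t-1)},\vp})=0$ and $\sigma(\inner{\vw_l^{(t)},\vp})=\inner{\vw_l^{(t)},\vp}>0$, so it suffices to rule out $\widetilde{\gamma}(\vp)v_l\nu_j<0$ (noting $\widetilde{\gamma}(\vp),v_l,\nu_j\in\{\pm 1\}$ so the product cannot vanish). Assume for contradiction that it is negative; the plan is to derive $(l,\vp)\in A_t$, contradicting the hypothesis. If $\inner{\vw_l^{(t-1)},\vp}=0$, neither pattern contributes to the gradient, so $\inner{\vw_l^{(t)},\vp}$ stays at zero, a contradiction. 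If $\inner{\vw_l^{(t-1)},\vp}<0$ and $-\vp\notin\mathcal{P}$, pattern $-\vp$ never appears in $\Psi(S)$ either, the gradient vanishes, and $\inner{\vw_l^{(t)},\vp}$ stays strictly negative, a contradiction. In the remaining sub-case, $\inner{\vw_l^{(t-1)},\vp}<0$ with $-\vp\in\mathcal{P}$, the trivial pointwise bound $|\lambda-y|\le 2$ applied to the gradient expansion yields $|\inner{\partial L_{\Psi(S)}/\partial\vw_l^{(t-1)},\vp}|\le 4\sqrt{2}/n_i$, whence $\inner{\vw_l^{(t)},\vp}\le 4\sqrt{2}\eta/n_i$; together with $\vp\in\mathcal{P}$, the assumed $\widetilde{\gamma}(\vp)v_l\nu_j<0$, and $-\vp\in\mathcal{P}$, every clause of the definition of $A_t$ is met, so $(l,\vp)\in A_t$.

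I expect this third sub-case to be the main technical hurdle: the particular clauses in the definition of $A_t$ (the $\pm\vp$ dichotomy and the $4\sqrt{2}\eta/n_i$ size threshold) are tuned precisely so that this sign-flip scenario collapses into $A_t$, and lining up the one-step upper bound on $\inner{\vw_l^{(t)},\vp}$ with that threshold is what pins down the constants. The first case, by contrast, is essentially a direct appeal to \lemref{lem:bound_noisy_distribution}.
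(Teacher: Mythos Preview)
Your proof is correct and follows the same case split on the sign of $\inner{\vw_l^{(t-1)},\vp}$ as the paper. The one substantive difference is in the sub-case $\inner{\vw_l^{(t-1)},\vp}<0$ with $\widetilde{\gamma}(\vp)v_l\nu_j<0$: you split on whether $-\vp\in\mathcal{P}$ and in each branch derive a contradiction directly (either the gradient vanishes and $\inner{\vw_l^{(t)},\vp}$ stays negative, or the crude bound forces $(l,\vp)\in A_t$), whereas the paper instead uses $(l,\vp)\notin A_t$ to deduce that $-\vp\in\mathcal{P}$ and $\widetilde{\gamma}(-\vp)v_l\nu_j\ge 0$, and then applies \lemref{lem:bound_noisy_distribution} to the pattern $-\vp$ to obtain the sharper contradiction $0<\inner{\vw_l^{(t)},\vp}<0$. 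Your route is arguably cleaner, since it avoids the implicit hypothesis $g_{t-1}(-\vp)\in(-1,1)$ that the paper's invocation of \lemref{lem:bound_noisy_distribution} requires; you also explicitly dispose of the saturated case $g_{t-1}(\vp)\in\{\pm 1\}$ in the first branch, which the paper leaves tacit.
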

\begin{proof}
Assume the result of \lemref{lem:bound_noisy_distribution} holds (this happens with probability at least $1-\delta$).
We cannot have $\inner{\vw_l^{(t-1)}, \vp} = 0$, since otherwise we would have
$\inner{\vw_l^{(t)}, \vp} = 0$, contradicting the assumption.
If $\inner{\vw_l^{(t-1)}, \vp} > 0$, since we require
$\inner{\vw_l^{(t)}, \vp} > 0$ we get that:
\[
\sigma(\inner{\vw_l^{(t)},\vp}) - \sigma(\inner{\vw_l^{(t-1)},\vp})
= \inner{\vw_l^{(t)}-\vw_l^{(t-1)}, \vp} = -\eta
\inner{\frac{\partial L_{\Psi(S)}}{\partial \vw_l^{(t-1)}}, \vp}
\]
and the required follows from \lemref{lem:bound_noisy_distribution}.
%
Otherwise, we have $\inner{\vw_l^{(t-1)}, \vp} < 0$.
We observe the following cases:

\begin{itemize}
\item If $\widetilde{\gamma}(\vp) v_l \nu_j \ge 0$ then we are done, since:
\[\left(\sigma(\inner{\vw_l^{(t)},\vp}) - \sigma(\inner{\vw_l^{(t-1)},\vp})\right) \cdot
\widetilde{\gamma}(\vp) v_l\nu_j
= \sigma(\inner{\vw_l^{(t)},\vp}) \cdot \widetilde{\gamma}(\vp) v_l \nu_j \ge 0
\]
\item Otherwise, we have $\widetilde{\gamma}(\vp) v_l \nu_j < 0$. We also have:
\begin{align*}
\inner{\vw_l^{(t)}, \vp}
&= \inner{\vw_l^{(t-1)}, \vp} - \eta
\inner{\frac{\partial L_{\Psi(S)}}{\partial \vw_l^{(t)}}, \vp} 
\le \inner{\vw_l^{(t-1)}, \vp} + \frac{4\sqrt{2} \eta}{n_i}
\le \frac{4\sqrt{2} \eta}{n_i}
\end{align*}
Since we assume $(l, \vp) \notin A_t$, we must have $-\vp \in \mathcal{P}$ and $\widetilde{\gamma}(-\vp) v_l \nu_j \ge 0$. Therefore, from \lemref{lem:bound_noisy_distribution} we get:
\begin{align*}
\inner{\frac{\partial L_{\Psi(S)}}{\partial \vw_l^{(t)}}, -\vp} < -\widetilde{\gamma}(-\vp) v_l \nu_j\frac{\epsilon}{2\sqrt{2}n_i} \Delta
\end{align*}
And hence:
\begin{align*}
0 &< \inner{\vw_l^{(t)}, \vp}
= \inner{\vw_l^{(t-1)}, \vp} + \eta\inner{\frac{\partial L_{\Psi(S)}}{\partial \vw_l^{(t-1)}}, -\vp}
\le -\eta \widetilde{\gamma}(-\vp) v_l \nu_j\frac{\epsilon}{2\sqrt{2}n_i} \Delta < 0
\end{align*}
and we reach a contradiction.
\end{itemize}
\end{proof}

From the above, we get the following:
\begin{corollary}
\label{crl:neutral_neuron}
With the assumptions of \lemref{lem:bound_noisy_distribution}, with probability at least $1-\delta$, for every $l$,$t$ and $\vp \in \mathcal{P}$ such that
$\inner{\vw_l^{(t)}, \vp} > 0$ and $(l, \vp) \notin A_t$, the following holds:
\[
\left(\sigma(\inner{\vw_l^{(t)},\vp}) - \sigma(\inner{\vw_l^{(0)},\vp})\right) \cdot
\widetilde{\gamma}(\vp) v_l \nu_j \ge 0
\]
\end{corollary}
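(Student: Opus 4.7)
My plan is to prove the corollary by induction on $t$, invoking the preceding per-step sign lemma as a one-step telescoping inequality. The monotonicity $A_{s} \subseteq A_t$ for $s \le t$, which follows by iterating the $A_t \subseteq A_{t+1}$ result proved immediately above, lets me transport the hypothesis $(l,\vp) \notin A_t$ backwards to every previous step, so that the per-step lemma and the inductive hypothesis are both available at any $s \le t$. The base case $t=0$ is immediate since both sides of the claimed inequality vanish. For the inductive step I fix $(l,\vp)$ with $\inner{\vw_l^{(t)},\vp}>0$ and $(l,\vp)\notin A_t$, and split on the sign of $\inner{\vw_l^{(t-1)},\vp}$.

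When $\inner{\vw_l^{(t-1)},\vp}>0$, the inductive hypothesis at $t-1$ gives $(\sigma(\inner{\vw_l^{(t-1)},\vp})-\sigma(\inner{\vw_l^{(0)},\vp}))\,\widetilde{\gamma}(\vp)\,v_l\,\nu_j \ge 0$, while the preceding per-step lemma applied at $t$ gives $(\sigma(\inner{\vw_l^{(t)},\vp})-\sigma(\inner{\vw_l^{(t-1)},\vp}))\,\widetilde{\gamma}(\vp)\,v_l\,\nu_j \ge 0$; summing telescopes the middle term and yields the claim. When instead $\inner{\vw_l^{(t-1)},\vp}\le 0$, we have $\sigma(\inner{\vw_l^{(t-1)},\vp})=0$, so the per-step lemma collapses to $\inner{\vw_l^{(t)},\vp}\,\widetilde{\gamma}(\vp)\,v_l\,\nu_j \ge 0$. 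Since $\inner{\vw_l^{(t)},\vp}>0$ and each of $\widetilde{\gamma}(\vp), v_l, \nu_j$ lies in $\{\pm 1\}$, this forces $\widetilde{\gamma}(\vp)\,v_l\,\nu_j = 1$, and it therefore suffices to establish $\inner{\vw_l^{(0)},\vp}\le 0$: in that case $\sigma(\inner{\vw_l^{(0)},\vp})=0\le \sigma(\inner{\vw_l^{(t)},\vp})$ and multiplying by the positive sign product gives the claim.

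The main obstacle is ruling out the remaining scenario $\inner{\vw_l^{(0)},\vp}>0$ together with $\inner{\vw_l^{(t-1)},\vp}\le 0$. I would argue by contradiction: such a scenario forces a smallest index $s\in\{1,\dots,t-1\}$ with $\inner{\vw_l^{(s-1)},\vp}>0$ and $\inner{\vw_l^{(s)},\vp}\le 0$. Inspecting the gradient formula used in the preceding lemmas, if $g_{s-1}(\vp)\in\{\pm 1\}$ the indicator kills the $\vp$-component of $\partial L/\partial \vw_l^{(s-1)}$ and one gets $\inner{\vw_l^{(s)},\vp}=\inner{\vw_l^{(s-1)},\vp}>0$; otherwise $g_{s-1}(\vp)\in(-1,1)$, and \lemref{lem:bound_noisy_distribution}, applied with $\vp\in\mathcal{P}$ and the already-derived $\widetilde{\gamma}(\vp)\,v_l\,\nu_j = 1$, forces $-\eta\inner{\partial L/\partial \vw_l^{(s-1)},\vp}$ to be strictly positive, so again $\inner{\vw_l^{(s)},\vp}>\inner{\vw_l^{(s-1)},\vp}>0$. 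Either outcome contradicts $\inner{\vw_l^{(s)},\vp}\le 0$, ruling out the bad scenario and closing the induction.
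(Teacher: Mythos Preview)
Your argument is correct and follows the same route as the paper: use the monotonicity $A_s\subseteq A_t$ to carry $(l,\vp)\notin A_t$ back to every earlier step, then telescope the per-step inequalities. You are in fact more careful than the paper's own proof. The paper simply writes the telescoping sum and invokes the per-step lemma at each $t'\le t$, but that lemma carries the hypothesis $\inner{\vw_l^{(t')},\vp}>0$, which the corollary only supplies at the final time $t$; the paper does not address what happens at intermediate steps where this fails. Your Case~2 (where $\inner{\vw_l^{(t-1)},\vp}\le 0$) fills exactly this gap: there the per-step lemma at time $t$ forces $\widetilde{\gamma}(\vp)\,v_l\,\nu_j=1$, and your contradiction argument---once $\inner{\vw_l^{(s-1)},\vp}>0$ with sign product $+1$, the $\vp$-component of the gradient is either zero (when $g_{s-1}(\vp)\in\{\pm1\}$, since the $-\vp$ contribution is already killed by the ReLU indicator) or strictly of the favorable sign (by \lemref{lem:bound_noisy_distribution})---correctly shows $\inner{\vw_l^{(0)},\vp}\le 0$ and closes the induction. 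So your proof both matches the paper's strategy and patches a point it glosses over.
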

\begin{proof}
Notice that for every $t' \le t$ we have $(l,\vp) \notin A_{t'} \subseteq A_t$.
Therefore, using the previous lemma:
\begin{align*}
\left(\sigma(\inner{\vw_l^{(t)},\vp}) - \sigma(\inner{\vw_l^{(0)},\vp})\right) \cdot
\widetilde{\gamma}(\vp) v_l \nu_j
&= \sum_{1 \le t' \le t} \left(\sigma(\inner{\vw_l^{(t)},\vp}) - \sigma(\inner{\vw_l^{(t')},\vp})\right) \cdot
\widetilde{\gamma}(\vp) v_l \nu_j \ge 0
\end{align*}
\end{proof}

Finally, we need to show that there are some ``good'' neurons, that are moving
strictly away from zero:
\begin{lemma}
\label{lem:good_neuron}
Fix $\delta > 0$. Assume we sample $S \sim \mathcal{D}$, with $\abs{S} > \frac{2^{11}}{\epsilon^2 \Delta^2} \log\frac{8}{\delta}$. 
Assume that $k \ge \log^{-1}(\frac{4}{3}) \log(\frac{4}{\delta})$.
Then with probability at least $1-2\delta$, for every $\vp \in \mathcal{P}$,
there exists $l \in [k]$ such that
for every $t$ with $g_{t-1}(\vp) \in (-1,1)$, we have:
\[
\sigma(\inner{\vw_l^{(t)}, \vp}) \cdot \widetilde{\gamma}(\vp) v_l \nu_j \ge 
\eta t \frac{\epsilon}{2\sqrt{2}n_i} \Delta
\]
\end{lemma}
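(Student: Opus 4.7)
The plan is to show that with high probability over the random initialization of $\mathbf{v}$ and the weights $\mathbf{w}^{(0)}$, for each of the (at most four) patterns $\vp \in \mathcal{P}$, at least one of the $k$ neurons of the gate is ``good,'' in the sense that its random signs cause $\inner{\vw_l^{(t)}, \vp}$ to grow monotonically in the correct direction under the gradient updates; then invoke \lemref{lem:bound_noisy_distribution} to get per-step progress $\eta \tfrac{\epsilon}{2\sqrt{2}n_i}\Delta$, and telescope.

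\textbf{Step 1: Define and count good neurons.} Call a neuron $l \in [k]$ \emph{good for $\vp$} if both (i) the sign alignment $\widetilde{\gamma}(\vp)\, v_l \,\nu_j = +1$ holds, and (ii) the initial activation satisfies $\inner{\vw_l^{(0)},\vp} > 0$. Under the random initialization, $v_l \in \{\pm 1\}$ is a symmetric sign independent of $\vw_l^{(0)}$, which is drawn from a continuous symmetric distribution (bounded by $\tfrac{1}{4\sqrt{2}k}$ in the max-norm), so each of (i) and (ii) holds independently with probability $\tfrac12$; hence each neuron is good for a fixed $\vp$ with probability at least $\tfrac14$. Independence across $l$ gives probability at most $(3/4)^k$ that no neuron is good for $\vp$. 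The hypothesis $k \ge \log^{-1}(4/3)\log(4/\delta)$ (absorbing the factor $|\mathcal{P}| \le 4$) drives this below $\delta/4$, and a union bound over the patterns in $\mathcal{P}$ yields probability at least $1-\delta$ that every $\vp$ admits a good neuron. Intersecting with the probability-$(1-\delta)$ event of \lemref{lem:bound_noisy_distribution} gives total probability at least $1-2\delta$ as claimed.

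\textbf{Step 2: Drive the good neuron by induction.} Fix $\vp \in \mathcal{P}$ and a good neuron $l$. Observe that $(l,\vp) \notin A_t$ for any $t$, since the defining condition of $A_t$ demands $\widetilde{\gamma}(\vp)v_l\nu_j < 0$. I will argue by induction on $s$ that, as long as $g_{s'-1}(\vp) \in (-1,1)$ for all $s' \le s$, the iterate satisfies
\[
\inner{\vw_l^{(s)}, \vp} \;\ge\; \inner{\vw_l^{(s-1)},\vp} + \eta \,\tfrac{\epsilon}{2\sqrt{2} n_i}\Delta,
\]
so in particular $\inner{\vw_l^{(s)},\vp} > 0$. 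Indeed, positivity of $\inner{\vw_l^{(s-1)},\vp}$ activates the ReLU, and \lemref{lem:bound_noisy_distribution} applied at step $s-1$ gives both the correct sign and magnitude $\tfrac{\epsilon}{2\sqrt{2}n_i}\Delta$ of the inner product $-\widetilde\gamma(\vp)v_l\nu_j\inner{\partial L/\partial \vw_l^{(s-1)},\vp}$; multiplying by $-\eta$ and using $\widetilde{\gamma}(\vp)v_l\nu_j = 1$ yields the desired increment. Telescoping across $t$ steps and dropping the positive initial term gives $\inner{\vw_l^{(t)},\vp} \ge \eta t\,\tfrac{\epsilon}{2\sqrt{2} n_i}\Delta > 0$; since $\sigma$ then acts as identity and $\widetilde\gamma(\vp)v_l\nu_j = 1$, we obtain the stated bound.

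\textbf{Main obstacle.} The most delicate point is interpreting the condition ``for every $t$ with $g_{t-1}(\vp) \in (-1,1)$'' when the neural-gate output could, in principle, oscillate in and out of the saturated regime: if $g$ saturates at some $s < t$ and unsaturates later, updates due to $\vp$ pause in between, and a naive ``$\eta t$'' count over-estimates the progress. The cleanest way to handle this is to observe that on steps where $g_{s-1}(\vp) \in \{\pm 1\}$ the partial derivative contribution from $\vp$ vanishes, so $\inner{\vw_l^{(s)},\vp}$ is unchanged but remains positive, preserving the induction hypothesis; the intended reading of the lemma (and the way it is used in the subsequent convergence proof) is the regime where no prior saturation has occurred, under which telescoping immediately yields the $\eta t$ bound.
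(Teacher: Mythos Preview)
Your proposal is correct and follows essentially the same approach as the paper: identify a ``good'' neuron for each pattern via the $\tfrac14$-probability event $\{v_l=\widetilde\gamma(\vp)\nu_j,\ \inner{\vw_l^{(0)},\vp}>0\}$, union-bound over patterns, intersect with the event of \lemref{lem:bound_noisy_distribution}, and then induct using the per-step increment. Your added remarks about $(l,\vp)\notin A_t$ and about saturation are not in the paper's proof but are harmless clarifications; the paper simply runs the induction under the hypothesis $g_{t-1}(\vp)\in(-1,1)$ without further comment.
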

\begin{proof}
Assume the result of \lemref{lem:bound_noisy_distribution} holds (happens with probability at least $1-\delta$).
Fix some $\vp \in \mathcal{P}$.
For $l \in [k]$, with probability $\frac{1}{4}$ we have both $v_l = \widetilde{\gamma}(\vp) \nu_j$ and $\inner{\vw_l^{(0)}, \vp} > 0$.
Therefore, the probability that there exists $l \in [k]$ such that
the above holds is $1-(\frac{3}{4})^k \ge 1-\frac{\delta}{4}$.
Using the union bound, w.p at least $1-\delta$, there exists such $l \in [k]$
for every $\vp \in \{\pm 1\}^2$.
In such case, we have $\inner{\vw_l^{(t)}, \vp}
\ge \eta t \frac{\epsilon}{2\sqrt{2}n_i} \Delta$, by induction:
\begin{itemize}
\item For $t = 0$ this is true since $\inner{\vw_l^{(0)}, \vp} > 0$.
\item If the above holds for $t - 1$, then $\inner{\vw_l^{(t-1)}, \vp} > 0$,
and therefore, using $v_l = \widetilde{\gamma}(\vp) \nu_j$ and \lemref{lem:bound_noisy_distribution}:

\begin{align*}
- \inner{\frac{\partial L_{\Psi(\mathcal{D})}}{\partial \vw_l^{(t)}}, \vp} > \widetilde{\gamma}(\vp) v_l \nu_j \frac{\epsilon}{2\sqrt{2}n_i} \Delta
\end{align*}
And we get:
\begin{align*}
\inner{\vw_l^{(t)}, \vp} &= 
\inner{\vw_l^{(t-1)}, \vp} - \eta \inner{\frac{\partial L_{\Psi(\mathcal{D})}}{\partial \vw_l^{(t)}}, \vp} \\
&>
\inner{\vw_l^{(t-1)}, \vp} + \eta\widetilde{\gamma}(\vp) v_l \nu_j \frac{\epsilon}{2\sqrt{2}n_i}\Delta\\
&\ge
\eta (t-1) \frac{\epsilon}{2\sqrt{2}n_i} \Delta + \eta \frac{\epsilon}{2\sqrt{2}n_i} \Delta
\end{align*}
\end{itemize}
\end{proof}

Using the above results, we can analyze the behavior of $g_t(\vp)$:
\begin{lemma}
\label{lem:single_gate}
Assume we initialize $\vw_l^{(0)}$ such that $\norm{\vw_l^{(0)}} \le \frac{1}{4k}$.
Fix $\delta > 0$. Assume we sample $S \sim \mathcal{D}$, with $\abs{S} > \frac{2^{11}}{\epsilon^2 \Delta^2} \log\frac{8}{\delta}$.
Then with probability at least $1-2\delta$, for every $\vp \in \mathcal{P}$,
for $t > \frac{6n_i}{ \sqrt{2} \eta \epsilon\Delta}$ we have:
\[
g_t(\vp) = \widetilde{\gamma}(\vp) \nu_j
\]
\end{lemma}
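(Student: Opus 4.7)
The plan is to lower-bound the pre-activation $P_t(\vp):=\sum_{l=1}^k v_l\sigma(\inner{\vw_l^{(t)},\vp})$ in the target direction and force it above $1$ for $t>T_0:=\frac{6n_i}{\sqrt{2}\eta\epsilon\Delta}$. Since $\phi$ saturates at $\pm 1$, this will give $g_t(\vp)=\widetilde{\gamma}(\vp)\nu_j$. I would partition the $k$ neurons at step $t$ into four groups: (i) the ``good'' neuron $l^*$ provided by \lemref{lem:good_neuron}, whose contribution grows linearly; (ii) other active neurons $l\ne l^*$ with $(l,\vp)\notin A_t$; (iii) neurons with $(l,\vp)\in A_t$; and (iv) inactive neurons with $\inner{\vw_l^{(t)},\vp}\le 0$, which contribute $0$.

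For (i), \lemref{lem:good_neuron} gives $\sigma(\inner{\vw_{l^*}^{(t)},\vp})\widetilde{\gamma}(\vp)v_{l^*}\nu_j\ge \eta t\frac{\epsilon\Delta}{2\sqrt{2}n_i}$ whenever the previous iterate was in the linear region. For (ii), \crlref{crl:neutral_neuron} lower-bounds each contribution by the initial $\sigma(\inner{\vw_l^{(0)},\vp})v_l\widetilde{\gamma}(\vp)\nu_j\ge -\abs{\inner{\vw_l^{(0)},\vp}}$; summing and using $\norm{\vw_l^{(0)}}\le\tfrac{1}{4k}$ caps the total deficit at $-\tfrac{\sqrt{2}}{4}$. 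For (iii), the defining inequality of $A_t$ caps $\sigma(\inner{\vw_l^{(t)},\vp})\le\tfrac{4\sqrt{2}\eta}{n_i}$ per neuron, so the stepsize assumption $\eta\le\tfrac{1}{16\sqrt{2}k}$ makes the total deficit at most $\tfrac{1}{4n_i}$. Combining, whenever $g_{t-1}(\vp)\in(-1,1)$,
\[
P_t(\vp)\widetilde{\gamma}(\vp)\nu_j \;\ge\; \eta t\tfrac{\epsilon\Delta}{2\sqrt{2}n_i} - \tfrac{\sqrt{2}}{4} - \tfrac{1}{4n_i},
\]
which for $t>T_0$ exceeds $\tfrac{3}{2}-\tfrac{\sqrt{2}}{4}-\tfrac{1}{4n_i}>1$. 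Hence saturation must occur at some time $\tau\le T_0$; moreover, because the right-hand side already exceeds $-1$ after a single step, the wrong sign is ruled out, so $g_\tau(\vp)=\widetilde{\gamma}(\vp)\nu_j$.

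The main obstacle is persistence: once saturation is attained, we need $g_t(\vp)=\widetilde{\gamma}(\vp)\nu_j$ for every subsequent $t$. The key observation is that when $g_t(\vp)=\widetilde{\gamma}(\vp)\nu_j$, the hard-tanh derivative vanishes at the pre-activation, so any example with input pattern $\vp$ produces no gradient at any neuron. Furthermore, any neuron currently active on $\vp$ has $\inner{\vw_l^{(t)},-\vp}<0$, placing $-\vp$ in the inactive ReLU halfspace and killing the gradient from $-\vp$ as well, while the two orthogonal patterns satisfy $\inner{\vp',\vp}=0$ and do not move the projection on $\vp$. Thus every currently active neuron's projection on $\vp$ is frozen, and the only way $P_t(\vp)$ can drift is through previously inactive neurons crossing the activation boundary. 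Each such crossing is capped by the $A_t$ bound at $\tfrac{4\sqrt{2}\eta}{n_i}$, and each of the $k$ neurons can flip at most once (joining the frozen set afterwards), so the lifetime drift of $P_t(\vp)\widetilde{\gamma}(\vp)\nu_j$ during the saturated regime is at most $\tfrac{4\sqrt{2}\eta k}{n_i}\le\tfrac{1}{4n_i}$. Combined with the cushion of roughly $\tfrac{1}{2}-\tfrac{\sqrt{2}}{4}-\tfrac{1}{4n_i}$ carried into saturation time $\tau$ by the growth bound, this is enough to keep $P_t(\vp)\widetilde{\gamma}(\vp)\nu_j\ge 1$ for all $t\ge\tau$, closing an induction that yields the lemma.
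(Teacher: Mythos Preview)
Your decomposition of $P_t(\vp)$ into (i) the good neuron from \lemref{lem:good_neuron}, (ii) active neurons with $(l,\vp)\notin A_t$ controlled via \crlref{crl:neutral_neuron}, (iii) neurons in $A_t$ controlled by the defining inequality, and (iv) inactive neurons, is exactly the paper's argument, and your growth bound (valid while $g_{t-1}(\vp)\in(-1,1)$) matches the paper's $\widetilde{\gamma}(\vp)\nu_j\,g_t(\vp)\ge \eta t\frac{\epsilon\Delta}{2\sqrt{2}n_i}-\tfrac12$ up to constants. The paper in fact stops there: from this inequality it simply concludes that for $t>T_0$ the gate is saturated in the correct direction, without any explicit persistence discussion.

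Your added persistence argument, however, does not close. The ``cushion of roughly $\tfrac12-\tfrac{\sqrt2}{4}-\tfrac{1}{4n_i}$ carried into saturation time $\tau$'' is not justified: the growth bound at $\tau$ only gives $P_\tau(\vp)\widetilde{\gamma}(\vp)\nu_j\ge \eta\tau\tfrac{\epsilon\Delta}{2\sqrt2 n_i}-\text{const}$, which for small $\tau$ can be negative; all you actually know at the first saturation time is that the pre-activation has just crossed $1$, so the guaranteed cushion above $1$ is essentially zero (indeed, the one-step change in $P_t$ is at most $k\cdot\tfrac{4\sqrt2\eta}{n_i}$, so $P_\tau<1+\tfrac14$). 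More seriously, the claim ``each of the $k$ neurons can flip at most once (joining the frozen set afterwards)'' is circular: a newly activated neuron is frozen \emph{only while the gate remains saturated}, which is precisely what you are trying to prove. If several inactive neurons with $v_l\widetilde{\gamma}(\vp)\nu_j<0$ cross simultaneously --- this can happen when $\widetilde{\gamma}(\vp)=\widetilde{\gamma}(-\vp)$, $-\vp\in\mathcal{P}$, and $g_t(-\vp)$ is still in its linear region --- the pre-activation can drop below $1$ at the very next step, the gate desaturates, and your ``flip once'' bookkeeping collapses. So while you correctly identify persistence as the delicate point the paper skips, the induction you propose does not actually close with the stated estimates.
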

\begin{proof}
Using \lemref{lem:good_neuron}, w.p at least $1-2\delta$, for every such $\vp$ there exists
$l_\vp \in [k]$ such that for every $t$ with $g_{t-1}(\vp) \in (-1,1)$:
\[
v_{l_\vp}\sigma(\inner{\vw_{l_\vp}^{(t)}, \vp}) \cdot \widetilde{\gamma}(\vp) \nu_j \ge \eta t \frac{\epsilon}{2\sqrt{2}n_i} \Delta
\]
Assume this holds, and fix some $\vp \in \mathcal{P}$.
Let $t$, such that $g_{t-1}(\vp) \in (-1,1)$.
Denote the set of indexes $J = \{l ~:~ \inner{\vw_l^{(t)}, \vp} > 0\}$.
We have the following:
\begin{align*}
g_t(\vp) &= \sum_{l \in J} v_l \sigma(\inner{\vw_l^{(t)}, \vp}) \\
&= v_{l_{\vp}}\sigma(\inner{\vw_{l_\vp}^{(t)}, \vp}) 
+ \sum_{l \in J \setminus \{l_\vp\}, (l, \vp) \notin A_t} v_l \sigma(\inner{\vw_l^{(t)}, \vp}) 
+ \sum_{l \in J \setminus \{l_\vp\}, (l, \vp) \in A_t} v_l \sigma(\inner{\vw_l^{(t)}, \vp})\\
\end{align*}
From \crlref{crl:neutral_neuron} we have:
\begin{align*}
\widetilde{\gamma}(\vp) \nu_j \cdot \sum_{l \in J \setminus \{l_\vp\}, (l, \vp) \notin A_t} v_l \sigma(\inner{\vw_l^{(t)}, \vp}) &\ge 
-k \sigma(\inner{\vw_l^{(0)}, \vp})
\ge -\frac{1}{4}
\end{align*}
By definition of $A_t$ and by our assumption on $\eta$ we have:
\begin{align*}
\widetilde{\gamma}(\vp) \nu_j \cdot \sum_{l \in J \setminus \{l_\vp\}, (l, \vp) \in A_t} v_l \sigma(\inner{\vw_l^{(t)}, \vp}) \ge -k \frac{4\sqrt{2} \eta}{n_i}
\ge - \frac{1}{4}
\end{align*}
Therefore, we get:
\[
\widetilde{\gamma}(\vp) \nu_j \cdot g_t(\vp)
\ge \eta t \frac{\epsilon}{2\sqrt{2}n_i} \Delta - \frac{1}{2}
\]
This shows that for $t > \frac{6n_i}{ \sqrt{2} \eta \epsilon \Delta}$ we get the required.
\end{proof}

\begin{proof} of \lemref{lem:single_block}.
Using the result of \lemref{lem:single_gate},
with union bound over all choices of $j \in [n_i/2]$. The required follows by the definition
of $\widetilde{\gamma}(x_{2j-1}, x_{2j}) = \gamma_{i-1,j}(\xi_{2j-1} x_{2j-1}, 
\xi_{2j} x_{2j})$.
\end{proof}

\section{Proofs of Section \ref{sec:main_results}}
\begin{proof}of \lemref{lem:prod_dist}.
Property \ref{prp:correlation} is immediate from assumption \ref{asm:correlation_basic}.
For property \ref{prp:cond_indep}, fix some $i \in [d], j \in [n_i/2], \vp \in \{\pm 1\}^2, y' \in \{\pm 1\}$, such that:
\[\prob{(\vx, y) \sim \D{i}}{\gamma_{i-1,j}(x_{2j-1}, x_{2j}) = \gamma_{i-1,j}(\vp)} > 0
\]
Assume w.l.o.g. that $j = 1$. Denote by $W$ the set of all possible choices for $x_3, \dots, x_{n_i}$, such that when $(x_1, x_2) = \vp$, the resulting label is $y'$. Formally:
\[
W := \left\lbrace (x_3, \dots, x_{n_i}) ~:~
\Gamma_{i \dots d}(p_1, p_2, x_3, \dots, x_{n_i}) = y'\right\rbrace
\]
Then we get:
\begin{align*}
&\prob{\D{i}}{(x_1, x_2) = \vp, y = y', \gamma_{i-1,j}(x_1, x_2) = \gamma_{i-1,j}(\vp)} \\
&= \prob{\D{i}}{(x_1, x_2) = \vp, (x_3, \dots, x_{n_i}) \in W, \gamma_{i-1,j}(x_1, x_2) = \gamma_{i-1,j}(\vp)} \\
&= \prob{\D{i}}{(x_1, x_2) = \vp,\gamma_{i-1,j}(x_1, x_2) = \gamma_{i-1,j}(\vp)} \cdot \prob{\D{i}}{(x_3, \dots, x_{n_i}) \in W} \\
&= \prob{\D{i}}{(x_1, x_2) = \vp | \gamma_{i-1,j}(x_1, x_2) = \gamma_{i-1,j}(\vp)} \cdot \prob{\D{i}}{\gamma_{i-1,j}(x_1, x_2) = \gamma_{i-1,j}(\vp),(x_3, \dots, x_{n_i}) \in W} \\
&= \prob{\D{i}}{(x_1, x_2) = \vp | \gamma_{i-1,j}(x_1, x_2) = \gamma_{i-1,j}(\vp)} \cdot \prob{\D{i}}{y=y',\gamma_{i-1,j}(x_1, x_2) = \gamma_{i-1,j}(\vp)} \\
\end{align*}
And dividing by $\prob{\D{i}}{\gamma_{i-1,j}(x_1, x_2) = \gamma_{i-1,j}(\vp)}$ gives the required.

For property \ref{prp:cond_bound}, we observe two cases.
If $c_{i,j} \ge 0$ then:
\begin{align*}
\Delta \le c_{i,j}- \mean{}{y} &= \mean{}{x_j y - y} = \mean{}{y(x_j - 1)} \\
&= 2\prob{}{x_j = -1 \wedge y = -1}- 2\prob{}{x_j = -1 \wedge y = 1} \\
&\le 2 \prob{}{x_j = -1 \wedge y = -1}
\le 2 \prob{}{x_j = -1}
\end{align*}
Otherwise, if $c_{i,j} < 0$ we have:
\begin{align*}
\Delta \le -c_{i,j}- \mean{}{y} &= \mean{}{-x_j y - y} = -\mean{}{y(x_j + 1)} \\
&= 2\prob{}{x_j = 1 \wedge y = -1}- 2\prob{}{x_j = 1 \wedge y = 1} \\
&\le 2 \prob{}{x_j = 1 \wedge y = -1}
\le 2 \prob{}{x_j = 1}
\end{align*}
So, in any case $\prob{}{x_j = 1} \in (\frac{\Delta}{2}, 1-\frac{\Delta}{2})$, and since every bit in every layer is independent, we get property \ref{prp:cond_bound} holds with $\epsilon = \frac{\Delta^2}{4}$.
\end{proof}

\section{Proofs of Section \ref{sec:distributions}}
\begin{proof} of \lemref{lem:parity_expressivity}.

For every gate $(i,j)$, let $J_{i,j}$ be the subset of leaves in the binary tree whose root is the node $(i,j)$. Namely, $J_{i,j} := \{(j-1) 2^{d-i}+1, \dots, j2^{d-i}\}$. Then we show inductively that for an input $\vx \in \{\pm 1\}^n$, the $(i,j)$ gate outputs: $\prod_{l \in I \cap J_{i,j}} x_l$:
\begin{itemize}
\item For $i = d-1$, this is immediate from the definition of the gate $\gamma_{d-1, j}$.
\item Assume the above is true for some $i$ and we will show this for $i-1$. By definition of the circuit, the output of the $(i-1,j)$ gate is a product of the output of its inputs from the previous layers, the gates $(i,2j-1)$, $(i, 2j)$. By the inductive assumption, we get that the output of the $(i-1,j)$ gate is therefore:
\[
\left( \prod_{l \in J_{i,2j-1} \cap I} x_l \right) \cdot \left( \prod_{l \in J_{i,2j} \cap I} x_l \right) = \prod_{l \in (J_{i,j2-1} \cup J_{i,2j}) \cap I} x_l
= \prod_{l \in J_{i-1,j}} x_l
\]
\end{itemize}
From the above, the output of the target circuit is $\prod_{l \in J_{0,1} \cap I} x_l = \prod_{l \in I} x_l$, as required.
\end{proof}

\begin{proof} of \lemref{lem:parity_product_distribution}.

By definition we have:
\[
c_{i,j} = \mean{(\vx, y) \sim \mathcal{D}}{\Gamma_{(i+1) \dots d}(\vx)_j y} = \mean{(\vx, y) \sim \mathcal{D}}{\Gamma_{(i+1) \dots d}(\vx)_j y}
= \mean{(\vx, y) \sim \mathcal{D}}{\Gamma_{(i+1) \dots d}(\vx)_jx_1 \cdots x_k}
\]
Since we require $\mathcal{I}_{i,j} \ne 0$, then we cannot have 
$\Gamma_{(i+1) \dots d}(\vx)_{j} \equiv 1$.
So, from what we showed previously, it follows that $\Gamma_{(i+1) \dots d}(\vx)_{j} = \prod_{j' \in I'} x_{j'}$ for some $\emptyset \ne I' \subseteq I$.
Therefore, we get that: 
\[
c_{i,j} = \mean{\mathcal{D}}{\prod_{j' \in I \setminus I'} x_{j'}} = \prod_{j' \in I \setminus I'} \mean{\mathcal{D}}{ x_{j'}}
= \prod_{j' \in I \setminus I'} (2p_{j'}-1)
\]
Furthermore, we have that:
\[
\mean{\mathcal{D}}{y} = \mean{\mathcal{D}}{\prod_{j' \in I} x_{j'}}
= \prod_{j' \in I} \mean{\mathcal{D}}{x_{j'}}
= \prod_{j' \in I} (2p_{j'}-1)
\]
And using the assumption on $p_j$ we get:
\begin{align*}
\abs{c_{i,j}} - \abs{\mean{\mathcal{D}}{y}}
&= \prod_{j' \in [k] \setminus I'} \abs{2p_{j'}-1}
-\prod_{j' \in [k]} \abs{2p_{j'}-1} \\
&= \left(\prod_{j' \in [k] \setminus I'} \abs{2p_{j'}-1} \right) \left(1 - \prod_{j' \in I'} \abs{2p_{j'}-1}\right) \\
&\ge \left(\prod_{j' \in [k] \setminus I'} \abs{2p_{j'}-1} \right) \left(1 - (1-2\xi)^{\abs{I'}}\right) \\
&\ge (2\xi)^{k-\abs{I'}} \left( 1- (1-2 \xi) \right) \ge (2\xi)^k \\
\end{align*}

Now, for the second result, we have:
\begin{align*}
\prob{(\vz,y) \sim \Gamma_{i \dots d} (\mathcal{D})}{z_j = 1} 
&= \mean{(\vx, y) \sim \mathcal{D}}{\1\{\Gamma_{(i+1) \dots d}(\vx)_j=1\}} \\
&= \mean{(\vx, y) \sim \mathcal{D}}{\frac{1}{2} (\prod_{j' \in I'} x_{j'} + 1)}\\
&= \frac{1}{2}\prod_{j' \in I'} \mean{(\vx, y) \sim \mathcal{D}}{x_{j'}} + \frac{1}{2}
\end{align*}
And so we get:
\begin{align*}
\abs{\prob{(\vz,y) \sim \Gamma_{i \dots d} (\mathcal{D})}{z_j = 1}  - \frac{1}{2}}
&= \frac{1}{2}\prod_{j' \in I'} \abs{\mean{(\vx, y) \sim \mathcal{D}}{x_{j'}}} \\
&< \frac{1}{2} (1-2\xi)^{\abs{I'}} \le \frac{1}{2} - \xi
\end{align*}
\end{proof}

\begin{proof} of \lemref{lem:correlation}
For every $i \in [d]$ and $j \in [2^i]$, denote the following:
\[
p^{+}_{i,j} = \prob{(\vx,y) \sim \D{i}}{x_j = 1 | y = 1},
~p^{-}_{i,j} = \prob{(\vx,y) \sim \D{i}}{x_j = 1 | y = -1}
\]
Denote $\mathcal{D}^{(i)}|_{\vz}$ the distribution
$\mathcal{D}^{(i)}$ conditioned on some fixed value $\vz$ sampled from $\mathcal{D}^{(i-1)}$.
We prove by induction on $i$ that $|p_{i,j}^{+} - p_{i,j}^{-}| = \left( \frac{2}{3} \right)^i$:
\begin{itemize}
\item For $i = 0$ we have $p^{+}_{i,j} = 1$ and $p^{-}_{i,j} = 0$, so the required holds.
\item Assume the claim is true for $i-1$, and notice that we have for every
$\vz \in \{\pm 1\}^{2^{i-1}}$:
\begin{align*}
\prob{(\vx,y) \sim \D{i}}{x_j = 1 | y=1}
&= \prob{(\vx,y) \sim \D{i}|_{\vz}}{x_j = 1 | z_{\lceil j/2 \rceil} = 1} 
\cdot \prob{(\vz,y) \sim \D{i-1}}{z_{\lceil j/2 \rceil} = 1 | y=1} \\
&+ \prob{(\vx,y) \sim \D{i}|_{\vz}}{x_j = 1 | z_{\lceil j/2 \rceil} = -1} 
\cdot \prob{(\vz,y) \sim \D{i-1}}{z_{\lceil j/2 \rceil} = -1 | y=1} \\
&=\begin{cases}
p_{i-1,\lceil j/2 \rceil}^+ + \frac{1}{3}(1-p_{i-1,\lceil j/2 \rceil}^+) & if~
\gamma_{i-1,\lceil j/2 \rceil} = \wedge \\
\frac{2}{3} p_{i-1,\lceil j/2 \rceil}^+ & if~ \gamma_{i-1,\lceil j/2 \rceil} = \vee \\
\frac{1}{3}p_{i-1,\lceil j/2 \rceil}^+ + (1-p_{i-1,\lceil j/2 \rceil}^+) & if~
\gamma_{i-1,\lceil j/2 \rceil} = \neg \wedge \\
\frac{2}{3} (1-p_{i-1,\lceil j/2 \rceil}^+) & if~ \gamma_{i-1,\lceil j/2 \rceil} = \neg \vee \\
\end{cases} \\
&=\begin{cases}
\frac{2}{3}p_{i-1,\lceil j/2 \rceil}^+ - \frac{1}{3} & if~
\gamma_{i-1,\lceil j/2 \rceil} = \wedge \\
\frac{2}{3} p_{i-1,\lceil j/2 \rceil}^+ & if~ \gamma_{i-1,\lceil j/2 \rceil} = \vee \\
1-\frac{2}{3} p_{i-1,\lceil j/2 \rceil}^+ & if~
\gamma_{i-1,\lceil j/2 \rceil} = \neg \wedge \\
\frac{2}{3}-\frac{2}{3}p_{i-1,\lceil j/2 \rceil}^+ & if~ \gamma_{i-1,\lceil j/2 \rceil} = \neg \vee \\
\end{cases} \\
\end{align*}
Similarly, we get that:
\[
\prob{(\vx,y) \sim \D{i}}{x_j = 1 | y=-1}
=\begin{cases}
\frac{2}{3}p_{i-1,\lceil j/2 \rceil}^- - \frac{1}{3} & if~
\gamma_{i-1,\lceil j/2 \rceil} = \wedge \\
\frac{2}{3} p_{i-1,\lceil j/2 \rceil}^- & if~ \gamma_{i-1,\lceil j/2 \rceil} = \vee \\
1-\frac{2}{3} p_{i-1,\lceil j/2 \rceil}^- & if~
\gamma_{i-1,\lceil j/2 \rceil} = \neg \wedge \\
\frac{2}{3}-\frac{2}{3}p_{i-1,\lceil j/2 \rceil}^- & if~ \gamma_{i-1,\lceil j/2 \rceil} = \neg \vee \\
\end{cases}
\]
Therefore, we get:
\[
|p_{i,j}^+-p_{i,j}^-| =
\frac{2}{3}|p_{i-1,\lceil j/2 \rceil}^+ - p_{i-1,\lceil j/2 \rceil}^-|
= \left(\frac{2}{3} \right)^i
\]
\end{itemize}

From this, we get:
\begin{align*}
\abs{\mean{(\vx,y) \sim \D{i}}{x_j y}}
&= \abs{\mean{(\vx,y) \sim \D{i}}{(2\1\{x_j = 1\}-1) y}} \\
&= \abs{2\mean{(\vx,y) \sim \D{i}}{\1\{x_j = 1\}y}-\mean{}{y}} \\
&= \abs{2\left( \prob{\D{i}}{x_j = 1, y= 1}- \prob{\D{i}}{x_j = 1, y= -1}\right)} \\
&= \abs{2\left( p_{i,j}^+ \prob{}{y=1}- p_{i,j}^- \prob{}{y=-1}\right)} \\
&= \abs{p_{i,j}^+ -p_{i,j}^-} = \left( \frac{2}{3} \right)^d
\end{align*}
And hence:
\begin{align*}
\abs{\mean{(\vx,y) \sim \D{i}}{x_j y}} - \abs{\mean{(\vx,y) \sim \D{i}}{y}}
\ge \left( \frac{2}{3} \right)^d
\end{align*}
\end{proof}

\begin{proof} of \lemref{lem:gate_distribution}
Fix some $\vz' \in \{\pm 1\}^{n_i/2}$ and $y' \in \{\pm 1\}$. Then we have:
\begin{align*}
\prob{(\vx,y) \sim \Gamma_i(\D{i})}{(\vx,y) = (\vz', y')}
&= \prob{(\vx,y) \sim \D{i}}{(\Gamma_i(\vx),y) = (\vz', y')} \\
&= \prob{(\vx,y) \sim \D{i}}{\forall j~ \gamma_{i-1,j}(x_{2j-1}, x_{2j}) = z'_j~and~y = y'} \\
&= \prob{(\vz,y) \sim \D{i-1}}{(\vz,y) = (\vz', y')} \\
\end{align*}
By the definitions of $\D{i}$ and $\D{i-1}$.
\end{proof}

\section{Proofs of Section \ref{sec:depth_separation}}
We start by showing that a depth-two network with bounded integer weights has sign-rank that grows only polynomially in the input dimension:
\begin{lemma}
\label{lem:integer_weights}
Let $g(\vx,\vy) = \sign \left( \sum_{i=1}^k u_i \sigma(\inner{\vw_i,\vx} + \inner{\vv_i, \vy} + b_i) \right)$ for some activation function $\sigma$, some integer weights $\vw_i, \vv_i \in [-B,B]^{n'}$ and some bounded integer bias $b_i \in [-B,B]$.
Then $\mM = [g(x,y)]_{x,y}$ has $sr(\mM) \le 4Bkn'$.
\end{lemma}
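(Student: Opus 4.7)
The plan is to show that the real-valued matrix $\mN = \bigl[\sum_{i=1}^k u_i \sigma(\inner{\vw_i,\vx} + \inner{\vv_i, \vy} + b_i)\bigr]_{\vx,\vy}$ already has rank bounded by $4Bkn'$; since $\sign(\mN) = \mM$, the sign-rank bound follows immediately from the definition.

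The key observation I would use is that with $\vx, \vy \in \{\pm 1\}^{n'}$ and integer weights $\vw_i, \vv_i \in [-B,B]^{n'}$, the inner product $\inner{\vw_i, \vx}$ is an integer taking at most $2Bn'+1$ distinct values in the range $[-Bn', Bn']$. Therefore, for each neuron $i$, I can decompose
\[
\sigma(\inner{\vw_i,\vx} + \inner{\vv_i, \vy} + b_i) = \sum_{s=-Bn'}^{Bn'} \1\{\inner{\vw_i,\vx} = s\} \cdot \sigma(s + \inner{\vv_i, \vy} + b_i).
\]
The first factor depends only on $\vx$, the second only on $\vy$, so the matrix corresponding to the $i$-th neuron is a sum of $2Bn'+1$ rank-one matrices, hence has rank at most $2Bn'+1$.

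Summing the $k$ neuron matrices (weighted by the $u_i$), the overall matrix $\mN$ has rank at most $k(2Bn'+1) \le 4Bkn'$ (assuming the trivial case $Bn' \ge 1$; otherwise the statement is vacuous). Since $\mM_{\vx,\vy} = \sign \mN_{\vx,\vy}$ everywhere by definition of $g$, we conclude $sr(\mM) \le \mathrm{rank}(\mN) \le 4Bkn'$.

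I don't foresee any real obstacle here: the argument is purely combinatorial and does not require anything specific about $\sigma$ beyond the ability to evaluate it (the discretization trick works for any activation). The only subtlety worth flagging is the bookkeeping on the constants ($2Bn'+1$ vs.\ $4Bn'$), which is straightforward once one assumes the nondegenerate case.
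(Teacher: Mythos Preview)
Your proposal is correct and follows essentially the same approach as the paper: both arguments bound $\mathrm{rank}(\mM_i)$ for each neuron by observing that $\inner{\vw_i,\vx}$ takes at most $2Bn'+1$ integer values, so $\mM_i$ has at most that many distinct rows, and then use subadditivity of rank over the $k$ neurons. Your indicator-sum decomposition is just an explicit way of writing the paper's observation that $\mM_i$ is block-constant after reordering rows and columns by the values of $\inner{\vw_i,\vx}$ and $\inner{\vv_i,\vy}$.
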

\begin{proof} of \lemref{lem:integer_weights}
Denote $g_i(\vx,\vy) = \sigma(\inner{\vw_i, \vx} + \inner{\vv_i,\vy})$, and let $\mM_i = [g_i(\vx,\vy)]_{\vx,\vy}$. Notice that $\inner{\vw_i, \vx} \in [-Bn', Bn'] \cap \integers$ and also $\inner{\vv_i,\vx} \in [-Bn', Bn'] \cap \integers$. So, we can order the rows in $\mM_i$ by the value of $\inner{\vw_i, \vx}$, and the columns of $\mM_i$ by the value of $\inner{\vv_i,\vx}$, and we get that the new matrix is composed of constant sub-matrices, each one taking values in $[-2Bn', 2Bn'] \cap \integers$, so we have $\rank(\mM_i) \le 4Bn'$. Therefore, from sub-additivity of the rank:
\[
\rank(\sum_i u_i \mM_i) \le \sum_i \rank(\mM_i) \le 4Bkn'
\]
And by definition of the sign-rank we get:
\[
sr(\mM) \le \rank(\sum_i u_i \mM_i) \le 4Bkn'
\]
\end{proof}

Using this, we can show that any depth-two network that agrees with $f_m$ with some margin, has exponential size:
\begin{proof} of \thmref{thm:depth_separation}
Fix some $N \in \naturals$. Let $\hat{\vw}_i = \lceil \vw_i \cdot N \rceil, \hat{\vv}_i = \lceil N \cdot \vv_i \rceil, \hat{b}_i = \lceil b_i \cdot N \rceil$, with $\lceil \cdot \rceil$ taken coordinate-wise. Then, we have: $\norm{\frac{1}{N}\hat{\vw}_i - \vw_i}_\infty, \norm{\frac{1}{N} \hat{\vv}_i -\vv_i}_\infty, \norm{\frac{1}{N} \hat{b}_i - b_i}_\infty \le \frac{1}{N}$. Denote:
\[
\hat{g}(\vx,\vy) = \sum_{i=1}^k \frac{u_i}{N} \sigma(\inner{\hat{\vw}_i, \vx} + \inner{\hat{\vv}_i, \vy} + \hat{b}_i)
\]
Then, for every $\vx,\vy \in \mathcal{X}'$ we have:
\begin{align*}
\abs{\hat{g}(\vx,\vy) - g(\vx,\vy)}
&= \abs{\sum_{i=1}^k \frac{u_i}{N} \sigma(\inner{\hat{\vw}_i, \vx} + \inner{\hat{\vv}_i, \vy} + \hat{b}_i) - \sum_{i=1}^k u_i \sigma(\inner{\vw_i, \vx} + \inner{\vv_i, \vy} + b_i)} \\
&\le \sum_{i=1}^k \abs{u_i} \abs{\sigma(\frac{1}{N}\inner{\hat{\vw}_i, \vx} + \frac{1}{N}\inner{\hat{\vv}_i, \vy} + \frac{1}{N}\hat{b}_i) - \sigma(\inner{\vw_i, \vx} + \inner{\vv_i, \vy} + b_i)} \\
&\le B \sum_{i=1}^k \abs{\inner{\frac{1}{N}\hat{\vw}_i - \vw_i, \vx}}
+ \abs{\inner{\frac{1}{N}\hat{\vv}_i - \vv_i, \vx}} + \abs{\frac{1}{N} \hat{b}_i - b_i} \\
&\le B \sum_{i=1}^k \norm{\frac{1}{N}\hat{\vw}_i - \vw_i}_2 \norm{\vx}_2
+ \norm{\frac{1}{N}\hat{\vv}_i - \vv_i}_2 \norm{\vx}_2 + \abs{\frac{1}{N} \hat{b}_i - b_i} \\
&\le Bk \frac{2n'+1}{N}
\end{align*}
Then, if $N = \left\lceil\frac{1}{\gamma}2Bk(2n'+1) \right\rceil$ we get that $\abs{\hat{g}(\vx,\vy)-g(\vx,\vy)} \le \frac{\gamma}{2}$. From our assumption on $g$, we get that $\sign\hat{g}(\vx,\vy) = \sign g(\vx,\vy) = f_m(\vx,\vy)$. Now, since $\hat{g}$ has integer weights and biases bounded by $N \cdot B$ in the first layer, from Lemma \ref{lem:integer_weights} we get that for $\mM = [\sign \hat{g}(x,y)]_{x,y} = [f_m(x,y)]_{x,y}$
we have $sr(\hat{\mM}) \le 4 Bkn' \lceil \frac{1}{\gamma} 2Bk(2n'+1) \rceil \le 24 B^2k^2 {(n')}^2$. From Theorem \ref{thm:sign_rank_ac0} we have $sr(M) = 2^{\Omega(m)}$. Recall that $n' = m^3$, and so we get that $k \ge 2^{\Omega(m)}$.
\end{proof}

\end{document}